\documentclass[twoside,11pt]{article}

\usepackage{color}
\usepackage{jmlr2e}
\usepackage{graphics}
\usepackage{epsfig}
\usepackage[ruled]{algorithm2e}
\usepackage{booktabs}
\usepackage{url}
\usepackage{psfrag}
\usepackage{relsize}
\usepackage{amsmath,amsfonts,mathrsfs,mathtools,amssymb}
\usepackage[colorlinks,linkcolor=blue,anchorcolor=green,citecolor=green]{hyperref}
\usepackage{tikz,pgfplots}
\usepackage{tkz-tab}
\usepackage[format=hang,justification=justified,singlelinecheck=false,font=footnotesize]{caption}
\usepackage[font=footnotesize]{subcaption}

\usepackage{enumitem} 
\usepackage{enumerate}

\usepackage[T1]{fontenc}
\usepackage[utf8]{inputenc}
\usepackage[font=small,labelfont=bf,tableposition=top]{caption}
\usepackage{booktabs}
\usepackage{threeparttable}

\usepackage{multirow}

\newcommand{\eins}{\boldsymbol{1}}

\DeclareSymbolFont{wideparensymbol}{OMX}{yhex}{m}{n}
\DeclareMathAccent{\wideparen}{\mathord}{wideparensymbol}{"F3} 

\newcommand{\argmin}{\operatornamewithlimits{arg \, min}}

\jmlrheading{}{}{~}{~}{~}{2018 Hanyuan Hang, Yingyi Chen and Johan A.K. Suykens}

\usepackage{tikz,pgfplots}
  \pgfplotsset{compat=newest}
  \pgfplotsset{plot coordinates/math parser=false,trim axis left}
     \newlength\figureheight
     \newlength\figurewidth

\begin{document}

\title{Two-stage Best-scored Random Forest for \\ Large-scale Regression}

\author{\name Hanyuan Hang \email hans2017@ruc.edu.cn \\
	\name Yingyi Chen \email chenyingyi@ruc.edu.cn \\
	\addr Institute of Statistics and Big Data \\  
	Renmin University of China \\
	100872 Beijing, China \\   
	      \\     
    \name Johan A.K. Suykens \email johan.suykens@esat.kuleuven.be\\
    \addr Department of Electrical Engineering, ESAT-STADIUS, KU Leuven\\
    Kasteelpark Arenberg 10, Leuven, B-3001, Belgium}


\maketitle



\begin{abstract}We propose a novel method designed for large-scale regression problems, namely the two-stage best-scored random forest (TBRF). \textit{Best-scored} means to select one regression tree with the best empirical performance out of a certain number of purely random regression tree candidates, and \textit{two-stage} means to divide the original random tree splitting procedure into two: In stage one, the feature space is partitioned into non-overlapping cells; in stage two, child trees grow separately on these cells. The strengths of this algorithm can be summarized as follows: First of all, the pure randomness in TBRF leads to the almost optimal learning rates, and also makes ensemble learning possible, which resolves the boundary discontinuities long plaguing the existing algorithms. Secondly, the two-stage procedure paves the way for parallel computing, leading to computational efficiency. Last but not least, TBRF can serve as an inclusive framework where different mainstream regression strategies such as linear predictor and least squares support vector machines (LS-SVMs) can also be incorporated as value assignment approaches on leaves of the child trees, depending on the characteristics of the underlying data sets. Numerical assessments on comparisons with other state-of-the-art methods on several large-scale real data sets validate the promising prediction accuracy and high computational efficiency of our algorithm.
\end{abstract}

\begin{keywords}
large-scale regression, purely random tree, random forest, ensemble learning, regularized empirical risk minimization, learning theory
\end{keywords}

\section{Introduction} \label{sec::Introduction}
The ever-increasing scale of modern scientific and technological data sets raises urgent requirements for learning algorithms that not only maintain desirable prediction accuracy but also have high computational efficiency \citep{wen2018thundersvm, guo2018efficient, thomann2017spatial, hsieh2014divide}. However, a major challenge is that the data analysis and learning algorithms suitable for modest-sized data sets often encounter difficulties or are even infeasible to tackle large-volume data sets, which leads to the current popular research direction named large-scale regression \citep{collobert2001svmtorch, raskutti2016statistical}. In the literature, efforts have been made to conquer the large-scale regression problems and each method has its own merits in its own regimes. Typically, the mainstream solutions come in two flavors, which are \emph{horizontal methods} and \emph{vertical methods}. The essence of the \emph{horizontal methods}, also called distributed learning, is to partition the data set into data subsets, store them on multiple machines and allow these machines to train in parallel with each machine processing its local data to give local predictors. Then the local predictors are synthesized to give a final predictor \citep{zhang2013divide, zhang15a, lin17a, guo17a}. Nevertheless, horizontal methods have to face their own problems. 
Specifically, what is originally needed is a global predictor defined on the whole feature space which should be trained based on all the training data via the chosen regression algorithm. 
However, the local predictors also defined on the whole feature space are actually trained based only on the information provided by the data subsets. In this manner, chances are high that each local predictor may be very different from the desired global predictor, let alone the synthesized final predictor.

The other category of methods to resolve large-scale regression problems is the \emph{vertical methods}. Its main idea is to first partition the whole feature space (i.e.~the input domain) into multiple non-overlapping cells where different partition methods \citep{suykens2002least, EsSuMo06a, bennett98a, wu99a, chang10a} can be employed. Then, for each of the resulting cells, a predictor is training based on samples falling into that cell via regression strategies such as Gaussian process regression \citep{park11a, park16a, park2018patchwork}, support vector machines \citep{meister16a,thomann2017spatial}, etc.
However, the long-standing boundary discontinuities have always been a headache for vertical methods for degrading the regression accuracy, and literature has committed to settling this problem. 
For example, \cite{park11a} first applies Gaussian process regression to each cell of the decomposed domain with equal boundary constraints merely at a finite number of locations.
After finding that this method cannot essentially solve the boundary discontinuities, they propose a solution specially for this issue in \cite{park16a} which constraints the predictions of local regressions to share a common boundary. 
To further mitigate the boundary discontinuities, recently, \cite{park2018patchwork} proposes Patch Kriging (PK) which improves previous work with the help of adding additional pseudo-observations to the boundaries.
However, boundaries where two adjacent Gaussian processes are joined up are artificially chosen, which may have a great impact on the final predictor. 
Moreover, their approach is fundamentally different from the original Gaussian process which is a global method with respect to algorithm structure.
Additionally, their method may not be that appropriate for parallel computing.
Another vertical method called the Voronoi partition support vector machine (VP-SVM) \citep{meister16a} is available for parallel computing, while boundary discontinuities are not demonstrably solved. Besides, their method also no longer shares the same spirit as the original global algorithm LS-SVMs \citep{suykens2002least}.
To the best of our knowledge, up till now, there is no such algorithm that not only overcomes the boundary discontinuities problems that long plague the vertical methods, but also takes full advantage of the huge parallel computing resources brought by the big data era to obtain results both efficient and effective.

Aiming at solving these tough problems, in this paper, we propose a novel vertical algorithm named the \emph{two-stage best-scored random forest}, which is an exact fit for solving large-scale regression problems. To be specific, in stage one, the feature space is partitioned following an adaptive random splitting criterion into a number of cells, which paves the way for parallel computing. In stage two, splits are continuously conducted on each cell separately following a purely random splitting criterion. Due to the inherent randomness of this splitting criterion, for each cell, we are able to establish different regression trees under different partitions, and then pick up the one with the best empirical performance to be the child best-scored random tree of that cell. Accordingly, we name this selection strategy the ``best-scored'' method. Subsequently, the concatenation of child best-scored random trees from all cells forms a parent best-scored random tree. By following the above construction procedure repeatedly, we are able to establish a number of different parent best-scored random trees whose ensemble is just the two-stage best-scored random forest. The prominent strengths of our algorithm over other vertical methods can be demonstrated from the following perspectives:

\textit{(i)} In most of the existing vertical methods, the feature space is usually artificially partitioned into different non-overlapping cells and the original algorithm is then applied to each of these regions, respectively. In the original algorithm, the prediction of any point in the feature space is influenced by the information of all the sample points, whereas in the corresponding vertical methods, the prediction of any point may be only affected by the information of sample points in its belonging cell. This usually leads to an essential change of the algorithm structure and accordingly,
the global smoothness of the original method is jeopardized and only
the smoothness within each cell
can now be guaranteed, often resulting in the boundary discontinuity problem. In contrast, this is never a problem for our two-stage best-scored random forest (TBRF) method, since random forest (RF) is intrinsically an ensemble method bringing its asymptotic smoothness. As for our two-stage random forest method, we only divide the whole original splitting process of one tree into two stages for the sake of parallelism. This does not change the nature of TBRF as an RF method.

\textit{(ii)} Owing to the two-stage structure of our proposed algorithm and the architecture of the random forests, the TBRF achieves satisfying performance in terms of computational efficiency and prediction accuracy, which have always had great significance in the big data era. Specifically, the computational efficiency is twofold. First of all, the algorithm can be significantly sped up by leveraging parallel computing in both stages. Considering that parent trees in the forest require different adaptive random partitions of the feature space which are conducted in stage one, we can assign each adaptive partition to a different core for acceleration. This is a direct advantage of parallelism brought by the ensemble learning resided in the random forest. Moreover, the establishment of child best-scored random trees whose total number is the total amount of cells in all parent trees, can be also assigned to different cores, so that the computational burden can be decentralized. For another, the adaptive random partition in stage one is completely data-driven and this splitting mechanism makes the number of samples falling into each cell more evenly distributed. Therefore, it increases the number of effective splits, and further reduces the training time for parallel computing. When it comes to the prediction accuracy, we manage to incorporate some existing mainstream regression algorithms as value assignment methods into our random forest architecture. In addition to only assigning a constant to each terminal node of the trees, we employ a few alternatives, such as fitting linear regression functions for low dimensional data, and utilizing a Gaussian kernel for high dimensional data, due to their different performances when encountering different dimensional data. Numerical experiments further demonstrate the effectiveness in choosing appropriate assignment strategies for different data. Moreover, the asymptotic smoothness brought by the ensemble learning and the property of having many tunable hyperparameters further contribute to the improvement of accuracy.

\textit{(iii)} The satisfactory performance of the two-stage best-scored random forest is supported by compact theoretical analysis under the framework of regularized empirical risk minimization. To be specific, by decomposing the error term into data-free and data-dependent error terms which are dealt with by techniques from approximation theory and empirical process theory, respectively, we establish the almost optimal learning rates for both parent best-scored random trees and their ensemble forest under certain mild assumptions on the smoothness of the target functions.

The paper is organized as follows: Section \ref{sec::MainAlgorithm} is dedicated to the explanation on the algorithm architecture. We present the main results and statements on the almost optimal learning rates in Section \ref{sec::MainResults} with the corresponding error analysis lucidly demonstrated in Section \ref{sec::ErrorAnalysis}. Architecture analysis and empirical assessments of comparisons between different vertical methods based on real data sets are provided in Sections \ref{sec::ArchitectureAnalysis} and \ref{sec::NumericalEvaluation}.
For the sake of clarity, all the proofs of Section \ref{sec::MainResults} and Section \ref{sec::ErrorAnalysis} are presented in Section \ref{sec::Proof}. Finally, we conclude this paper in Section \ref{sec::Conclusion}.

\section{Establishment of the Main Algorithm} \label{sec::MainAlgorithm}
In this section, we propose a new random forest method for regression which gathers the advantages of vertical methods and ensemble learning. A lucid illustration requires to break down the algorithm into four steps. 
First, we adopt an adaptive random partition method to split the feature space into several cells in stage one.
Second, by building the best-scored random tree for regression on each cell in stage two and gathering them together, we are able to obtain a parent random tree.  
Third, due to the intrinsic randomness of the partition method, we are able to establish a certain number of parent random trees under different partitions of the feature space.
Last but not least, by combining these parent random trees to form an ensemble,
we obtain the \emph{Two-stage Best-scored Random Forest}.

\subsection{Notations}
The goal in a supervised learning problem is to predict the value of an unobserved output variable $Y$ after observing the value of an input variable $X$. 
To be exact, we need to derive a predictor $f$ 
which maps the observed input value of $X$ 
to a prediction $f(X)$ of the unobserved output value of $Y$. 
The choice of predictor should be based on the training data $D := ((X_1, Y_1), \ldots, (X_n, Y_n))$ of i.i.d~observations, which are with the same distribution as the generic pair $(X, Y)$, drawn from 
an unknown probability measure $\mathrm{P}$ on $\mathcal{X} \times \mathcal{Y}$. 
We assume that $\mathcal{X} \subset \mathbb{R}^d$ is non-empty, 
$\mathcal{Y} := [-M, M]$ for some $M > 0$ and $\mathrm{P}_X$ is the marginal distribution of $X$.

According to the learning target, it is legitimate to consider the least squares loss
$L = L_{LS} : \mathcal{X} \times \mathcal{Y} \to [0, \infty)$ defined by $L (Y, f(X)) := (Y-f(X))^2$. Then, for a measurable decision function $f : \mathcal{X} \to \mathcal{Y}$, the risk is defined by
\begin{align*}
\mathcal{R}_{L, \mathrm{P}}(f) 
:= \int_{\mathcal{X} \times \mathcal{Y}} L(Y, f(X))\ d\mathrm{P}(X, Y), 
\end{align*}
and the empirical risk is defined by
\begin{align*}
\mathcal{R}_{L, \mathrm{D}}(f) 
:= \frac{1}{n} \sum_{i=1}^n L(Y_i, f(X_i)),
\end{align*}
where $\mathrm{D}:= \frac{1}{n} \sum_{i=1}^n \delta_{(X_i, Y_i)}$ is the empirical measure associated to data and $\delta_{(X_i, Y_i)}$ is the Dirac measure at $(X_i, Y_i)$. The Bayes risk which is the minimal risk with respect to $\mathrm{P}$ and $L$ can be given by
\begin{align*}
\mathcal{R}^*_{L, \mathrm{P}} 
:= \inf \{ \mathcal{R}_{L, \mathrm{P}} (f) \mid f : \mathcal{X} \to \mathcal{Y} \ \text{measuarable}\}.
\end{align*}
In addition, a measurable function $f^*_{L, \mathrm{P}} : \mathcal{X} \to \mathcal{Y}$ with $\mathcal{R}_{L, \mathrm{P}} (f^*_{L, \mathrm{P}}) = \mathcal{R}^*_{L, \mathrm{P}}$  is called a Bayes decision function. By minimizing the risk, the Bayes decision function is
\begin{align*}
f^*_{L, \mathrm{P}} = \mathbb{E}_\mathrm{P} (Y \arrowvert X)
\end{align*}
which is a $\mathrm{P}_X$-almost surely $[-M, M]$-valued function.

In order to achieve our two-stage random forest for regression, we first consider the development of the parent random forest under one specific feature space partition. 
Therefore,
we assume that $(V_j)_{ j=1, \ldots, m}$ is a partition of $\mathcal{X}$ such that none of its cells is empty, which is $V_j \neq \emptyset$ for every $j \in \{ 1, \ldots, m\}$. 
To present our approach 
in a clear and rigorous mathematical expression, there is a need for us to introduce some more definitions and notations. First of all, the index set is defined as
\begin{align*}
I_j 
:=\big\{ i \in \{1, \ldots, n\} : x_i \in V_j  \big\},
\quad \quad j = 1, \ldots, m,
\end{align*}
which indicates the samples of $D$ contained in $V_j$ and also the corresponding data set
\begin{align*}
D_j 
:= \big\{ (x_i, y_i) \in D : i \in I_j \big\}, 
\quad \quad j = 1, \ldots, m.
\end{align*}
Additionally, for every $j \in \{ 1, \ldots, m \}$,
the loss $L_j : \mathcal{X} \times \mathcal{Y} \to [0, \infty )$ on the corresponding cell $V_j$ is defined by
\begin{align*}
L_j( Y, f(X) ) := \boldsymbol{1}_{V_j} (X) L( Y, f(X) ) 
\end{align*}
where $L( Y, f(X) )$ is the least squares loss for our regression problem.

\subsection{Best-scored Random Trees } \label{LPRF}
One crucial step of the two-stage best-scored random forest algorithm is building parent best-scored random trees under certain partitions of the feature space. 
Therefore, we first focus on the development of one parent tree which is the summation of child trees.
An appropriate splitting approach of the feature space is inseparable for the tree establishment.
Therefore, we introduce a random partition method in our case.

\subsubsection{Purely Random Partition} \label{sec::RandomPartition}
Purely random forest put forward by \cite{breiman00a} is an algorithm parallel to forests based on well-known splitting criteria such as information gain \citep{quinlan1986induction}, information gain ratio \citep{quinlan1993c4.5} and Gini index \citep{breiman1984classification}. Since it is widely acknowledged that forest established by the latter three criteria are not universal consistent, while consistency can be obtained by the first one, we base our forest on this purely random splitting criterion.

A clear illustration of the splitting mechanism at the $i$-th step of one possible random tree construction requires a random vector $Q_i : = (L_i, R_i, S_i)$. The first term $L_i$ in the triplet 
denotes the leaf to be split at the $i$-th step chosen from all the leaves presented in the $(i-1)$-th step. The second term in the triplet $R_i \in \{ 1, \ldots, d\}$ represents the dimension chosen to be split from for the $L_i$ leaf. Moreover, $\{R_i, i \in \mathbb{N}_+\}$ are i.i.d.~multinomial random variables with all dimension having equal probability to be split from.
The third term $S_i$ stands for the ratio of the length in the $R_i$-th dimension of the newly generated leaf after the $i$-th split to the length in the $R_i$-th dimension of leaf $L_i$, which is a proportional factor. In this manner, the length in the $R_i$-th dimension of the newly generated leaf can be calculated by multiplying the length in the $R_i$-th dimension of leaf $L_i$ and the proportional factor $S_i$. We mention here that $\{S_i, i \in \mathbb{N}_+\}$ are independent and identically distributed from $\mathcal{U}(0, 1)$. 

To provide more insight into the above mathematical formulation of the splitting process of the purely random tree, we take the tree construction on $A = [0, 1]^d$ as a simple example, which is the same for construction on $V_j$. One specific construction procedure is shown in Figure \ref{fig:ap} where we take $d=2$. First of all, we pick up a dimension out of $d$ candidates randomly, and then split uniformly at random from that dimension. The resulting split being a $(d-1)$-dimensional hyperplane parallel to the axis partitions $A$ into two leaves, say $A_{1, 1}$ and $A_{1, 2}$. Next, a leaf is chosen uniformly at random, e.g.~$A_{1, 1}$, and we go on picking the dimension and the cut-point uniformly at random to implement the second split, which leads to a partition of $A$: $A_{2, 1}, A_{2,2}, A_{1,2}$. When conducting the third split, we still randomly select one leaf presented in the last step, e.g.~$A_{2, 2}$, and the third split is once again conducted on it as before. The resulting partition of $A$ then becomes $A_{2,1}, A_{3,1}, A_{3,2}, A_{1,2}$. This above recursive process will not stop until the number of splits $p$ reaches out satisfaction. Further scrutiny will find that the splitting procedure leads to a partition variable, namely $Z:=(Q_1, \ldots, Q_p,\ldots)$ which takes value in space $\mathcal{Z}$. From now on, $\mathrm{P}_Z$ stands for the probability measure of $Z$.

\begin{figure*}[htbp]
	\begin{minipage}[t]{0.8\textwidth}  
		\centering  
		\includegraphics[width=\textwidth]{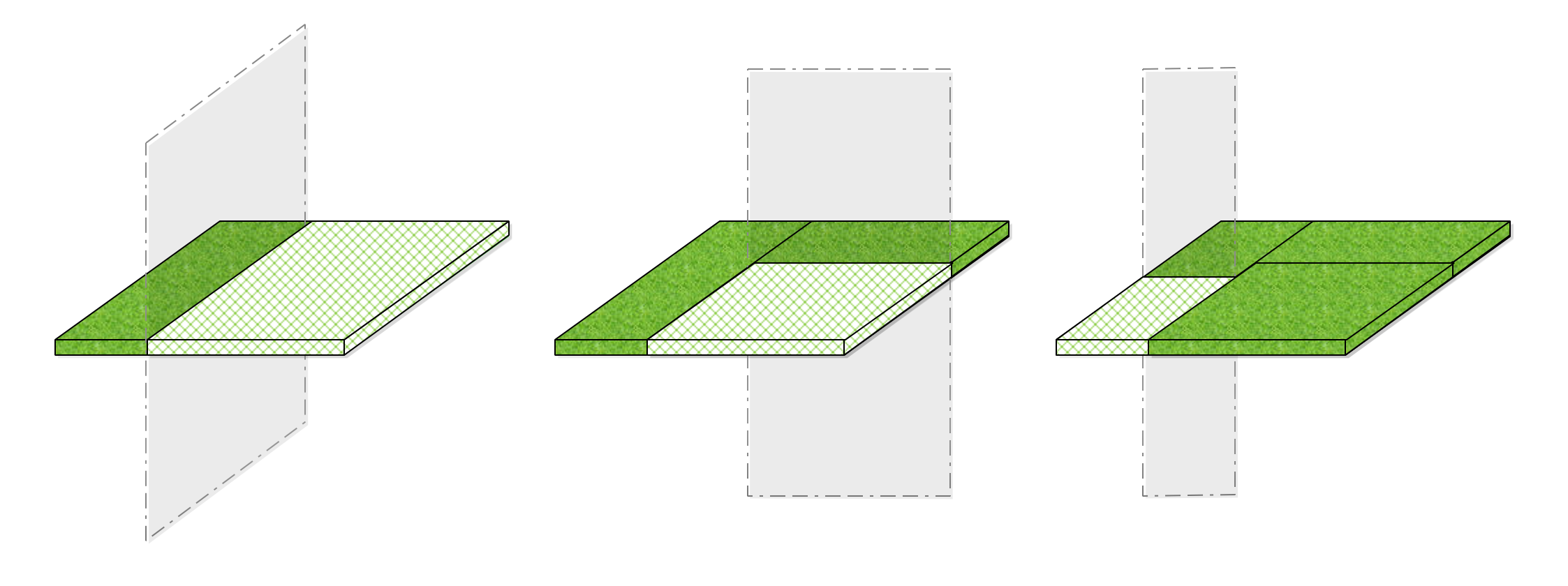}  
	\end{minipage}  
	\centering  
	\caption{Possible construction procedures of $3$-split axis-parallel purely random partitions in a $2$-dimensional space. }
	\label{fig:ap}
\end{figure*}

It is legitimate to assume that any specific partition variable $Z \in \mathcal{Z}$ can be recognized as a latent splitting criterion. To be specific, if we consider a $p$-split procedure carried out by following $Z$, then the collection of the resulting non-overlapping leaves can be defined by $\mathcal{A}_{(Q_1 \ldots Q_p)}$, and further abbreviated as $\mathcal{A}_{Z, p}$. Now, if we focus on the partition on certain cell $V_j$, for example, then we have $\mathcal{A}_{Z, 0}:=V_j$. Moreover, for any point $X \in V_j$, it is bound to fall into certain cell which can then be denoted by $A_{Z, p}(X)$.

Here, we introduce a map $h_{Z, p} : V_j \to \mathcal{Y}$ defined by
\begin{align} \label{TreeDecisionRule}
h_{Z, p} (X) : =
\frac{\sum_{i=1}^n Y_i \eins_{\{X_i \in A_{Z, p} (X) \}}}{\sum_{i=1}^n \eins_{\{X_i \in A_{Z, p} (X) \}}} \eins_{E_{Z, p}(X)},
\end{align}
where the event set $E_{Z, p}(X)$ is defined by
\begin{align*}
E_{Z, p}(X) = \bigg\{ \sum_{i=1}^n \eins_{\{X_i \in A_{Z, p}(X)\}} \neq 0 \bigg\}.
\end{align*}
Formula \eqref{TreeDecisionRule} is called the random tree decision rule for regression on $V_j$.

\subsubsection{Child Best-scored Random Tree} \label{sec::LBRT}
In this subsection, we consider the establishing procedure of a child best-scored random tree defined on the feature space $\mathcal{X}$. Specifically, the child random tree is originally developed on $V_j$ and then extended to $\mathcal{X}$. Concerning with the fact that the performance of the tree obtained by conducting random partition once may not be that desirable, we improve this by choosing one tree with the best performance out of $k_j$ candidates on $V_j$.
The tree picked out is then called the child best-scored random tree.
Therefore, when analyzing the behaviors of $k_j$ trees on $V_j$, we suppose that splitting procedures they follow can be represented by the independent and identically distributed random variables $\{ Z_{j 1}, \ldots, Z_{j k_j}\}$ drawn from $\mathcal{Z}$, respectively. 

For clearer illustration of theoretical analysis, we first give the definitions of some function sets.
We assume that $\mathcal{T}_j$ is a function set containing all the possible partitions of a random tree over $V_j$, which is defined as follows:
\begin{align} \label{NaiveFunctionSet}
\mathcal{T}_j 
:= \biggl\{ \sum_{i=0}^p c_i \boldsymbol{1}_{A_{i}} : p \in \mathbb{N}, c_i \in [-M, M], \bigcup_{i=0}^p A_{i} = V_j,  A_{s} \cap A_{s'} = \emptyset, s \neq s' \biggr\}.
\end{align}
Here, we choose $p \in \mathbb{N}$ as the number of splits, the resulting leaves presented as $A_{0}, A_{1}, \ldots,$ $A_{p}$ actually form a $p$-split partition of $V_j$. 
It is important to notify that $c_i$ is the value of leaf $A_{i}$.
Without loss of generality, in this paper, we only consider cells with the shape of $A_{i} = \bigtimes_{\ell=1}^d [a_{i\ell}, b_{i\ell}]$.
Moreover, 
for $s \in \{1, \ldots, k_j\}$, we derive the function set induced by the splitting policy $Z_{j s}$ as
\begin{align} \label{CandidateSpace}
\mathcal{T}_{Z_{j s}, p} 
:= \bigg\{\sum_{i=0}^{p} c_{i} \boldsymbol{1}_{A_{i}} : c_{i} \in [-M, M], A_{i} \in \mathcal{A}_{Z_{j s}, p} \bigg\},
\end{align}
where $\mathcal{A}_{Z_{j s}, p}$ represents the resulting $p$-split partition of $V_j$ by following the splitting policy $Z_{j s}$. Note that $\mathcal{T}_{Z_{j s}, p} $ is a subset of $\mathcal{T}_j$.

However, we should notice that every function $h \in \mathcal{T}_{j}$ is only defined on $V_j$ while a random tree function from $\mathcal{X}$ to $\mathcal{Y}$ is finally needed. To this end, for every $h \in \mathcal{T}_{j}$, we define the zero-extension $g : \mathcal{X} \to \mathcal{Y}$ by
\begin{align} \label{ZeroExtension}
g(X) 
:= \begin{cases}
h(X), & X \in V_j,
\\
0, & X\notin V_j,
\end{cases}
\end{align}
which should be equipped with the same number of splits $p$ as the decision tree $h$. 
Then, the function set only defined on $V_j$ can also be extended to $\mathcal{X}$, that is 
\begin{align} \label{ExtendedFunctionSet}
\hat{\mathcal{T}}_{j}
:= \{ g: h \in \mathcal{T}_{j}\}.
\end{align}
Moreover, the extension of function set $\mathcal{T}_{Z_{j s}, p}$ can also be obtained with the same manner, which is
\begin{align} \label{ExtendedCandidatesSet}
\hat{\mathcal{T}}_{Z_{j s}, p} 
:= \{ g: h \in \mathcal{T}_{Z_{j s}, p} \}.
\end{align}
Furthermore, we denote $\hat{\mathcal{T}}_{Z_{j s}} := \cup_{p \in \mathbb{N}} \hat{\mathcal{T}}_{Z_{j s}, p}$.

In order to find an appropriate random tree decision rule under policy $Z_{j s}$ denoted as $g_{L_j, \mathrm{D}, Z_{js}}$, we are supposed to conduct an optimization problem. To this end, we conduct our analysis under the framework of regularized empirical risk minimization.
To begin with, regularized empirical risk minimization is a learning method providing us with a better preparation for more involved analysis of our specific random forest.
Let $L : \mathcal{X} \times \mathcal{Y} \to [0, \infty)$ be a loss and $\mathcal{T} \subset \mathcal{L}_0 ( \mathcal{X} )$ be a non-empty set, where $\mathcal{L}_0$ is the set of measurable functions on $\mathcal{X}$ and $\Omega : \mathcal{T} \to [0, \infty)$ be a function. The learning method whose decision function $f_D$ satisfying
\begin{align*}
\mathcal{R}_{L, \mathrm{D}} (f_D) + \Omega (f_D) = \inf_{f \in \mathcal {T}} \mathcal{R}_{L, \mathrm{D}} (f) + \Omega (f)
\end{align*}
for all $n \ge 1$ and $D \in (\mathcal{X} \times \mathcal{Y})^n$ is named regularized empirical risk minimization.

In this paper, we propose that the number of splits $p$ is what we should penalize on. By penalizing on $p$, we are able to give some constraints on the complexity of the function set so that the set will have a finite VC dimension \citep{vapnik71a}, 
and therefore make the algorithm PAC learnable \citep{valiant84a}.
Besides, it can also refrain the learning results from overfitting.
With data set $D$, the above regularized empirical risk minimization problem with respect to each function set $\hat{\mathcal{T}}_{Z_{j s}}$ turns into
\begin{align} \label{RERM1}
\min_{p \in \mathbb{N}} \min_{g \in \hat{\mathcal{T}}_{Z_{j s}, p}} \ \lambda_j p^2 + \mathcal{R}_{L_j, \mathrm{D}} (g),
\quad
s = 1, \ldots, k_j.
\end{align}
It is well worth mentioning that since the exponent of $p$ will not have influence on the performance of the selection procedure, we penalize on $p^2$ to obtain better convergence properties.

Observation finds that the regularized empirical risk minimization under any policy can be bounded simply by considering the case where no split is applied to $V_j$. Consequently, we present the optimization problem as follows:
\begin{align*}
\min_{p \in \mathbb{N}} \min_{g \in \hat{\mathcal{T}}_{Z_{j s}, p}} \ \lambda_j p^2 + \mathcal{R}_{L_j, \mathrm{D}} (g)
\le \mathcal{R}_{L_j, \mathrm{D}} (0) \le M^2,
\end{align*}
where $\mathcal{R}_{L_j, \mathrm{D}} (0)$ stands for the empirical risk for taking $g(x) = 0$ for all $x \in \mathcal{X}$ with $p = 0$. Therefore, from the above inequality, we obtain that the number of splits $p$ is upper bounded by $M \lambda_j^{-1/2}$. Accordingly, the capacity of the underlying function set can be largely reduced, and here and subsequently, the function sets will all be added an extra condition where $p \leq M \lambda_j^{-1/2}$.

To establish the random tree decision rule for regression on $\mathcal{X}$, we zero-extend  \eqref{TreeDecisionRule} to the whole feature space.
It can be apparently observed that our random tree decision rule on $\mathcal{X}$ induced by $V_j$ is the solution to the  optimization problem \eqref{RERM1} and it can be further denoted by
\begin{align} \label{RERM2}
(g_{Z_{j s}}, \ p_{Z_{j s}})
= \argmin_{p \in \mathbb{N}} \ \argmin_{g \in \hat{\mathcal{T}}_{Z_{j s}, p}} \ \lambda_j p^2 + \mathcal{R}_{L_j, \mathrm{D}} (g),
\quad 
s = 1, \ldots, k_j,
\end{align}
where $p_{Z_{j s}}$ is the number of splits of the decision function $g_{Z_{j s}}$.
Its population version is presented by
\begin{align*} 
(g^*_{Z_{j s}},\ p^*_{Z_{j s}}) = \argmin_{p \in \mathbb{N}} \  \argmin_{g \in \hat{\mathcal{T}}_{Z_{j s}, p}} \ \lambda_j p^2 + \mathcal{R}_{L_j, \mathrm{P}} (g), 
\quad 
s = 1, \ldots, k_j.
\end{align*}
It is necessary to note that our primary idea is to conduct the regularized empirical risk minimization problem using $\mathcal{T}_{Z_{j s}}$ and $D_j$, which is
\begin{align} \label{RERM3}
(h_{L, \mathrm{D}_j, Z_{j s}}, p_{L, \mathrm{D}_j, Z_{j s}}) := \argmin_{p \in \mathbb{N}}\  \argmin_{h \in \mathcal{T}_{Z_{j s}, p}} \ \tilde{\lambda}_j p^2 + \mathcal{R}_{L, \mathrm{D}_j} (h),
\quad 
s = 1, \ldots, k_j. 
\end{align}
It can be observed that when we take $\tilde{\lambda}_j := n\lambda_j/\lvert I_j \rvert $, the solution of the optimization problem \eqref{RERM3} coincides with \eqref{RERM2} on $V_j$.
Since the following analysis will be carry out on $\mathcal{X}$, we can directly optimize \eqref{RERM2}.
Furthermore, it is easy to verify that if a Bayes decision function $f_{L, \mathrm{P}}^*$ w.r.t.~$L$ and $\mathrm{P}$ exists, it additionally is a Bayes decision function w.r.t.~$L_j$ and $\mathrm{P}$.

Now, we focus on establishing
the best-scored random tree on $\mathcal{X}$ induced by $V_j$, also called the child best-scored random tree,
which is chosen from $k_j$ candidates. The main principle is to retain only the tree yielding the minimal regularized empirical risk, which is
\begin{align}  \label{LocalBestScoredTree}
(g_{Z_{j}}, \ p_{Z_{j}})
= \argmin_{s=1, \ldots, k_j} \ \lambda_j p^2_{Z_{j s}} + \mathcal{R}_{L_j, \mathrm{D}} (g_{Z_{j s }}),
\end{align}
where $p_{Z_j}$ is the number of splits of $g_{Z_j}$ and $Z_j = \{ Z_{j1}, \ldots, Z_{j k_j} \}$. 
Apparently, $g_{Z_j}$ is the regularized empirical risk minimizer with respect to the random function set
\begin{align} \label{SpaceK}
\hat{\mathcal{T}}_{Z_j} : = \bigcup_{s=1}^{k_j}\hat{\mathcal{T}}_{Z_{j s}}.
\end{align}
Put another way, $g_{Z_j}$ is the solution to the regularized empirical risk minimization problem
\begin{align*}
\min_{g \in \hat{\mathcal{T}}_{Z_j}} \ \lambda_j p^2(g) + \mathcal{R}_{L_j, \mathrm{D}} (g)
: = \min_{s=1, \ldots, k_j} \ \min_{p \in \mathbb{N}} \ \min_{g \in \hat{\mathcal{T}}_{Z_{j s}, p}} \ \lambda_j p^2 + \mathcal{R}_{L_j, \mathrm{D}}(g).
\end{align*}
Similar as it is, we denote by $g^*_{Z_j}$ the solution of the population version of regularized minimization problem in the set $\hat{\mathcal{T}}_{Z_j}$
\begin{align} 
( g^*_{Z_j}, \, p^*_{Z_j}) 
& = \argmin_{g \in \hat{\mathcal{T}}_{Z_j}} \ \lambda p^2(g) + \mathcal{R}_{L_j, \mathrm{P}} (g) 
\nonumber\\
& = \argmin_{s = 1, \ldots, k_j}\ \lambda (p^*_{Z_{j s}})^2 + \mathcal{R}_{L_j, \mathrm{P}}(g^*_{Z_{j s}}).
\label{BestScoreMinimizerPopulation}
\end{align}
We mention here that $p^*_{Z_j}$ is the corresponding number of splits of 
$g^*_{Z_j}$.

\subsubsection{Parent Best-scored Random Tree} \label{sec::WeightedGRT}
In this subsection, we first build the parent random tree by adding all the child ones.
After that, in order to show that our parent random tree is indeed a solution of an usual random tree algorithm on the feature space, we need to consider the indicator function sets defined on $\mathcal{X}$ of a child random tree and direct sums of the indicator function sets of several trees.

First of all, adding all child best-scored random trees generated by \eqref{LocalBestScoredTree} together leads to the parent best-scored random tree, which is defined by
\begin{align} \label{GlobalDecisionTree}
g_{Z} (X)
:= \sum_{j=1}^m g_{Z_{j }} (X),
\end{align}
where $Z :=\{ Z_{1}, \ldots, Z_{m} \}$ denotes the splitting criteria on $\{V_j\}_{j=1}^m$.

Recall that we have mention the process of extending
the indicator function set of a tree on $V \subsetneq \mathcal{X}$ to an indicator function set on $\mathcal{X}$ in \eqref{ZeroExtension} and \eqref{ExtendedFunctionSet}, we now give a formal description of that in
the following proposition.

\begin{proposition}\label{SpaceExtension}
	Let $V \subset \mathcal{X}$ and $\mathcal{T}_V$ be an indicator function space of the form \eqref{NaiveFunctionSet} on $V$.	Denote by $g$ 
	the zero-extension of $h \in \mathcal{T}_V$ to $\mathcal{X}$ defined by
	\begin{align*}
	g(X) 
	:= \begin{cases}
	h(X), & X \in V,
	\\
	0, & X\in \mathcal{X} \backslash V.
	\end{cases}
	\end{align*}
	Then, the set $\hat{\mathcal{T}}_V := \{ g : h \in \mathcal{T}_V \}$ is still an indicator function set on $\mathcal{X}$. We define that the number of splits of the decision tree on $\hat{\mathcal{T}}_V$ is the same as the number of splits on $\mathcal{T}_V$, which is
	\begin{align} \label{LocalSplitNumber}
	p_{\hat{\mathcal{T}}_V} := p_{\mathcal{T}_V}.
	\end{align}
\end{proposition}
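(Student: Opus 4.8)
The plan is to treat the statement as the bookkeeping/consistency check that it really is: the content is to verify that the zero-extension $h \mapsto g$ sends the class \eqref{NaiveFunctionSet} on $V$ into the analogous class on $\mathcal{X}$, and that the split-count convention \eqref{LocalSplitNumber} is unambiguous and compatible with the rest of the construction. First I would fix an arbitrary $h = \sum_{i=0}^{p} c_i \boldsymbol 1_{A_i} \in \mathcal{T}_V$, where $\{A_i\}_{i=0}^{p}$ is a $p$-split partition of $V$ into axis-parallel cells and $c_i \in [-M,M]$. Setting $A_{p+1} := \mathcal{X} \backslash V$ and $c_{p+1} := 0$, the zero-extension can be written as $g = \sum_{i=0}^{p+1} c_i \boldsymbol 1_{A_i}$, so $\{A_i\}_{i=0}^{p+1}$ is a measurable partition of $\mathcal{X}$ and, since $0 \in [-M,M] = \mathcal{Y}$, $g$ is a measurable function from $\mathcal{X}$ to $\mathcal{Y}$ that is constant on each cell. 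Hence every element of $\hat{\mathcal{T}}_V := \{ g : h \in \mathcal{T}_V \}$ is an indicator function on $\mathcal{X}$ of the same form, which is the first assertion.

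For the split count, I would observe that $h \mapsto g$ is a bijection from $\mathcal{T}_V$ onto $\hat{\mathcal{T}}_V$ with inverse $g \mapsto g|_V$ (the value $0$ off $V$ carries no information because $0$ is a legal leaf value), so the number of splits of $g$ is unambiguously inherited from $h$ and we may simply \emph{define} it by \eqref{LocalSplitNumber}. The point worth emphasising is the convention that the extra leaf $A_{p+1} = \mathcal{X} \backslash V$, on which $g$ vanishes, is \emph{not} charged as an additional split. This is the natural choice: when the parent tree \eqref{GlobalDecisionTree} is formed by summing the $m$ child trees, the region $\mathcal{X} \backslash V_j$ of child tree $j$ is exactly re-covered and re-valued by the remaining children, so counting the zero-padding of each child would double-count; with the convention \eqref{LocalSplitNumber} the total number of splits of the parent tree is $\sum_{j=1}^{m} p_{\hat{\mathcal{T}}_{V_j}} = \sum_{j=1}^{m} p_{\mathcal{T}_{V_j}}$, which is exactly what the penalty $\lambda_j p^2$ and the later capacity estimates are calibrated against.

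I expect the only real obstacle to be presentational rather than technical, since $\mathcal{X} \backslash V$ need not itself be a box: strictly speaking, $g$ fits the template \eqref{NaiveFunctionSet} on $\mathcal{X}$ only after one either allows the trailing leaf to be such a complement or -- as is in fact the case in the algorithm, where $V = V_j$ is a cell of the stage-one axis-parallel partition -- decomposes $\mathcal{X} \backslash V_j$ into finitely many boxes. I would add one sentence pinning down which reading is intended; under either reading the verification above is unchanged, and the bookkeeping identity $p_{\hat{\mathcal{T}}_V} := p_{\mathcal{T}_V}$ remains the correct accounting.
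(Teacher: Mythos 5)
Your verification is correct; note that the paper itself states Proposition \ref{SpaceExtension} without any proof, treating it as a definitional bookkeeping step, so your write-up (viewing the zero-extension as appending one extra leaf $A_{p+1}=\mathcal{X}\backslash V$ with value $c_{p+1}=0\in[-M,M]$, observing that $h\mapsto g$ is inverted by restriction to $V$, and adopting the convention that the zero-padded region is not charged as a split) is exactly the argument the authors leave implicit. Your flagged caveat -- that $\mathcal{X}\backslash V$ need not be a single box, so the extended function matches the template \eqref{NaiveFunctionSet} only after either relaxing the cell shape or decomposing the complement -- is the one genuine gap in the paper's phrasing, and you resolve it appropriately; the only minor looseness is the phrase ``unambiguously inherited,'' since the split count is an attribute of the tree representation rather than of the function $g$ itself, which is precisely why \eqref{LocalSplitNumber} must be a definition rather than a derived fact.
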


Based on this proposition, we are now able to construct an indicator function set by a direct sum of indicator function sets $\hat{\mathcal{T}}_V$ and $\hat{\mathcal{T}}_W$ 
with $V, W \subset \mathcal{X}$ and $V \cap W = \emptyset$. 

\begin{proposition} \label{DirectSumSpace}
	For $V, W \subset \mathcal{X}$ such that $V \cap W =\emptyset$ and 
	$V \cup W \subset \mathcal{X}$, let $\mathcal{T}_V$ and $\mathcal{T}_W$ be indicator function sets of the form \eqref{NaiveFunctionSet} on $V$ and $W$, respectively. Furthermore, let $\hat{\mathcal{T}}_V$ and $\hat{\mathcal{T}}_W$ be the indicator function sets of all functions of $\mathcal{T}_V$ and $\mathcal{T}_W$ extended to $\mathcal{X}$ in the sense of Proposition \ref{SpaceExtension}. Let $p_{\hat{\mathcal{T}}_V}$ and 
	$p_{\hat{\mathcal{T}}_W}$ given by \eqref{LocalSplitNumber} be the associated the number of splits. Then $\hat{\mathcal{T}}_V \cap \hat{\mathcal{T}}_W = \{0\}$ and hence
	the direct sum 
	\begin{align*}
	\mathcal{T} := \hat{\mathcal{T}}_V \oplus \hat{\mathcal{T}}_W
	\end{align*}
	exists. The direct sum $\mathcal{T}$ is also an indicator function set of random trees. 
	For $\lambda_V, \lambda_W > 0$ and $g \in \mathcal{T}$, 
	let $g_V \in \hat{\mathcal{T}}_V$ and $g_W \in \hat{\mathcal{T}}_W$ be the unique function that $g = g_V + g_W$.
	Then, we define the number of splits on the direct sum space by
	\begin{align*}
	p_{\mathcal{T}} 
	:= (\lambda_{V} p_{\hat{\mathcal{T}}_V}^2 + \lambda_{W} p_{\hat{\mathcal{T}}_W}^2)^{1/2}.
	\end{align*}
\end{proposition}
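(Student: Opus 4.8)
The plan is to verify the three assertions of the proposition in turn --- triviality of the intersection, well-definedness of the direct sum (with uniqueness of the decomposition), and the claim that $\mathcal{T}$ is again an indicator function set of random trees --- after which the displayed definition of $p_{\mathcal{T}}$ needs nothing further. First I would record the one structural fact all three rely on: by the zero-extension \eqref{ZeroExtension}, every $g\in\hat{\mathcal{T}}_V$ vanishes on $\mathcal{X}\setminus V$ and every $g\in\hat{\mathcal{T}}_W$ vanishes on $\mathcal{X}\setminus W$. Since $V\cap W=\emptyset$, the sets $\mathcal{X}\setminus V$ and $\mathcal{X}\setminus W$ cover $\mathcal{X}$, so any $g\in\hat{\mathcal{T}}_V\cap\hat{\mathcal{T}}_W$ is identically $0$; conversely the zero function, which arises from the representation \eqref{NaiveFunctionSet} with $p=0$ and $c_0=0$, lies in both sets. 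Hence $\hat{\mathcal{T}}_V\cap\hat{\mathcal{T}}_W=\{0\}$.

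I would then set $\mathcal{T}:=\{\,g_V+g_W : g_V\in\hat{\mathcal{T}}_V,\ g_W\in\hat{\mathcal{T}}_W\,\}$ and show each $g\in\mathcal{T}$ has a unique such representation; this is precisely what it means for the direct sum $\hat{\mathcal{T}}_V\oplus\hat{\mathcal{T}}_W$ to be well defined. The argument is again disjointness of supports: from $g=g_V+g_W$ with $g_V$ supported in $V$, $g_W$ supported in $W$ and $V\cap W=\emptyset$, one reads off $g_V=g\,\boldsymbol{1}_V$ and $g_W=g\,\boldsymbol{1}_W$, so the two summands are forced; equivalently, if $g_V+g_W=g_V'+g_W'$ then $g_V-g_V'=g_W'-g_W$ vanishes off $V$ and off $W$, hence everywhere. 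In particular the split counts $p_{\hat{\mathcal{T}}_V}$ and $p_{\hat{\mathcal{T}}_W}$ attached to $g_V$ and $g_W$ via \eqref{LocalSplitNumber} depend only on $g$, so $p_{\mathcal{T}}:=(\lambda_V p_{\hat{\mathcal{T}}_V}^2+\lambda_W p_{\hat{\mathcal{T}}_W}^2)^{1/2}$ is well defined.

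For the last point I would write $g_V$ as the zero-extension of some $h_V=\sum_{i=0}^{p_V}c_i\boldsymbol{1}_{A_i}\in\mathcal{T}_V$ and $g_W$ as that of some $h_W=\sum_{j=0}^{p_W}d_j\boldsymbol{1}_{B_j}\in\mathcal{T}_W$, so that $g=\sum_{i=0}^{p_V}c_i\boldsymbol{1}_{A_i}+\sum_{j=0}^{p_W}d_j\boldsymbol{1}_{B_j}$. Because $V\cap W=\emptyset$, the cells $A_0,\dots,A_{p_V},B_0,\dots,B_{p_W}$ are pairwise disjoint axis-parallel boxes with union $V\cup W$, and adjoining, if necessary, the leftover region $\mathcal{X}\setminus(V\cup W)$ with value $0$ turns this into a partition of $\mathcal{X}$ with all coefficients still in $[-M,M]$. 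Thus every element of $\mathcal{T}$ has precisely the shape \eqref{NaiveFunctionSet}, i.e.\ $\mathcal{T}$ is an indicator function set of random trees.

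None of these steps is deep --- this proposition is essentially a bookkeeping lemma --- so there is no real obstacle; the one point worth stating carefully is that the \emph{number of splits} must be treated as an attribute carried along with the function, exactly as in \eqref{ZeroExtension} and Proposition~\ref{SpaceExtension}, rather than something recovered from its graph, so that $p_{\hat{\mathcal{T}}_V}$, $p_{\hat{\mathcal{T}}_W}$ and hence $p_{\mathcal{T}}$ are genuinely well defined. A related, purely cosmetic, caveat is that $V\cup W$ itself need not be a box; since only the box structure of the individual cells enters the later capacity estimates, this is harmless.
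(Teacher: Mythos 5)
Your proposal is correct; note that the paper itself states Proposition \ref{DirectSumSpace} without any proof, treating it as a definitional bookkeeping statement, and the disjoint-support argument you give (every element of $\hat{\mathcal{T}}_V$ vanishes off $V$, every element of $\hat{\mathcal{T}}_W$ vanishes off $W$, and $V\cap W=\emptyset$ forces both the triviality of the intersection and the uniqueness $g_V=g\,\boldsymbol{1}_V$, $g_W=g\,\boldsymbol{1}_W$) is exactly the reasoning the paper implicitly relies on. Your two caveats --- that the number of splits must be carried as an attribute of the tree rather than recovered from the function, and that $\mathcal{X}\setminus(V\cup W)$ need not be a box --- are both accurate observations about imprecisions in the paper's own formulation and do not affect the result.
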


To relate Proposition \ref{SpaceExtension} and Proposition \ref{DirectSumSpace} with 
\eqref{GlobalDecisionTree}, there is a need to introduce more notations. For pairwise disjoint 
$V_1, \ldots, V_m \subset \mathcal{X}$ with $\cup_{j=1}^m V_j = \mathcal{X}$, let $\hat{\mathcal{T}}_{Z_{j}}$ be the best-scored function space \eqref{SpaceK} induced by $V_j$ for every $j \in \{1, \ldots, m\}$ based on  Proposition \ref{SpaceExtension}. A joined indicator function space of $\hat{\mathcal{T}}_{Z_{1}}, \ldots, \hat{\mathcal{T}}_{Z_{m}}$ can be therefore designed analogously to Proposition \ref{DirectSumSpace}. Specifically, for an arbitrary index set $J \subset \{1, \ldots, m\}$ and a vector 
$\boldsymbol{\lambda} = (\lambda_j)_{j \in J} \in (0, \infty)^{\lvert J \rvert}$,
the direct sum
\begin{align*}
\mathcal{T}_{Z_{J }} := \bigoplus_{j \in J} \hat{\mathcal{T}}_{Z_{j}}
=\Bigg\{ g=\sum_{j \in J} g_j : g_j \in \hat{\mathcal{T}}_{Z_{j }} \ \text{for all} \  j\in J  \Bigg\},
\end{align*}
where $Z_{J } := \{Z_{j }: j \in J\}$,
is still an indicator function space of random tree with squared number of splits
\begin{align} \label{GlobalDepth}
p_{\mathcal{T}_{Z_{J }}}^2 = \sum_{j \in J} \lambda_j p_{\hat{\mathcal{T}}_{Z_{j }}}^2 .
\end{align}
If $J = \{1, \ldots, m\}$, we simply write $\mathcal{T}_{Z} := \mathcal{T}_{Z_{J }}$.
To notify, $\mathcal{T}_{Z}$ contains inter alia $g_{Z}$ given by \eqref{GlobalDecisionTree}.

Here, we briefly investigate the regularized empirical risk of 
$g_{Z} 
= \sum_{j=1}^m  g_{Z_{j}} $.
For arbitrary $g \in \mathcal{T}_{Z}$, we have
\begin{align}
\mathcal{R}_{L, \mathrm{D}}(g_{Z}) + p^2(g_{Z} )
& = \sum_{j=1}^m \mathcal{R}_{L_j, \mathrm{D}} (g_{Z} ) 
+ p^2(g_{Z})
= \sum_{j=1}^m \big( \mathcal{R}_{L_j, \mathrm{D}} (g_{Z_j} ) 
+  \lambda_j p^2_{Z_j}  \big)
\nonumber\\
& \leq \sum_{j=1}^m \big( \mathcal{R}_{L_j, \mathrm{D}} (\eins_{V_j} g ) 
+ \lambda_j p^2({\eins_{V_j} g})  \big) 
= \sum_{j=1}^m  \mathcal{R}_{L_j, \mathrm{D}} ( g ) 
+ p^2( g)  
\nonumber\\
& = \mathcal{R}_{L, \mathrm{D}}(g) + p^2(g).
\label{UsualDecisionTree}
\end{align}
The first equality is derived by
$\mathcal{R}_{L, \mathrm{D}} (g) = \sum_{j=1}^m \mathcal{R}_{L_j, \mathrm{D}} (g)$ \citep{meister16a}.
The second equality is established because the risk of $g_{Z}$ on $V_j$ equals that of $g_{Z_j}$.
The inequality is a direct result of \eqref{LocalBestScoredTree}, where the number of splits 
$p(g)$ for arbitrary $g \in \mathcal{T}_{Z}$
according to Proposition \ref{DirectSumSpace} is defined by 
$p^2(g) := \sum_{j=1}^m \lambda_j p^2(\eins_{V_j} g)$,
and 
$p(\eins_{V_j} g)$ is the corresponding number of splits of $g$ on $V_j$. 
The last two equalities hold the same ways as the first two ones.

Judging from \eqref{UsualDecisionTree}, $g_{Z}$ is the random tree function with respect to $\mathcal{T}_{Z}$ and $L$, as well as the 
regularized parameter $\lambda= 1$. In other words, the latter best-scored random tree derived from
$\argmin_{g \in \mathcal{T}_{Z}}\mathcal{R}_{L, \mathrm{D}}(g) + p^2(g)$
equals our parent best-scored random tree \eqref{GlobalDecisionTree}.

For the sake of clarity, we summarize some assumptions for the joined best-scored function sets as follows:

\begin{assumption}[Joined best-scored decision tree spaces] \label{JoinedSpace}
	For pairwise disjoint subsets $V_1, \ldots, V_m$ of $\mathcal{X}$, let $\hat{\mathcal{T}}_{Z_{j }}$ be the best-scored random tree function sets induced by $V_j$. Consequently, for $\boldsymbol{\lambda} := (\lambda_1, \ldots, \lambda_m) \in (0, \infty)^m$, we define the joined best-scored function space $\mathcal{T}_{Z} := \bigoplus_{j=1}^m \hat{\mathcal{T}}_{ Z_{j}}$ and equip it with the number of splits \eqref{GlobalDepth}.
\end{assumption}

\subsection{Two-stage Best-scored Random Forest} \label{sec::TBRF}
Having developed the parent random tree under one specific partition of the feature space, it is legitimate to ponder whether we can devise an ensemble of trees by injecting randomness into  the feature partition in stage one.
To fulfill this idea, we propose a data splitting approach named as the adaptive random partition and establish the \emph{Two-stage Best-scored Random Forest} by ensemble learning.

\subsubsection{Adaptive Random Partition of the Feature Space} \label{sec::AdaptivePartition}

To describe the above two-stage random forest algorithm, $(V_j)_{j=1, \ldots, m}$ only has to be some partition of $\mathcal{X}$. Nevertheless, concerning with the theoretical investigations that will be conducted on the learning rates of our new algorithm, there is a need for us to further specify the partition. 
For this purpose, 
we denote 
a series of balls $B_1, \ldots, B_m$ 
with radius $r_j > 0, j = 1, \ldots, m$ and mutually distinct centers 
$z_1, \ldots, z_m \in \mathcal{X} $
by
\begin{align*}
B_j 
:= B_{r_j}(z_j) 
:= \{x \in \mathbb{R}^d : \Vert x - z_j \Vert_2 \leq r_j\}, 
\quad \quad j \in\{1, \dots, m\}.
\end{align*}
where $\Vert \cdot \Vert_2$ is the Euclidean norm in $\mathbb{R}^d$.
Furthermore, we can choose $r_1, \ldots, r_m$ and $z_1, \ldots, z_m$ such that $\mathcal{X} \subset \bigcup_{j=1}^m B_j$.

Considering how large the sample size will be and how the sample density may vary in the feature space $\mathcal{X}$, we propose an adaptive random partition approach.
This method serves as a preprocessing of partitioning the feature space into cells containing fewer data which facilitates the following regression works on cells. Moreover, owing to the randomness resided in the partition, it paves the way for ensemble.
A considerable advantage of this proposal over the purely random partition is that it efficiently takes the sample information into consideration. To be precise, since the construction of the purely random partition is independent of the whole data set, it may possibly suffer from the dilemma where there is over-splitting on sample-sparse area and under-splitting on sample-dense area. However, the adaptive random partition is much wiser for it utilizes sample information in a relatively easy way and still fulfills the objective of dividing the space into small cells. The specific partition procedure is similar to the proposed process in Section \ref{sec::RandomPartition} with difference in how to choose the to-be-split cell.

In the purely random partition, $L_i$ in the random vector $Q_i := (L_i, R_i, S_i)$ denotes the randomly chosen cell to be split at the $i$-th step of tree construction. Here, we propose that when choosing a to-be-split cell, we first randomly select $t$ sample points from the training data set who are then labeled by the cells they belong to. Later, we choose the cell that is the majority vote of the $t$ sample labels to be $L_i$. This idea follows the fact that
when randomly picking sample points from the whole training data set, cells with more samples will be more likely to be selected while cells with fewer samples are less possible to be chosen.
In this manner, we may obtain feature space partitions where the sample sizes of resulting cells are more evenly distributed.

\subsubsection{Ensemble Forest} \label{sec::EnsembleForest}
We now construct the two-stage best-scored random forest basing on the average results of $T$ parent best-scored random trees.
Due to the intrinsic randomness resided in the partition method, we are able to construct several different parent best-scored random trees under different partitions of the feature space.
To be specific, each of these trees is generated according to the procedure in \eqref{GlobalDecisionTree} under different input partition $V^t := \{V_j^t\}_{j=1}^m$, $t=1,\ldots,T$. 
To clarify, the splitting criterion for each of the tree in the forest is denoted by $Z_t = \{Z_{1 t}, \ldots, Z_{m t}\}$, $t =1, \ldots, T$, where $Z_{j t}$ is already the splitting criterion corresponding to the child best-scored random tree for the $t$-th tree on its $V_j^t$.
Moreover, we denote the parent best-scored trees in the forest as $g_{Z_{t}},\ 1 \leq t \leq T$. 
As usual, we perform average to obtain the two-stage best-scored random forest decision rule
\begin{align} \label{TBRF}
f_{Z} (X)
:= \frac{1}{T}\sum_{t = 1}^T g_{Z_t} (X),
\end{align}
where $Z= \{ Z_1, \ldots, Z_T \}$ denotes the collection of all splitting criteria of trees in the forest. Finally, we establish our large-scale regression predictor, the two-stage best-scored random forest $f_{Z}$.

\section{Main Results and Statements}\label{sec::MainResults}
In this section, we present main results on the oracle inequalities and learning rates for the random trees and forests. 

\subsection{Fundamental Assumption}
In this paper, we are interested in the ground-truth functions that satisfy the following restrictions on their smoothness:

\begin{assumption} \label{Smoothness}
	The Bayes decision function $f_{L, \mathrm{P}}^* : \mathcal{X} \to \mathcal{Y}$ is $\alpha$-H\"{o}lder continuous with respect to $L_1$-norm $\|x\|_1 : = \sum_{i=1}^d |x_i|$. That is, there exists a constant $c_\alpha > 0$ such that
	\begin{align*}
	| f_{L, \mathrm{P}}^*(x) - f_{L, \mathrm{P}}^*(z)|
	\leq c_\alpha \|x - z\|_1^\alpha,
	\quad
	\forall x, z \in \mathcal{X}.
	\end{align*}
\end{assumption}

\subsection{Oracle Inequality for Parent Best-scored Random Trees}
We now establish an oracle inequality for parent best-scored random trees based on the least squares loss and best-scored function space. 

\begin{theorem} \label{OracleVPtree}
	Let $\mathcal{Y} : = [-M, M]$ for $M > 0$, $L : \mathcal{X} \times \mathcal{Y} \to [0, \infty)$ be the least squares loss, $\mathrm{P}_{X \times Y} : = \mathrm{P}$ be the probability measure on $\mathcal{X} \times \mathcal{Y}$ and $\mathrm{P}_Z$ be the probability measure induced by the splitting criterion $Z$. Then for all $\tau > 0$, $\lambda : = (\lambda_1, \ldots, \lambda_m) > 0$ and $\delta \in (0, 1)$, the parent best-scored random tree \eqref{GlobalDecisionTree} satisfies
	\begin{align*}
	p^2 (g_{Z}) + \mathcal{R}_{L, \mathrm{P}} (g_{Z}) - \mathcal{R}_{L, \mathrm{P}}^* 
	& \leq 9 ( p^2 \big(g^*_{Z}) + \mathcal{R}_{L, \mathrm{P}} (g^*_{Z}) - \mathcal{R}_{L, \mathrm{P}}^* \big) \\
	& \phantom{=}
	+ 3 c_{d \delta M} \bigg(\frac{1}{n}\sum_{j=1}^m \lambda_j^{-1/2}\bigg)^{\frac{2}{1+2\delta}}
	+ 3456 M^2 \tau / n
	\end{align*}
	with probability $\mathrm{P}_{(X \times Y)| Z}$ at least $1 - 3 e^{-\tau}$, where $c_{d \delta M}$ is a constant depending on $d$, $\delta$ and $M$. The result holds for all parent best-scored random tree criterion $Z$.
\end{theorem}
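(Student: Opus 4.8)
The plan is to establish the oracle inequality by the standard regularized-empirical-risk-minimization route, combining an \emph{approximation error} estimate (the data-free term coming from the population comparison function $g^*_Z$) with a \emph{sample error} estimate (controlled via empirical process theory, using the finite-VC-dimension structure of the function sets forced by the penalty on $p^2$). Concretely, I would start from the defining near-optimality of $g_Z$ in $\mathcal{T}_Z$, namely $p^2(g_Z) + \mathcal{R}_{L,\mathrm{D}}(g_Z) \le p^2(g^*_Z) + \mathcal{R}_{L,\mathrm{D}}(g^*_Z)$, which follows from \eqref{UsualDecisionTree}. Subtracting $\mathcal{R}^*_{L,\mathrm{P}}$ and inserting $\pm \mathcal{R}_{L,\mathrm{P}}$ terms, the excess risk of $g_Z$ splits into: (a) the excess risk of $g^*_Z$ plus its penalty (the first right-hand term, to appear with constant $9$), (b) the deviation $(\mathcal{R}_{L,\mathrm{P}}-\mathcal{R}_{L,\mathrm{D}})(g^*_Z)$, and (c) the deviation $(\mathcal{R}_{L,\mathrm{D}}-\mathcal{R}_{L,\mathrm{P}})(g_Z)$. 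Term (b) is handled by a one-function Bernstein inequality since $g^*_Z$ is $Z$-measurable but data-independent; it contributes an $O(M^2\tau/n)$ term. Term (c) is the genuinely hard one because $g_Z$ is data-dependent, so I would pass to a uniform bound over the random function class $\mathcal{T}_Z$.

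For term (c) I would use the peeling/localization technique: bound $\sup_{g\in\mathcal{T}_Z} \big[(\mathcal{R}_{L,\mathrm{D}}-\mathcal{R}_{L,\mathrm{P}})(g) - \text{(fraction of excess risk + penalty)}\big]$. The key capacity input is that $\hat{\mathcal{T}}_{Z_{js},p}$ — piecewise-constant functions on a fixed $p$-leaf partition with values in $[-M,M]$ — has VC dimension controlled by $p$, and the best-scored union $\hat{\mathcal{T}}_{Z_j}=\cup_{s\le k_j}\hat{\mathcal{T}}_{Z_{js}}$ only inflates a covering number by a factor $k_j$, while the direct sum $\mathcal{T}_Z=\bigoplus_j \hat{\mathcal{T}}_{Z_j}$ has covering number the product over $j$. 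Crucially, the constraint $p \le M\lambda_j^{-1/2}$ derived in the text caps the complexity. Using an entropy bound of the form $\log\mathcal{N}(\mathcal{T}_Z,\|\cdot\|_\infty,\varepsilon) \lesssim \sum_j \lambda_j^{-1/2}\log(1/\varepsilon)$ together with a Talagrand-type concentration inequality for the empirical process and the Bernstein/variance condition for the least squares loss (variance $\le$ const $\times$ excess risk, since $\mathcal{Y}$ is bounded), one gets, after solving the resulting fixed-point inequality in the radius parameter, a sample-error bound of order $\big(n^{-1}\sum_j \lambda_j^{-1/2}\big)^{2/(1+2\delta)}$ — this is exactly where the free parameter $\delta\in(0,1)$ enters, as the exponent in a generic-chaining / entropy-integral estimate with entropy exponent $2\delta$, and where the constant $c_{d\delta M}$ is generated. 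I would then absorb the $\eta\, \big[p^2(g_Z)+\mathcal{R}_{L,\mathrm{P}}(g_Z)-\mathcal{R}^*_{L,\mathrm{P}}\big]$ piece that the peeling produces into the left-hand side, which is what forces the leading constant from $1$ up to $9$ (a small $\eta$ on the right becoming $1/(1-\eta)$, bounded by $9$), and collect the union-bound cost of the three probabilistic estimates into the $3e^{-\tau}$ probability.

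The main obstacle I anticipate is getting the capacity/entropy estimate for the \emph{random} direct-sum space $\mathcal{T}_Z$ with the right dependence on the $\lambda_j$'s: I must show the effective dimension of $\mathcal{T}_Z$ scales like $\sum_{j=1}^m M\lambda_j^{-1/2}$ (so that the penalty $p^2(g)=\sum_j\lambda_j p^2(\eins_{V_j}g)$ genuinely controls it leaf-by-leaf), then plug this into the generic oracle-inequality machinery so that the bound comes out \emph{conditionally on $Z$}, i.e. uniformly over all realizations of the splitting criterion — this is why the statement says "holds for all parent best-scored random tree criterion $Z$" and the probability is $\mathrm{P}_{(X\times Y)|Z}$. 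A secondary technical point is bookkeeping the constants ($9$, $3$, $3456$) through the peeling steps and the choice of $\eta$; I would not grind these out but would note that they follow from the standard argument (e.g. in the style of Steinwart–Christmann or the cited \cite{meister16a}) with $\eta$ chosen so that $1/(1-\eta)$ and the induced multiplicative factors land on those values. Everything else — the split of the excess risk, the single-function Bernstein bound for $g^*_Z$, the variance bound for the least squares loss on $[-M,M]$ — is routine.
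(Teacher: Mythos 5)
Your proposal is correct and follows essentially the same route as the paper: the paper's proof simply verifies the hypotheses of the general oracle inequality for regularized ERM (Theorem 7.20 in \cite{StCh08}) -- the supremum bound $B=4M^2$, the variance bound $V=16M^2$ with $\vartheta=1$, and the localized Rademacher bound $\varphi_n(r)$ obtained from the VC/entropy estimates of Lemmas \ref{VCIndex}--\ref{Rademacher} via the penalty-induced cap $p\le\sum_j\lceil(r/\lambda_j)^{1/2}\rceil$ -- and then solves the fixed-point condition $r\ge 75\varphi_n(r)$ to get the $\big(n^{-1}\sum_j\lambda_j^{-1/2}\big)^{2/(1+2\delta)}$ term. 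Your sketch is an unpacking of what that theorem does internally (error decomposition, single-function Bernstein, peeling with absorption giving the constant $9$, union bound giving $3e^{-\tau}$), so the two arguments coincide in substance.
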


\subsection{Learning Rates for Parent Best-scored Random Trees}
We now state our main result on the learning rates for parent best-scored random trees based on the established oracle inequality.

\begin{theorem} \label{RateVPTree}
	Let $L : \mathcal{X} \times \mathcal{Y} \to [0, \infty)$ be the least squares loss,
	$\mathrm{P}_{X \times Y} : = \mathrm{P}$ be the probability measure on $\mathcal{X} \times \mathcal{Y}$ and $\mathrm{P}_Z$ be the probability measure induced by the splitting criterion $Z$. Let $\{ V_j \in B_j \}_{j=1}^m$ be a partition of $\mathcal{X}$ and $k_j$ be the number of candidate trees on $V_j$. Suppose that the Bayes decision function 
	$f_{L, \mathrm{P}}^* : \mathcal{X} \to \mathcal{Y}$
	satisfies Assumption \ref{Smoothness} with exponent $\alpha$. Then for all $\tau > 0$ and $\delta \in (0,1)$, with probability $\mathrm{P}_{(X \times Y) \otimes Z}$ at least $1 - 4 e^{- \tau}$, there holds for the parent best-scored random tree \eqref{GlobalDecisionTree} that
	\begin{align*} 
	p^2 (g_{Z}) 
	+ \mathcal{R}_{L, \mathrm{P}} (g_{Z}) 
	- \mathcal{R}_{L, \mathrm{P}}^* 
	\leq
	C n^{-\frac{c_T \alpha}{c_T \alpha (1 + \delta) + 2d}},
	\end{align*}
	where $c_T = 0.22$ and $C$ depending on $\alpha, \tau, \delta, d, m, M$ and $\{r_j, k_j, \mathrm{P}_X (V_j) \}_{j=1}^m$. 
\end{theorem}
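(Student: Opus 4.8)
The plan is to feed a suitable approximation-error bound into the oracle inequality of Theorem~\ref{OracleVPtree} and then optimize the free quantities $\boldsymbol{\lambda}=(\lambda_1,\dots,\lambda_m)$ and the per-cell numbers of splits. Write $\mathcal{E}(Z):=p^2(g^*_Z)+\mathcal{R}_{L,\mathrm{P}}(g^*_Z)-\mathcal{R}^*_{L,\mathrm{P}}$ for the (random, $Z$-dependent) leading term on the right-hand side of Theorem~\ref{OracleVPtree}. First I would reduce $\mathcal{E}(Z)$ to a sum of per-cell errors: using the additivity $\mathcal{R}_{L,\mathrm{P}}=\sum_{j=1}^m\mathcal{R}_{L_j,\mathrm{P}}$, the corresponding additivity of the Bayes risk, the fact that $f^*_{L,\mathrm{P}}$ is also a Bayes function for each $L_j$, the relation $p^2(g^*_Z)=\sum_j\lambda_j(p^*_{Z_j})^2$ and the separability of the regularized population problem over the direct sum $\bigoplus_j\hat{\mathcal{T}}_{Z_j}$, one gets $\mathcal{E}(Z)=\sum_{j=1}^m\bigl(\lambda_j(p^*_{Z_j})^2+\mathcal{R}_{L_j,\mathrm{P}}(g^*_{Z_j})-\mathcal{R}^*_{L_j,\mathrm{P}}\bigr)$. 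Since $g^*_{Z_j}$ minimizes the regularized population risk over $\hat{\mathcal{T}}_{Z_j}$ of \eqref{SpaceK}, its $j$-th summand is no larger than the regularized excess risk of any member of the first candidate space $\hat{\mathcal{T}}_{Z_{j1},p_j}$; taking there the function that equals, on each cell of $\mathcal{A}_{Z_{j1},p_j}$, the $\mathrm{P}_X$-conditional mean of $f^*_{L,\mathrm{P}}$ (which is $[-M,M]$-valued), and bounding the resulting within-cell variance via Assumption~\ref{Smoothness}, that summand is at most $\lambda_j p_j^2+\tfrac14 c_\alpha^2\,\mathbb{E}_{\mathrm{P}_X}\bigl[\eins_{V_j}(X)\,\mathrm{diam}_{\|\cdot\|_1}(A_{Z_{j1},p_j}(X))^{2\alpha}\bigr]$ for every integer $p_j\le M\lambda_j^{-1/2}$.

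The core probabilistic step is then a shrinkage estimate for purely random partitions: because a leaf is chosen uniformly at random at each split, the cell containing a fixed point is split only about $\log p_j$ times, and each such split contracts it in a uniformly random coordinate by a factor whose $2\alpha$-th moment is bounded away from $1$; combined with $V_j\subset B_{r_j}(z_j)$ this should yield a constant $c_{\alpha,d}>0$ with
\begin{align*}
\sup_{x\in V_j}\ \mathbb{E}_{\mathrm{P}_{Z_{j1}}}\,\mathrm{diam}_{\|\cdot\|_1}\!\bigl(A_{Z_{j1},p_j}(x)\bigr)^{2\alpha}\ \le\ c_{\alpha,d}\,r_j^{2\alpha}\,(p_j+1)^{-\frac{c_T\alpha}{2d}},\qquad c_T=0.22 .
\end{align*}
Plugging this in and choosing $p_j\asymp\lambda_j^{-2d/(c_T\alpha+4d)}$ to balance $\lambda_j p_j^2$ against the diameter term (this choice satisfies $p_j\le M\lambda_j^{-1/2}$ once $\lambda_j$ is small since $\tfrac{2d}{c_T\alpha+4d}<\tfrac12$), I would obtain
\begin{align*}
\mathbb{E}_{\mathrm{P}_Z}\,\mathcal{E}(Z)\ \le\ C_1\sum_{j=1}^m\bigl(1+r_j^{2\alpha}\bigr)\,\mathrm{P}_X(V_j)\,\lambda_j^{\frac{c_T\alpha}{c_T\alpha+4d}} .
\end{align*}

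Next I would transfer this from expectation to high probability in $Z$: by Markov's inequality, $\mathcal{E}(Z)\le e^{\tau}\,\mathbb{E}_{\mathrm{P}_Z}\mathcal{E}(Z)$ on an event of $\mathrm{P}_Z$-probability at least $1-e^{-\tau}$. Intersecting this with the event of Theorem~\ref{OracleVPtree} (which holds with conditional probability at least $1-3e^{-\tau}$ for \emph{every} $Z$) and using that the data and $Z$ are independent, Fubini gives a set of $\mathrm{P}_{(X\times Y)\otimes Z}$-probability at least $(1-3e^{-\tau})(1-e^{-\tau})\ge 1-4e^{-\tau}$ on which
\begin{align*}
p^2(g_Z)+\mathcal{R}_{L,\mathrm{P}}(g_Z)-\mathcal{R}^*_{L,\mathrm{P}}\ \le\ C_2\,e^{\tau}\sum_{j=1}^m\lambda_j^{\frac{c_T\alpha}{c_T\alpha+4d}}+3c_{d\delta M}\Bigl(\tfrac1n\sum_{j=1}^m\lambda_j^{-1/2}\Bigr)^{\frac{2}{1+2\delta}}+\tfrac{3456M^2\tau}{n} .
\end{align*}
Finally I would set $\lambda_1=\dots=\lambda_m=\lambda$ and minimize the sum of the first two terms over $\lambda$: writing $a=\tfrac{c_T\alpha}{c_T\alpha+4d}$ and $b=\tfrac1{1+2\delta}$, equating the two exponents of $\lambda$ gives $\lambda\asymp n^{-2b/(a+b)}$, and substituting back yields exactly $C\,n^{-\frac{c_T\alpha}{c_T\alpha(1+\delta)+2d}}$, the term $3456M^2\tau/n$ being of strictly smaller order; all constants accumulated along the way depend only on $\alpha,\tau,\delta,d,m,M$ and $\{r_j,k_j,\mathrm{P}_X(V_j)\}_{j=1}^m$.

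The hard part will be the shrinkage estimate in the second step: for Breiman's purely random partition the uniform-at-random leaf selection makes the tree strongly unbalanced, so cell sizes decay only polynomially — not geometrically — in the number of splits, and one must control this polynomial rate sharply enough, with explicit constants, that the parameter balancing above reproduces the stated exponent (in particular the value $c_T=0.22$). Everything else — the reduction to the approximation error, the Markov/Fubini bookkeeping, and the one-parameter optimization over $\lambda$ — should be routine.
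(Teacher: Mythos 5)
Your overall route is the paper's: feed an approximation-error bound of exactly the form of Proposition~\ref{ApproximationError} into the oracle inequality of Theorem~\ref{OracleVPtree}, take all $\lambda_j$ equal, and balance $\lambda^{c_T\alpha/(c_T\alpha+4d)}$ against $(n^{-1}\sum_j\lambda_j^{-1/2})^{2/(1+2\delta)}$; your exponent algebra and the choice $\lambda\asymp n^{-(c_T\alpha+4d)/(c_T\alpha(1+\delta)+2d)}$ match the paper, and your Markov-plus-Fubini bookkeeping for the $1-4e^{-\tau}$ probability is a legitimate (slightly cruder in the $k_j$- and $\tau$-dependence) substitute for the paper's union bound over cells and candidates.

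The one step that does not go through as written is the shrinkage estimate $\sup_{x}\mathbb{E}_{Z}\,\mathrm{diam}(A_{Z,p}(x))^{2\alpha}\le c\,r^{2\alpha}(p+1)^{-c_T\alpha/(2d)}$ with $c_T=0.22$. For $2\alpha\le 1$ this follows from the first-moment bound via Jensen, but for $\alpha>1/2$ the natural moment computation gives a strictly worse exponent: the saturation-level argument (Lemma~\ref{Saturation}) yields $\mathbb{E}_Z\rho_\alpha^{K_Z}\lesssim p^{-c_T(1-\rho_\alpha)/d}+p^{-(1-c_T(1+\log(2e/c_T)))}$ with $\rho_\alpha=\mathbb{E}\max\{U,1-U\}^{2\alpha}$, and already for $d=1$, $\alpha=1$ the tail term $1-c_T(1+\log(2e/c_T))\approx 0.074$ falls below the claimed $c_T\alpha/(2d)=0.11$, so the value $c_T=0.22$ cannot be recovered this way. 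The paper avoids this entirely by reversing the order of operations: it bounds only the \emph{first} moment $\mathbb{E}_Z\max_{A}\mathrm{diam}(A)\le K r_j d\,p^{-c_T/(4d)}$, applies Markov in $Z$ to get an event of probability $1-e^{-\theta}$ on which \emph{every} cell has diameter at most the deterministic threshold $h=Kr_jde^{\theta}p^{-c_T/(4d)}$, and only then raises $h$ to the power $2\alpha$, which delivers $p^{-c_T\alpha/(2d)}$ for all $\alpha$ with the single constant $c_T=0.22$. Replacing your in-expectation $2\alpha$-moment bound by this Markov-then-power argument repairs the step; everything else in your proposal then closes as planned.
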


\subsection{Learning Rates for Two-stage Best-scored Random Forest}
We now present the main result on the learning rates for two-stage best-scored random forest in \eqref{TBRF}. This diverse and also accurate ensemble forest is based on the collection of parent best-scored random trees generated by different feature space partition.

\begin{theorem} \label{RateVPForest}
	Let $L : \mathcal{X} \times \mathcal{Y} \to [0, \infty)$ be the least squares loss, $\mathrm{P}_{X \times Y} : = \mathrm{P}$ be the probability measure on $\mathcal{X} \times \mathcal{Y}$ and $\mathrm{P}_Z$ be the probability measure induced by the splitting criterion $Z$. Let the collection of $T$ different partitions that generate the ensemble be $V : = \{V^t\}_{t=1}^T : = \{\{V_j^t\}_{j=1}^m\}_{t=1}^T$ and $k_j^t$ be the number of candidate trees on $V_j^t$. Suppose that the Bayes decision function $f_{L, \mathrm{P}}^* : \mathcal{X} \to \mathcal{Y}$ satisfies Assumption \ref{Smoothness} with exponent $\alpha$. Then, for all $\tau > 0$ and $\delta \in (0,1)$, with probability $\mathrm{P}_{(X \times Y) \otimes Z}$ at least $1 - 4 e^{- \tau}$, there holds
	\begin{align*}
	\mathcal{R}_{L, \mathrm{P}} (f_{Z}) - \mathcal{R}_{L, \mathrm{P}}^*
	\leq C n^{-\frac{c_T \alpha}{c_T \alpha (1 + \delta) + 2d}},
	\end{align*}
	where $C$ depending on $\alpha, \tau, \delta, d, m, M, T, \{\{r_j^t, k_j^t, \mathrm{P}_X (V_j^t) \}_{j=1}^m\}_{t=1}^T$ and $c_T = 0.22$.
\end{theorem}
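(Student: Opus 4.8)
The plan is to derive the forest rate directly from the per-tree result in Theorem~\ref{RateVPTree} by exploiting convexity of the least squares risk. The key observation is that $f_Z = \frac{1}{T}\sum_{t=1}^T g_{Z_t}$ is a convex combination of the $T$ parent best-scored random trees, and that for the least squares loss the excess risk $\mathcal{R}_{L,\mathrm{P}}(\cdot) - \mathcal{R}_{L,\mathrm{P}}^*$ equals the squared $L_2(\mathrm{P}_X)$-distance to the Bayes function $f_{L,\mathrm{P}}^*$. Hence, writing $\|g\|:=\|g - f_{L,\mathrm{P}}^*\|_{L_2(\mathrm{P}_X)}$, we have
\begin{align*}
\mathcal{R}_{L,\mathrm{P}}(f_Z) - \mathcal{R}_{L,\mathrm{P}}^*
= \Big\| \tfrac{1}{T}\sum_{t=1}^T g_{Z_t} - f_{L,\mathrm{P}}^* \Big\|^2
\leq \frac{1}{T}\sum_{t=1}^T \| g_{Z_t} - f_{L,\mathrm{P}}^* \|^2
= \frac{1}{T}\sum_{t=1}^T \big( \mathcal{R}_{L,\mathrm{P}}(g_{Z_t}) - \mathcal{R}_{L,\mathrm{P}}^* \big),
\end{align*}
using Jensen's inequality for the convex map $u \mapsto \|u\|^2$. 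So it suffices to bound each individual summand.

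First I would invoke Theorem~\ref{RateVPTree} for each fixed $t \in \{1,\dots,T\}$: since the $t$-th parent tree $g_{Z_t}$ is built on the partition $V^t = \{V_j^t\}_{j=1}^m$ with candidate numbers $\{k_j^t\}$, the theorem gives, with probability $\mathrm{P}_{(X\times Y)\otimes Z_t}$ at least $1 - 4e^{-\tau'}$,
\begin{align*}
p^2(g_{Z_t}) + \mathcal{R}_{L,\mathrm{P}}(g_{Z_t}) - \mathcal{R}_{L,\mathrm{P}}^*
\leq C_t\, n^{-\frac{c_T\alpha}{c_T\alpha(1+\delta)+2d}},
\end{align*}
where $C_t$ depends on $\alpha,\tau',\delta,d,m,M$ and $\{r_j^t,k_j^t,\mathrm{P}_X(V_j^t)\}_{j=1}^m$. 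Dropping the nonnegative term $p^2(g_{Z_t})$ and taking $C' := \max_{1\le t\le T} C_t$ yields the same rate with a uniform constant for every $t$.

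Next I would handle the probabilistic bookkeeping. To make all $T$ bounds hold simultaneously I would set $\tau' := \tau + \log T$ and apply a union bound over $t = 1,\dots,T$; since the partitions (and hence the criteria $Z_t$) are drawn independently, the failure probability of the intersection event is at most $\sum_{t=1}^T 4e^{-\tau'} = 4 T e^{-\tau-\log T} = 4e^{-\tau}$, matching the claimed confidence $1 - 4e^{-\tau}$. The replacement $\tau \mapsto \tau+\log T$ only changes the constant (absorbing a $\log T$ dependence into $C$), which is harmless since $C$ is already allowed to depend on $T$. On this good event, averaging the $T$ per-tree inequalities and combining with the Jensen step above gives
\begin{align*}
\mathcal{R}_{L,\mathrm{P}}(f_Z) - \mathcal{R}_{L,\mathrm{P}}^*
\leq \frac{1}{T}\sum_{t=1}^T \big( \mathcal{R}_{L,\mathrm{P}}(g_{Z_t}) - \mathcal{R}_{L,\mathrm{P}}^* \big)
\leq C\, n^{-\frac{c_T\alpha}{c_T\alpha(1+\delta)+2d}},
\end{align*}
with $C$ depending on all the quantities listed in the statement, which is exactly the assertion.

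The step I expect to require the most care is not the convexity argument (which is essentially immediate for the squared loss) but the probabilistic coupling: one must be precise about which product measure the high-probability statement of Theorem~\ref{RateVPTree} refers to, and verify that the independence across the $T$ partition draws genuinely lets the union bound go through with only an additive $\log T$ penalty in $\tau$ rather than a multiplicative blow-up. A secondary subtlety is that the constant $C$ must be taken as the maximum over $t$ of the per-tree constants $C_t$; this is legitimate precisely because $C$ is permitted to depend on the full collection $\{\{r_j^t,k_j^t,\mathrm{P}_X(V_j^t)\}_{j=1}^m\}_{t=1}^T$ as stated. Everything else is routine.
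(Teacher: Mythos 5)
Your proposal is correct and follows essentially the same route as the paper: the excess risk of the forest is bounded by the average of the per-tree excess risks via convexity of the squared $L_2(\mathrm{P}_X)$-distance (the paper phrases this as Cauchy--Schwarz, you as Jensen), and then Theorem~\ref{RateVPTree} is applied to each tree with a union bound. In fact your bookkeeping is slightly more careful than the paper's, which bounds the failure probability by $4Te^{-\tau}$ and then states the confidence as $1-4e^{-\tau}$ without making explicit the rescaling $\tau \mapsto \tau + \log T$ that you spell out.
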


According to the proof related to Theorem \ref{RateVPForest}, we find that the coefficient $C$ may decrease with the number of trees in the forest $T$ increasing. In other words, in theory, more trees may lead to smoother forest predictor and therefore, better learning rates. Moreover, this phenomenon is also supported by the experimental results shown later in Figure \ref{fig:syn_TBRF} where the predictor becomes smoother and has a better fit when $T$ increases.

\subsection{Comments and Discussions}
In this subsection, we present some comments and discussions on the obtained theoretical results on the oracle inequality, learning rates for the parent random trees and then for the two-stage best-scored random forest.

We highlight that our two-stage best-scored random forest algorithm aims at dealing with regression problems with enormous amount of data. To begin with, in the literature, vertical methods to deal with large-scale regression problem have gained its popularity owing to its capability of parallel computing. In this paper, we adopt a decision-tree like feature space splitting criterion named the adaptive random partition which is defined as the \emph{partition in stage one}. 
Moreover, the following partitions for conducting random trees on the resulting cells from stage one is called the \emph{partitions in stage two}, and  they follow a purely random splitting criterion.
In the literature, classical splitting criteria
such as information gain, information gain ratio and Gini index
have been scrutinized mostly from the perspective of experimental performance, while there are only a few of them concerning with theoretical learning rates, such as \cite{biau12a} and \cite{scornet15a}.
However, the conditions under which their learning rates are derived are too strong to testified in practical.
Compared to these classical splitting criteria, our purely random splitting criterion achieves satisfying learning rates only with some descriptions of the smoothness of the Bayes decision functions.

Second, we propose a novel idea in our model selection process, which is denominated as the \emph{best-scored} method. To clarify, choosing one random tree with the best regression performance out of several candidates helps to improve the accuracy of the base predictors. For a certain order of number of splits $p$, when the number of candidates $k$ is large enough, the function space generated by those trees will also be large enough to cover sufficient possible partition results. Consequently, the probability is high for us to choose the random tree with the best performance, which will lead to a remarkably small approximation error.

Third, the learning rate of one parent best-scored random tree is $O(n^{-c_T\alpha/(c_T\alpha(1+\delta)+2d)})$ 
and the learning rate of the two-stage best-scored random forest is with the same order. Here, we should notice that due to the intrinsic randomness of our splitting criterion, for a $p$-split random tree, the effective number of splits for each dimension is approximately $c_T\log p$ rather than $\log p$, where we take $c_T=0.22$. Moreover, since $\delta$ is concerned with the capacity of the partition function space and our function space is not that large, we can take $\delta$ as small as possible, even close to $0$.

In the machine learning literature, all kinds of vertical or horizontal regression methods have been studied extensively and understood. 
For example,
a vertical-like method mixing $k$-NN and SVM for regression is theoretically scrutinized by \cite{hable13a}. In his paper, 
for every testing point, the global SVM is applied to the $k$ nearest neighbors instead of to the whole training data. Moreover, a universal risk-consistency is provided.
In \cite{meister16a}, they derive the learning rate $O(n^{-2\alpha/(2\alpha+d)+\xi})$ of the localized SVM when the Bayes decision function is in a Besov-like space with $\alpha$-degrees of smoothness. 
As for large-scale regression problem with horizontal methods, \cite{zhang15a} proposes a divide and conquer kernel ridge regression and provide learning rates with respect to different kernels. With the Bayes decision function in the corresponding reproducing kernel Hilbert space (RKHS), they obtain a learning rate of $O(r/n)$ for kernels with finite rank $r$ and a learning rate of $O(n^{-2\nu/(2\nu+1)})$ for kernels in a Sobolev space with $\nu$-degrees of smoothness. Both of these prediction rates are minimax-optimal learning rates. 
\cite{lin17a} conducts a distributed learning with the least squares regularization scheme in a RKHS and obtains the almost optimal learning rates in expectation which are $O(n^{-2\alpha r/(4\alpha r + 1)})$. 
The learning rates are established 
under the smoothness assumption with respect to the $r$-th power of the integral operator $L_k$ and an $\alpha$-related capacity assumption.
\cite{guo17a} focuses on the distributed regression with bias corrected regularization kernel network and also obtains a learning rates of order $O(n^{-2r/(2r+\beta)})$ where $\beta$ is the capacity related parameter.
According to the above analysis, it can be seen that the work presented in our study has not only innovations but also complete theoretical supports.

\section{Error Analysis}\label{sec::ErrorAnalysis}
In this section, we give error analysis by bounding the approximation error term and the sample error term, respectively. 

\subsection{Bounding the Approximation Error Term}

Denote the population version of the parent best-scored random tree as
$$
g^*_{Z} : = \sum_{j=1}^m g^*_{Z_j}
$$
with $g^*_{Z_j}$ as in \eqref{BestScoreMinimizerPopulation}. 
The following theoretical result on bounding the approximation error term shows that, 
under smoothness assumptions for the Bayes decision function, 
the regularized approximation error
possesses a polynomial decay with respect to each
regularization parameter $\lambda_j$.

\begin{proposition} \label{ApproximationError}
	Let $L : \mathcal{X} \times \mathcal{Y} \to [0, \infty)$ be the least squares loss, 
	$\mathrm{P}_{X \times Y} : = \mathrm{P}$ be the probability measure on $\mathcal{X} \times \mathcal{Y}$ with marginal distribution $\mathrm{P}_X$,
	$\mathrm{P}_Z$ be the probability measure induced by the splitting criterion $Z$.
	Assume that $\{ V_j \in B_j\}_{j=1}^m$ is a partition of $\mathcal{X}$ and
	$k_j$ is the number of candidate trees on each $V_j$. 
	Suppose that the Bayes decision function 
	$f_{L, \mathrm{P}}^* : \mathcal{X} \to \mathcal{Y}$
	satisfies Assumption \ref{Smoothness} with exponent $\alpha$. Then, for any fixed $\tau > 0$ and $\lambda : = (\lambda_1, \ldots, \lambda_m) > 0$, with probability $\mathrm{P}_Z$ at least $1 - e^{- \tau}$, there holds that
	\begin{align*}
	p^2 (g^*_{Z}) + \mathcal{R}_{L, \mathrm{P}}(g^*_{Z}) - \mathcal{R}_{L, \mathrm{P}}^* 
	\leq
	c_{\alpha d}
	\sum_{j=1}^m
	\Bigl( \mathrm{P}_X(V_j) r_j^{2 \alpha} m^{2 \alpha / k_j} e^{2 \alpha \tau / k_j} \Bigr)^{\frac{4 d}{c_T \alpha + 4 d}}
	\lambda_j^{\frac{c_T \alpha}{c_T \alpha + 4 d}},
	\end{align*}
	where $c_{\alpha d}$ is a constant depending on $\alpha$ and $d$, $c_T = 0.22$ and $K$ is a universal constant.
\end{proposition}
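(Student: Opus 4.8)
The plan is to split the bound into $m$ (conditionally) independent per-cell problems, control each by converting excess risk into an $L_2$ approximation error and then into cell diameters, treat the best-scored minimum over the $k_j$ candidates by a Markov-plus-union-bound argument, and optimize the still-free number of splits $p$ only at the very end. Concretely, since $g^*_Z=\sum_{j=1}^m g^*_{Z_j}$ with each $g^*_{Z_j}$ the zero-extension of a function supported on $V_j$, and since for the least squares loss $\mathcal{R}_{L,\mathrm{P}}(g)-\mathcal{R}_{L,\mathrm{P}}^*=\|g-f^*_{L,\mathrm{P}}\|_{L_2(\mathrm{P}_X)}^2$ is additive over the partition $\{V_j\}_{j=1}^m$, while $p^2(g^*_Z)=\sum_{j=1}^m\lambda_j p^2(g^*_{Z_j})$ by \eqref{GlobalDepth} and $f^*_{L,\mathrm{P}}$ is also a Bayes decision function for each $L_j$, one gets
\begin{align*}
p^2(g^*_Z)+\mathcal{R}_{L,\mathrm{P}}(g^*_Z)-\mathcal{R}_{L,\mathrm{P}}^*=\sum_{j=1}^m\Bigl(\lambda_j p^2(g^*_{Z_j})+\mathcal{R}_{L_j,\mathrm{P}}(g^*_{Z_j})-\mathcal{R}_{L_j,\mathrm{P}}^*\Bigr),
\end{align*}
so it suffices to bound the $j$-th summand. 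By \eqref{BestScoreMinimizerPopulation} together with the nesting of the minimizations defining $g^*_{Z_{js}}$, for \emph{every} $p\in\mathbb{N}$ this summand is at most $\lambda_j p^2+\min_{1\le s\le k_j}\bigl(\inf_{g\in\hat{\mathcal{T}}_{Z_{js},p}}\mathcal{R}_{L_j,\mathrm{P}}(g)-\mathcal{R}_{L_j,\mathrm{P}}^*\bigr)$, with $p$ to be chosen last.

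For fixed $Z_{js}$ and $p$, the infimum over $\hat{\mathcal{T}}_{Z_{js},p}$ is realized by the cellwise conditional mean of $f^*_{L,\mathrm{P}}$ (which is $[-M,M]$-valued, so the box constraint is inactive) and equals $\int_{V_j}\bigl|\mathbb{E}(f^*_{L,\mathrm{P}}\mid A_{Z_{js},p}(x))-f^*_{L,\mathrm{P}}(x)\bigr|^2\,d\mathrm{P}_X(x)$; Assumption \ref{Smoothness} bounds this by $c_\alpha^2\int_{V_j}\mathrm{diam}_1(A_{Z_{js},p}(x))^{2\alpha}\,d\mathrm{P}_X(x)$. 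The analytically substantial ingredient is the diameter estimate for the purely random partition: since $V_j\subset B_{r_j}(z_j)$, one establishes for each $x\in V_j$ bounds of the form $\mathbb{E}_Z[\mathrm{diam}_1(A_{Z,p}(x))]\le c_d\,r_j\,p^{-c_T/(4d)}$ and $\mathbb{E}_Z[\mathrm{diam}_1(A_{Z,p}(x))^{2\alpha}]\le c_d\,r_j^{2\alpha}\,p^{-c_T\alpha/(2d)}$, with $c_T=0.22$ arising from the expected number of effective coordinatewise cuts among the first $p$ splits along the path to a leaf (at each step the leaf to be cut is uniform among the current leaves, the coordinate is equiprobable, and the cut ratio is uniform on $(0,1)$). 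Deriving this estimate — the recursion for the side-length distribution of the cell containing $x$, and the pinning down of the constant $c_T$ — is the step I expect to be the main obstacle of the whole proof.

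For the best-scored minimum I would exploit independence of $Z_{j1},\dots,Z_{jk_j}$: with $\bar D_s(p):=\mathrm{P}_X(V_j)^{-1}\int_{V_j}\mathrm{diam}_1(A_{Z_{js},p}(x))\,d\mathrm{P}_X(x)$, Markov's inequality gives $\mathrm{P}_Z(\bar D_s(p)>t\,\mathbb{E}\bar D_s(p))\le 1/t$, hence $\mathrm{P}_Z(\min_s\bar D_s(p)>t\,\mathbb{E}\bar D_s(p))\le t^{-k_j}$; choosing $t=(me^\tau)^{1/k_j}$ and a union bound over $j=1,\dots,m$ yields, with $\mathrm{P}_Z$-probability at least $1-e^{-\tau}$, that $\min_{1\le s\le k_j}\bar D_s(p)\le c_d\,r_j\,p^{-c_T/(4d)}(me^\tau)^{1/k_j}$ for all $j$ simultaneously. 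When $2\alpha\le 1$, Jensen's inequality $\int_{V_j}\mathrm{diam}_1(A_{Z_{js},p}(x))^{2\alpha}\,d\mathrm{P}_X\le\mathrm{P}_X(V_j)^{1-2\alpha}\bigl(\int_{V_j}\mathrm{diam}_1(A_{Z_{js},p}(x))\,d\mathrm{P}_X\bigr)^{2\alpha}$ then bounds the minimizing term in the per-cell estimate by $c_\alpha^2 c_d^{2\alpha}\,\mathrm{P}_X(V_j)\,r_j^{2\alpha}\,p^{-c_T\alpha/(2d)}\,m^{2\alpha/k_j}e^{2\alpha\tau/k_j}$; when $2\alpha>1$ one runs the identical Markov/union argument directly on $\int_{V_j}\mathrm{diam}_1(A_{Z_{js},p}(x))^{2\alpha}\,d\mathrm{P}_X$ and absorbs the weaker factor using $m^{1/k_j}e^{\tau/k_j}\le m^{2\alpha/k_j}e^{2\alpha\tau/k_j}$, reaching the same bound. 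The point of this ordering — Markov on the first power of the diameter, then the Hölder exponent — is precisely that it places $2\alpha$ on the $(me^\tau)^{1/k_j}$ factor.

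Writing $A_j:=c_{\alpha d}\,\mathrm{P}_X(V_j)\,r_j^{2\alpha}\,m^{2\alpha/k_j}e^{2\alpha\tau/k_j}$, the above shows that on this event the $j$-th summand is at most $\lambda_j p^2+A_j\,p^{-c_T\alpha/(2d)}$ for every $p\in\mathbb{N}$. Choosing $p\asymp(A_j/\lambda_j)^{2d/(c_T\alpha+4d)}$, which balances the two terms (rounding to an integer in the admissible range $p\le M\lambda_j^{-1/2}$ only changes constants), yields the per-cell bound $\lesssim A_j^{4d/(c_T\alpha+4d)}\lambda_j^{c_T\alpha/(c_T\alpha+4d)}$; summing over $j$ and absorbing absolute constants into $c_{\alpha d}$ gives exactly the asserted inequality, with the claimed failure probability $e^{-\tau}$ coming from the single union bound above.
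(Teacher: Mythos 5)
Your overall architecture coincides with the paper's: the same per-cell decomposition, the same identification of the population minimizer on a fixed partition as the cellwise average of $f^*_{L,\mathrm{P}}$, the H\"older reduction to cell diameters, Markov's inequality combined with independence of the $k_j$ candidates to produce the $(me^\tau)^{2\alpha/k_j}$ factor at overall failure probability $e^{-\tau}$, and the final balancing of $\lambda_j p^2$ against $A_j p^{-c_T\alpha/(2d)}$ yielding the exponents $4d/(c_T\alpha+4d)$ and $c_T\alpha/(c_T\alpha+4d)$. The only organizational difference is minor: the paper applies Markov to $\max_{A\in\mathcal{A}}\mathrm{diam}(A)$ so that on the good event every cell has diameter at most $h$ and the bound $\mathrm{P}_X(V_j)h^{2\alpha}$ is immediate, whereas you apply Markov to the integrated diameter and then Jensen; both work.

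There is, however, a genuine gap, and you have flagged it yourself: the decay estimate $\mathbb{E}_Z\bigl[\mathrm{diam}(A_{Z,p}(x))\bigr]\lesssim r_j\,p^{-c_T/(4d)}$ with $c_T=0.22$ is asserted rather than proved, and it is the entire analytic content of the proposition --- everything else is bookkeeping. Note also that the ``recursion for the side-length distribution of the cell containing $x$'' you anticipate is not how the paper proceeds, and it is harder than it looks: since the leaf to be split at each step is chosen uniformly among \emph{all} current leaves, the number of refinements experienced by any particular cell is itself random, and controlling its lower tail uniformly over cells is precisely the difficulty. The paper resolves this by (i) identifying the split history with a random binary search tree on $2p+1$ nodes and invoking Devroye's saturation-level bound (Lemma \ref{Saturation}) to guarantee that, except on an event of probability $O\bigl(p^{c_T(1+\log(2e/c_T))-1}\bigr)$, every leaf is generated by at least $\lfloor c_T\log(2p+1)\rfloor$ splits; (ii) conditioning on that depth $T_Z$ and using that each split hits coordinate $i$ with probability $1/d$ and shrinks the relevant side by a factor $\max\{U,1-U\}$ of mean $3/4$, whence $\mathbb{E}_Z[\max_{A}V_i(A)]\le 2r_j\,\mathbb{E}(1-1/(4d))^{T_Z}$; and (iii) choosing $c_T=0.22$ as essentially the largest constant for which the exceptional term $p^{c_T(1+\log(2e/c_T))-1}$ is dominated by $p^{-c_T/(4d)}$ for all $d\ge 1$. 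Without this input your argument does not close.
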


\subsection{Bounding the Sample Error Term}

To establish the bounds on the sample error, we give four descriptions of the capacity of the function set in Definition \ref{VCdimension}, Definition \ref{covering numbers}, Definition \ref{entropy numbers} and Definition \ref{RademacherDefinition}. Then, we should analyze on the complexity of the regression function set so as to derive the sample error bounds. More specifically, the complexity of the random forest function set comes from two aspects which are one induced by the feature space partition and the other induced by value assignment. 

Firstly, we consider the complexity induced by partition. 
In that case, we might scrutinize the situation where there is a binary value assignment, i.e.~$\{-1, 1\}$. 
Particularly, we need to focus on its VC dimension (Lemma \ref{VCIndex}), covering numbers (Lemma \ref{BpTpCoveringNumbers}) and entropy numbers (Lemma \ref{HrEntropyNumber}). 
Secondly, there exists a relationship in terms of empirical Rademacher average between the binary value assignment induced complexity and the continuous value assignment induced complexity. Therefore, we are able to derive the empirical Rademacher average for regression in Lemma \ref{Rademacher}.

\begin{definition}[VC dimension] \label{VCdimension}
Let $\mathcal{B}$ be a class of subsets of $\mathcal{X}$ and $A \subset \mathcal{X}$ be a finite set.
The trace of $\mathcal{B}$ on $A$ is defined by $\{ B \cap A : B \in  \mathcal{B} \}$. 
Its cardinality is denoted by $\Delta^{\mathcal{B}}(A)$. 
We say that $\mathcal{B}$ shatters $A$ if $\Delta^{\mathcal{B}}(A) = 2^{\#(A)}$,
that is, if for every $\tilde{A} \subset A$, there exists a $B \subset \mathcal{B}$ such that
$\tilde{A} = B \cap A$. For $k \in \mathbb{N}$, let 
$$
m^{\mathcal{B}}(k)  := \sup_{A \subset \mathcal{X}, \, \#(A) = k}  \Delta^{\mathcal{B}}(A).
$$
Then, the set $\mathcal{B}$ is a Vapnik-Chervonenkis class if there exists $k < \infty$ such that $m^{\mathcal{B}}(k) < 2^k$ and the minimal of such $k$ is called the \emph{VC dimension} of $\mathcal{B}$, 
and abbreviated as $\mathrm{VC}(\mathcal{B})$.
\end{definition}

\begin{definition}[Covering Numbers] \label{covering numbers}
	Let $(X, d)$ be a metric space, $A \subset X$ and $\varepsilon > 0$.
	We call $\tilde{A} \subset A$ an $\varepsilon$-net of $A$ 
	if for all $x \in A$ there exists an $\tilde{x} \in \tilde{A}$ such that $d(x, \tilde{x}) \leq \varepsilon$.
	Moreover.
	the $\varepsilon$-covering number of $A$ is defined as
	\begin{align*}
	\mathcal{N} (A, d, \varepsilon) 
	= \inf \biggl\{ n \geq 1 : \exists x_1, \ldots, x_n \in X \text{ such that } A \subset \bigcup_{i=1}^n B_d (x_i, \varepsilon) \biggr\},
	\end{align*}
	where $B_d(x, \varepsilon)$ denotes the closed ball in $X$ centered at $x$ with radius $\varepsilon$.
\end{definition}

\begin{definition}[Entropy Numbers] \label{entropy numbers}
	Let $(X, d)$ be a metric space, $A \subset X$ and $n \geq 1$ be an integer.
	The $n$-th entropy number of $(A, d)$ is defined as
	\begin{align*}
	e_n(A, d) = \inf \biggl\{ \varepsilon > 0 : \exists x_1, \ldots, x_{2^{n-1}} \in X \text{ such that } A \subset \bigcup_{i=1}^{2^{n-1}} B_d(x_i, \varepsilon) \biggr\}.
	\end{align*}
\end{definition}

\begin{definition}[Empirical Rademacher Average] \label{RademacherDefinition}
Let $\{\varepsilon_i\}_{i=1}^m$ be a Rademacher sequence with respect to some distribution $\nu$,
that is, a sequence of i.i.d.~random variables such that 
$\nu(\varepsilon_i = 1) = \nu(\varepsilon_i = -1) = 1/2$.
The $n$-th empirical Rademacher average of $\mathcal{F}$ is
defined as
\begin{align*}
\mathrm{Rad}_\mathrm{D} (\mathcal{F}, n)
:= \mathbb{E}_{\nu} \sup_{h \in \mathcal{F}} 
\biggl| \frac{1}{n} \sum_{i=1}^n \varepsilon_i h(x_i) \biggr|.
\end{align*}
\end{definition}

Here, we first analyze the complexity of the function set of binary value assignment case. For fixed $p \in \mathbb{N}$,
we denote the collection of trees with number of splits $p$ as
\begin{align} \label{TpSpace}
\tilde{\mathcal{T}}_p : = \bigg\{\sum_{j=0}^p c_j \eins_{A_j} : c_j \in \{-1, 1\}, \bigcup_{j=0}^p A_j = \mathcal{X},
A_s \cap A_t = \emptyset, s \neq t \bigg\},
\end{align}
where we should emphasize that all trees in \eqref{TpSpace} must follow our specific construction procedure described in Section \ref{sec::RandomPartition} and \ref{sec::AdaptivePartition}.
It can be verified that the nested relation $\tilde{\mathcal{T}}_{q} \subset \tilde{\mathcal{T}}_p$ for $q \leq p$ holds. In the following analysis, for convenience sake,
we need to reformulate the definition of $\tilde{\mathcal{T}}_p$. 
Let $p \in \mathbb{N}$ be fixed.
Let $\mathcal{\pi}$ be a partition of $\mathcal{X}$ with number of splits $p$ 
and $\mathcal{\pi}_p$ denote the family of all partitions $\mathcal{\pi}$.
Moreover, we define
\begin{align} \label{Bp}
\mathcal{B}_p
:= \biggl\{ B = \bigcup_{j \in J} A_j : J \subset \{0, 1, \ldots, p\}, A_j \in \mathcal{\pi}
\in  \mathcal{\pi}_p \biggr\}.
\end{align}
Then, for all $g \in \tilde{\mathcal{T}}_p$, 
there exists some $B \in \mathcal{B}_p$ such that
$g$ can be written as $g = \boldsymbol{1}_B - \eins_{B^c}$.
Therefore, $\tilde{\mathcal{T}}_p$ can be equivalently defined as
\begin{align} \label{Tp2}
\tilde{\mathcal{T}}_p := \bigl\{ \eins_B - \eins_{B^c} : B \in \mathcal{B}_p \bigr\}.
\end{align}
Now, we are able to give the VC dimension of $\mathcal{B}_p$ as follows:
\begin{lemma} \label{VCIndex}
	The VC dimension of $\mathcal{B}_p$ can be upper bounded by $d p + 2$. 
\end{lemma}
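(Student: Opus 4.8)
The plan is to bound the VC dimension of $\mathcal{B}_p$ by tracking how many distinct subsets of a finite point set $A \subset \mathcal{X}$ can be produced by elements of $\mathcal{B}_p$, and showing that if $\#(A) = dp+3$ this count is strictly less than $2^{dp+3}$. Recall from \eqref{Bp} that every $B \in \mathcal{B}_p$ is a union of cells of some $p$-split partition $\pi \in \pi_p$, where each partition is built by the recursive axis-parallel construction of Section \ref{sec::RandomPartition}: at each of the $p$ steps one picks a leaf, picks one of the $d$ coordinates, and places a single axis-parallel hyperplane through that leaf. So the combinatorial content of $\mathcal{B}_p$ is entirely controlled by the arrangement of at most $p$ axis-parallel hyperplanes (more precisely, each split is a hyperplane restricted to a leaf, but for an upper bound on shattering we may relax to full hyperplanes).

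The key step is the following counting argument. Fix a finite set $A$ with $\#(A) = k$. A single axis-parallel hyperplane $\{x_\ell = c\}$ splits $A$ into at most $k+1$ ``intervals'' along coordinate $\ell$, hence realizes at most $k+1 \le 2k$ distinct dichotomies of $A$ into the two halfspaces it defines; more crudely, the family of all axis-parallel halfspaces has VC dimension at most... — actually the clean route is: the class of sets cut out by a single coordinate threshold has growth function at most $k+1$ in each of $d$ directions, so at most $d(k+1)$ dichotomies total from one split. After $p$ successive splits we obtain a partition into at most $p+1$ cells, and the resulting partition of $A$ is determined by the choices made; a union-of-cells set $B$ then picks an arbitrary subset of those $\le p+1$ blocks. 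I would bound $\Delta^{\mathcal{B}_p}(A)$ by (number of distinct partitions of $A$ inducible by the $p$-split procedure) times $2^{p+1}$, and bound the former by a product over the $p$ steps of the per-step factor. Combining, $\Delta^{\mathcal{B}_p}(A) \le \big(d(k+1)\big)^p \cdot 2^{p+1}$ or a similarly shaped bound; one then checks that with $k = dp+3$ this is $< 2^{dp+3}$, which forces $\mathrm{VC}(\mathcal{B}_p) \le dp+2$. Alternatively — and this is probably the cleanest execution — I would invoke the standard fact that the class of unions of cells of an arrangement of $p$ axis-parallel hyperplanes in $\mathbb{R}^d$ has VC dimension bounded by $dp + $ const, since each hyperplane is specified by one of $d$ coordinates plus a threshold, giving $p$ ``degrees of freedom'' per coordinate direction, and then using a Sauer–Shelah / degrees-of-freedom argument to get the additive constant down to exactly $2$.

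The main obstacle is handling the fact that the splits in our construction are \emph{not} global hyperplanes but hyperplanes confined to the current leaf, arranged in a tree structure rather than a free arrangement. This restriction only \emph{reduces} the number of realizable partitions, so it cannot hurt an upper bound; the care needed is to make the reduction rigorous — i.e., to argue that every partition producible by the recursive leaf-splitting procedure is refined by (or equal to) some partition producible by $p$ free axis-parallel hyperplanes, and that the union-of-cells families are correspondingly nested. Once that monotonicity is in place, the remaining work is the elementary growth-function computation and the arithmetic check that $dp+3$ points cannot be shattered, yielding the stated bound $\mathrm{VC}(\mathcal{B}_p) \le dp + 2$.
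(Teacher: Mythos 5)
Your approach is genuinely different from the paper's: the paper argues by explicit geometric construction, exhibiting a configuration of $dp+2$ points ($p$ parallel hyperplanes of $d$ same-labelled points each, with labels alternating between adjacent hyperplanes, flanked by two extra points) that no $p$-split partition can realize, whereas you propose a growth-function count. Unfortunately, both concrete routes you sketch break down. First, the bound $\Delta^{\mathcal{B}_p}(A) \le \bigl(d(k+1)\bigr)^p 2^{p+1}$ does not satisfy the required inequality: already for $d=1$, $p=2$, $k=dp+3=5$ it gives $(1\cdot 6)^2\cdot 2^3 = 288$, which far exceeds $2^5=32$; in general the bound is of size $\exp\bigl(\Theta(p\log(dk))\bigr)$, so for every fixed $d$ it eventually exceeds $2^{dp+3}$ as $p$ grows. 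A per-step dichotomy count of this shape can only ever deliver a VC bound of order $p\log(dp)$, which is weaker than $dp+2$ precisely in the regime $\log p \gtrsim d$; the arithmetic ``check'' you defer is therefore not a formality but the place where the argument fails.

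Second, the ``standard fact'' you propose to invoke instead is false. If you relax the leaf-restricted splits to $p$ free axis-parallel hyperplanes, with $p_i$ of them orthogonal to coordinate $i$, the arrangement has $\prod_{i=1}^d (p_i+1)$ cells --- up to roughly $(p/d+1)^d$ --- and placing one point of $A$ in each cell shows that unions of cells of a free arrangement shatter sets of that size. So the relaxed class has VC dimension polynomial of degree $d$ in $p$, not $dp+\mathrm{const}$. Your monotonicity observation (that $\mathcal{B}_p$ is contained in the free-arrangement class) is correct, but the upper bound it transfers is far too weak; any proof of $dp+2$ must exploit the fact that a recursive $p$-split partition has only $p+1$ cells, which is exactly the structure the relaxation throws away. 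A smaller additional point: under Definition \ref{VCdimension} the VC dimension is the \emph{minimal} $k$ for which no $k$-point set is shattered, so to conclude $\mathrm{VC}(\mathcal{B}_p)\le dp+2$ you must rule out shattering of sets of size $dp+2$, not $dp+3$.
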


After establishing the bound of the VC dimension, we can then give the covering numbers of \eqref{Bp} and \eqref{Tp2}.
Let $\mathcal{B}$ be a class of subsets of $\mathcal{X}$, 
denote $\eins_{\mathcal{B}}$ as the collection of the indicator functions of all  $B \in \mathcal{B}$, that is, $\eins_{\mathcal{B}} := \{ \eins_B : B \in \mathcal{B} \}$.
Moreover, as usual, for any probability measure $Q$,
$L_2(Q)$ is denoted as the $L_2$ space with respect to $Q$ 
equipped with the norm $\|\cdot\|_{L_2(Q)}$.

\begin{lemma} \label{BpTpCoveringNumbers}
	Let $\mathcal{B}_p$ and $\tilde{\mathcal{T}}_p$ be defined as in \eqref{Bp} and \eqref{Tp2} respectively. Then, for all
	$0 < \varepsilon < 1$, there exists a universal constant $K$
	such that
	\begin{align} \label{BpCoveringNumber}
	\mathcal{N} \big( \eins_{\mathcal{B}_p}, \|\cdot\|_{L_2(Q)}, \varepsilon \big) 
	\le  K (d p + 2) (4 e)^{d p + 2} (1 / \varepsilon)^{2 (d p + 1)}
	\end{align}
	and
	\begin{align} \label{TpCoveringNumber}
	\mathcal{N} \bigl( \tilde{\mathcal{T}}_p, \|\cdot\|_{L_2(Q)}, \varepsilon \bigr) 
	\leq K (d p + 2) (4 e)^{d p + 2}(2 / \varepsilon)^{2 (d p + 1)}
	\end{align}
	hold for any probability measure $Q$.
\end{lemma}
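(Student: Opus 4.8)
The plan is to deduce both covering number bounds from the VC dimension estimate of Lemma \ref{VCIndex} combined with the classical polynomial covering number bound for VC classes. First I would invoke the standard result (Haussler's bound, in the refined form of van der Vaart and Wellner): if $\mathcal{C}$ is a class of subsets of $\mathcal{X}$ with $\mathrm{VC}(\mathcal{C}) = V < \infty$, then there is a universal constant $K$ such that for every probability measure $Q$ and every $0 < \varepsilon < 1$,
\[
\mathcal{N}\big(\eins_{\mathcal{C}}, \|\cdot\|_{L_2(Q)}, \varepsilon\big) \le K V (4e)^V (1/\varepsilon)^{2(V-1)}.
\]
Applying this with $\mathcal{C} = \mathcal{B}_p$ and $V = dp+2$, which is exactly the bound furnished by Lemma \ref{VCIndex}, yields \eqref{BpCoveringNumber} at once, since $KV = K(dp+2)$, $(4e)^V = (4e)^{dp+2}$ and $2(V-1) = 2(dp+1)$.

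For \eqref{TpCoveringNumber} I would exploit the affine identification between $\tilde{\mathcal{T}}_p$ and $\eins_{\mathcal{B}_p}$ built into the reformulation \eqref{Tp2}. Every $g \in \tilde{\mathcal{T}}_p$ has the form $g = \eins_B - \eins_{B^c} = 2\eins_B - \eins_{\mathcal{X}}$ for some $B \in \mathcal{B}_p$, so $\eins_B \mapsto 2\eins_B - \eins_{\mathcal{X}}$ is a bijection from $\eins_{\mathcal{B}_p}$ onto $\tilde{\mathcal{T}}_p$ that rescales $L_2(Q)$ distances exactly by a factor $2$, namely $\big\|(2\eins_{B_1} - \eins_{\mathcal{X}}) - (2\eins_{B_2} - \eins_{\mathcal{X}})\big\|_{L_2(Q)} = 2\|\eins_{B_1} - \eins_{B_2}\|_{L_2(Q)}$. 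Hence the image of any $\varepsilon/2$-net of $\eins_{\mathcal{B}_p}$ is an $\varepsilon$-net of $\tilde{\mathcal{T}}_p$, giving $\mathcal{N}(\tilde{\mathcal{T}}_p, \|\cdot\|_{L_2(Q)}, \varepsilon) \le \mathcal{N}(\eins_{\mathcal{B}_p}, \|\cdot\|_{L_2(Q)}, \varepsilon/2)$. Since $0 < \varepsilon < 1$ forces $0 < \varepsilon/2 < 1$, the bound \eqref{BpCoveringNumber} applies with $\varepsilon$ replaced by $\varepsilon/2$, which only modifies the $\varepsilon$-power factor into $(2/\varepsilon)^{2(dp+1)}$ and leaves $K(dp+2)(4e)^{dp+2}$ unchanged, producing exactly \eqref{TpCoveringNumber}.

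The argument is essentially bookkeeping once Lemma \ref{VCIndex} is available; the only points needing care are (i) confirming that the cited VC-to-covering estimate holds for $L_2(Q)$ uniformly over all probability measures $Q$ (it does, which is precisely why the bound is distribution-free and why the constant $K$ may be taken universal), and (ii) the minor technicality that the universal constant and the $(4e)^V$ factor absorb the small-$V$ regimes, so no separate treatment of the trivial case $p=0$ is required. The main ``obstacle,'' such as it is, is merely matching the classical bound to the exact normalization appearing in \eqref{BpCoveringNumber}; the remainder is the elementary rescaling above.
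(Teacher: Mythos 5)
Your proposal is correct and follows essentially the same route as the paper: the first bound is the standard VC-class covering estimate (Theorem 9.2 in Kosorok, equivalently van der Vaart--Wellner) applied with $V = dp+2$ from Lemma \ref{VCIndex}, and the second follows from the exact factor-$2$ rescaling of $L_2(Q)$ distances under $\eins_B \mapsto 2\eins_B - 1$, giving $\mathcal{N}(\tilde{\mathcal{T}}_p, \|\cdot\|_{L_2(Q)}, \varepsilon) \le \mathcal{N}(\eins_{\mathcal{B}_p}, \|\cdot\|_{L_2(Q)}, \varepsilon/2)$. No gaps.
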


For the parent \emph{best-scored} random tree function set $\tilde{\mathcal{T}}_Z$ where the partition follows the same splitting criterion as in Assumption \ref{JoinedSpace} but in the binary value case, we denote
\begin{align} \label{RStar}
\tilde{r}^* : = \inf_{g \in \tilde{\mathcal{T}}_Z} \ p^2 (g) + \mathcal{R}_{L, \mathrm{P}}(g) - \mathcal{R}_{L, \mathrm{P}}^*.
\end{align}
We emphasize at this point that the functions in $\tilde{\mathcal{T}}_Z$ take values in $\{-1, 1\}$. For $\tilde{r} > r^*$,
\begin{align}  \label{Gr}
& \tilde{\mathcal{T}}_r : = \big\{g \in \tilde{\mathcal{T}}_Z : p^2 (g) + \mathcal{R}_{L, \mathrm{P}}(g) - \mathcal{R}_{L, \mathrm{P}}^* \le r \big\},
\\
\label{Hr}
& \tilde{\mathcal{H}}_r = \{ L \circ g - L \circ f_{L, \mathrm{P}}^* : g \in \tilde{\mathcal{G}}_r \},
\end{align}
where $L \circ g$ denotes the least squares loss of $g$. 
Moreover, we define $r^*$, $\mathcal{T}_r$ and $\mathcal{H}_r$ similarly for the function set $\mathcal{T}_Z$, respectively.

\begin{lemma} \label{HrEntropyNumber}
	Let $\tilde{\mathcal{H}}_r$ be defined as in \eqref{Hr}.
	Then, for all $\delta\in(0,1)$,
	the $i$-th entropy number of $\tilde{\mathcal{H}}_r$ satisfies
	\begin{align*}
	\mathbb{E}_{\mathrm{D} \sim \mathrm{P}} \,
	e_i (\tilde{\mathcal{H}}_r, \|\cdot\|_{L_2(\mathrm{D})})
	\leq \biggl( \frac{75 d}{2 e \delta} \sum_{j=1}^m (r / \lambda_j)^{1/2} \biggr)^{1/(2 \delta)}
	i^{- 1/(2 \delta)}.
	\end{align*}
\end{lemma}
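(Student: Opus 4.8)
The plan is to bound the entropy numbers of $\tilde{\mathcal{H}}_r$ by going back through the definition of $\tilde{\mathcal{H}}_r$ and $\tilde{\mathcal{T}}_r$, reducing the problem to covering numbers of the partition-induced class $\tilde{\mathcal{T}}_p$ bounded in Lemma~\ref{BpTpCoveringNumbers}. First I would observe that for $g \in \tilde{\mathcal{T}}_r$ the regularization term forces $p^2(g) \le r$, i.e.\ $\sum_{j=1}^m \lambda_j p^2(\eins_{V_j}g) \le r$; in particular each local number of splits satisfies $p(\eins_{V_j}g) \le (r/\lambda_j)^{1/2}$, so the total number of splits of the parent tree $g$ is at most $P := \sum_{j=1}^m (r/\lambda_j)^{1/2}$. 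Hence $\tilde{\mathcal{T}}_r \subset \tilde{\mathcal{T}}_P$ as classes of $\{-1,1\}$-valued functions on $\mathcal{X}$ (the direct-sum structure just means the parent partition has at most $P$ splits overall). This is the key structural reduction: it converts the abstract best-scored direct-sum space into the concrete $\tilde{\mathcal{T}}_P$ for which Lemma~\ref{BpTpCoveringNumbers} gives a covering-number bound.

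Next I would pass from $\tilde{\mathcal{T}}_r$ to $\tilde{\mathcal{H}}_r$. Since $L$ is the least squares loss and all functions involved are bounded by $M$, the map $g \mapsto L\circ g - L\circ f_{L,\mathrm{P}}^*$ is Lipschitz with a constant of order $M$ with respect to $\|\cdot\|_{L_2(\mathrm{D})}$, so covering numbers of $\tilde{\mathcal{H}}_r$ are controlled by covering numbers of $\tilde{\mathcal{T}}_r$ at a rescaled radius; combining with $\tilde{\mathcal{T}}_r\subset\tilde{\mathcal{T}}_P$ and \eqref{TpCoveringNumber} yields a bound of the form
\begin{align*}
\mathcal{N}\big(\tilde{\mathcal{H}}_r, \|\cdot\|_{L_2(\mathrm{D})}, \varepsilon\big)
\le K(dP+2)(4e)^{dP+2}(c M/\varepsilon)^{2(dP+1)}
\end{align*}
for a suitable constant $c$. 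Taking logarithms, $\log\mathcal{N}(\tilde{\mathcal{H}}_r,\|\cdot\|_{L_2(\mathrm{D})},\varepsilon) \lesssim dP\log(1/\varepsilon)$ up to lower-order terms, which is exactly the regime in which a polynomial entropy-number bound $e_i \lesssim (dP)^{1/(2\delta)} i^{-1/(2\delta)}$ holds for every $\delta\in(0,1)$. I would invoke the standard equivalence between covering numbers growing like $(1/\varepsilon)^{\,\text{const}}$ and entropy numbers decaying polynomially (the logarithmic factors $(dP+2)(4e)^{dP+2}$ get absorbed into the $\delta$-dependent constant, which is why $\delta$ appears and the factor $75/(2e\delta)$ shows up), and then take expectation over $\mathrm{D}\sim\mathrm{P}$; since the bound on $\tilde{\mathcal{T}}_P$ holds for every empirical measure $Q$, the expectation is immediate. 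Substituting $P = \sum_{j=1}^m (r/\lambda_j)^{1/2}$ gives the claimed
\begin{align*}
\mathbb{E}_{\mathrm{D}\sim\mathrm{P}}\, e_i(\tilde{\mathcal{H}}_r,\|\cdot\|_{L_2(\mathrm{D})})
\le \Bigl(\tfrac{75d}{2e\delta}\sum_{j=1}^m (r/\lambda_j)^{1/2}\Bigr)^{1/(2\delta)} i^{-1/(2\delta)}.
\end{align*}

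The main obstacle I expect is tracking the constants carefully enough to land on the specific factor $75d/(2e\delta)$: this requires being precise about the Lipschitz constant of the loss clipping, about how the $(4e)^{dP+2}$ and $(dP+2)$ prefactors in Lemma~\ref{BpTpCoveringNumbers} are absorbed when converting covering numbers at scale $\varepsilon$ into the $i$-th entropy number, and about the exact form of the covering-number-to-entropy-number conversion lemma being used (a Carl-type inequality). The structural steps --- $\tilde{\mathcal{T}}_r\subset\tilde{\mathcal{T}}_P$ with $P=\sum_j(r/\lambda_j)^{1/2}$, Lipschitzness of the least squares loss on $[-M,M]$, and uniformity over $Q$ --- are routine; it is the bookkeeping of the numerical constant and the correct power $dP$ in the exponent that needs care, and the freedom to choose $\delta$ arbitrarily small is what makes the polynomial bound valid despite the logarithmic slack.
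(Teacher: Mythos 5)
Your proposal follows essentially the same route as the paper's proof: the reduction $\tilde{\mathcal{T}}_r \subset \tilde{\mathcal{T}}_q$ with $q = \sum_{j=1}^m \lceil (r/\lambda_j)^{1/2}\rceil$, the covering-number bound from Lemma~\ref{BpTpCoveringNumbers}, the Lipschitz transfer to $\tilde{\mathcal{H}}_r$ (the paper gets constant $2$ exactly because $g^2 = \tilde{g}^2 = 1$ in the binary case), the absorption of the $\log(1/\varepsilon)$ slack via $\sup_\varepsilon \varepsilon^{2\delta}\log(1/\varepsilon) = 1/(2e\delta)$, and the covering-to-entropy conversion (Exercise 6.8 in \cite{StCh08}, which contributes the factor $3$ in $75 = 3\cdot 25$), with the expectation over $\mathrm{D}$ trivial by uniformity in $Q$. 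This is correct and matches the paper's argument in all essential respects.
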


According to the above Lemma \ref{HrEntropyNumber}, we are able to deduce the Rademacher average of the binary value case and then the Rademacher average of our regression case.

\begin{lemma} \label{Rademacher}
	Let $\tilde{\mathcal{H}}_r$ be the function set defined as in \eqref{Hr}. For any $\delta \in (0,1)$, there holds that
	\begin{align*}
	\mathbb{E}_{\mathrm{D} \sim \mathrm{P}} \mathrm{Rad}_{\mathrm{D}} & (\tilde{\mathcal{H}}_r, n) 
	\\
	& \leq
	\mathrm{max} \bigg\{
	10 c_{\delta,1} 
	\bigg(\frac{6d}{e\delta n} \sum_{j=1}^m \lambda_j^{-1/2}\bigg)^{\frac{1}{2}} r^{\frac{3-2\delta}{4}},
	5 c_{\delta,2}  \bigg(\frac{6d}{e\delta n}\sum_{j=1}^m \lambda_j^{-1/2}\bigg)^{\frac{1}{1+\delta}} r^{\frac{1}{2+2\delta}} 
	\bigg\},
	\end{align*}
	where $c_{\delta,1}$ and $c_{\delta,2}$ are constants depending only on $\delta$ that will be stated in the proof. And
	\begin{align*}
	\mathbb{E}_{\mathrm{D} \sim \mathrm{P}} \mathrm{Rad}_{\mathrm{D}} & (\mathcal{H}_r, n) 
	\\
	& \leq
	\mathrm{max} \bigg\{
	10 M c_{\delta,1}
	\bigg(\frac{6d}{e\delta n} \sum_{j=1}^m \lambda_j^{-1/2}\bigg)^{\frac{1}{2}} r^{\frac{3-2\delta}{4}},
	5 M c_{\delta,2} \bigg(\frac{6d}{e\delta n}\sum_{j=1}^m \lambda_j^{-1/2}\bigg)^{\frac{1}{1+\delta}} r^{\frac{1}{2+2\delta}}
	\bigg\}.
	\end{align*}
\end{lemma}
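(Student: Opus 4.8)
The plan is to pass from the entropy number bound in Lemma~\ref{HrEntropyNumber} to the Rademacher average bound via a standard chaining argument, and then to transfer the binary-value bound to the regression bound by a contraction-type argument. First I would recall the relationship between expected empirical Rademacher averages and entropy numbers: for a function class $\mathcal{F}$ with $\|f\|_\infty \le B$, one has a bound of the form $\mathbb{E}_{\mathrm{D}}\,\mathrm{Rad}_{\mathrm{D}}(\mathcal{F},n) \lesssim \max\{ n^{-1/2}(\sum_i a e_i(\mathcal{F})^{?})^{?}, \ldots\}$; more precisely the appropriate tool here is the bound that if $e_i(\mathcal{F},\|\cdot\|_{L_2(\mathrm{D})}) \le a\, i^{-1/(2\delta)}$ for all $i \ge 1$ and some $\delta \in (0,1)$, then $\mathbb{E}_{\mathrm{D}}\,\mathrm{Rad}_{\mathrm{D}}(\mathcal{F},n)$ is controlled by a $\max$ of two terms, one scaling like $n^{-1/2}$ (the ``$\delta < 1/2$ regime'' dominated by the leading entropy terms) and one scaling like $n^{-1/(1+\delta)}$ (the ``$\delta > 1/2$ regime''). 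This is exactly the dichotomy appearing in the statement, so the two branches of the $\max$ in Lemma~\ref{Rademacher} correspond to these two regimes of the chaining integral.

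Second, I would substitute the explicit value of $a$ from Lemma~\ref{HrEntropyNumber}, namely $a = \bigl( \tfrac{75 d}{2 e \delta} \sum_{j=1}^m (r/\lambda_j)^{1/2} \bigr)^{1/(2\delta)}$, into the chaining bound. The power $1/(2\delta)$ on $a$ unwinds to remove the outer exponent, leaving $\tfrac{75 d}{2 e \delta} \sum_{j=1}^m (r/\lambda_j)^{1/2} = \tfrac{75 d}{2 e \delta}\, r^{1/2} \sum_{j=1}^m \lambda_j^{-1/2}$; separating the $r$-dependence from the $\lambda_j$-dependence then yields the factors $\bigl(\tfrac{6d}{e\delta n}\sum_j \lambda_j^{-1/2}\bigr)^{1/2} r^{(3-2\delta)/4}$ and $\bigl(\tfrac{6d}{e\delta n}\sum_j \lambda_j^{-1/2}\bigr)^{1/(1+\delta)} r^{1/(2+2\delta)}$ after tracking how the $r^{1/2}$ inside $a$ combines with the $r$-scaling of the chaining terms and the diameter of $\tilde{\mathcal{H}}_r$ (which is at most of order $r$, or $\sqrt{r}$ in $L_2$, from the definition \eqref{Hr} together with \eqref{Gr}). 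The constants $c_{\delta,1}, c_{\delta,2}$ absorb the numeric factors produced by the chaining lemma and the rescaling $75/2 \to 6$. This is bookkeeping: the key is to be careful that the exponents of $r$ and $n$ come out matching the claimed $(3-2\delta)/4$, $1/(2+2\delta)$, $1/2$, and $1/(1+\delta)$.

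Third, for the regression case I would use the standard contraction principle: the map $t \mapsto (y - t)^2$ restricted to $t \in [-M,M]$, $y \in [-M,M]$ is Lipschitz with constant $4M$, and more to the point, the elements of $\mathcal{H}_r$ are obtained from the elements of $\tilde{\mathcal{H}}_r$ by replacing the $\{-1,1\}$-valued leaf labels with $[-M,M]$-valued ones. One then observes that a continuous-valued tree function in $\mathcal{T}_r$ can be written as a convex-combination-like image of binary trees, or alternatively one applies Talagrand's contraction lemma for Rademacher averages directly to the loss composition, picking up a factor $M$ (or a small constant multiple of $M$). Since the radius constraint $p^2(g) + \mathcal{R}_{L,\mathrm{P}}(g) - \mathcal{R}^*_{L,\mathrm{P}} \le r$ is the same in both the binary and the continuous case (the penalty $p^2$ is label-independent and the excess risk only grows when we enlarge the label range by at most a bounded multiplicative factor, which gets absorbed), the radius-dependence in $r$ is preserved, and only the overall prefactor acquires the extra $M$. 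This gives the second displayed inequality from the first.

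The main obstacle I anticipate is getting the $r$-exponents exactly right in the chaining step: one must carefully combine (i) the $r^{1/2}$ hidden inside the entropy constant $a$, (ii) the $L_2(\mathrm{D})$-diameter of $\tilde{\mathcal{H}}_r$, which needs a variance-type bound relating $\|L\circ g - L\circ f^*_{L,\mathrm{P}}\|_{L_2(\mathrm{D})}^2$ to the excess risk (a bounded-loss/Bernstein-type argument, using $\mathcal{Y} = [-M,M]$), and (iii) the two competing terms in the chaining bound whose crossover determines which branch of the $\max$ is active. A secondary subtlety is justifying that the contraction step does not inflate the radius parameter $r$ — this requires noting that scaling leaf labels from $\{-1,1\}$ to $[-M,M]$ changes the excess risk only up to constants that can be folded into $C$, so that $\tilde{\mathcal{H}}_r$ and $\mathcal{H}_r$ are genuinely comparable at the same level $r$. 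Everything else — substituting the value of $a$, simplifying exponents, naming the constants $c_{\delta,1},c_{\delta,2}$ — is routine algebra.
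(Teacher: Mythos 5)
Your proposal matches the paper's proof in all essentials: the paper feeds the entropy bound of Lemma \ref{HrEntropyNumber} (with $a = (\tfrac{75d}{2e\delta}\sum_j (r/\lambda_j)^{1/2})^{1/(2\delta)}$), the sup bound $\|h\|_\infty \le 4$, and the variance bound $\mathbb{E}_{\mathrm{P}} h^2 \le 16r$ into the standard entropy-to-Rademacher theorem (Theorem 7.16 of Steinwart and Christmann), which produces exactly the two-regime $\max$ you describe, and then obtains the second inequality by the bare factor-$M$ comparison $\mathrm{Rad}_{\mathrm{D}}(\mathcal{H}_r,n) \le M\,\mathrm{Rad}_{\mathrm{D}}(\tilde{\mathcal{H}}_r,n)$ that your contraction/convex-hull step justifies. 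This is the same route; your plan is, if anything, more explicit than the paper about why the factor $M$ is legitimate.
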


\section{Architecture Analysis} \label{sec::ArchitectureAnalysis}
In this section, we first summarize the two-stage best-scored random forest algorithm in Section \ref{sec::AlgorithmConstruction}, then clear illustrations on the reasons why our strategy is a subtle match for the large-scale regression are presented in Section \ref{sec::ASubtleMatch}. In Section \ref{sec::InclusiveFramework}, we emphasize the fact that our two-stage forest algorithm is indeed an inclusive framework where different mainstream regression approaches can be incorporated as value assignment strategies for leaves of trees. To give further descriptions of our inclusive framework, analysis on certain real data sets are provided as illustrative examples in Section \ref{sec::IllustrativeExamples}.

\subsection{Algorithm Construction} \label{sec::AlgorithmConstruction}
The construction of the two-stage best-scored random forest (TBRF) predictors demonstrated in Section \ref{sec::MainAlgorithm} can be summarized by pseudocode in Algorithm \ref{alg:1}.

It is worth noting that when in experiments, the partition of the feature space at stage one and the continuous partitions on cells at stage two all follow the adaptive random partition described in Section \ref{sec::AdaptivePartition}. In this way, by increasing the effective number of splits, the empirical performances can be further improved. Moreover, for the convenience of computation, we let the number of candidates in each cell to be equal, that is $k^t_1 = k^t_2 = \ldots = k^t_m = k$, $t=1,\ldots, T$.

When it comes to the empirical performances in terms of prediction accuracy, an appropriate measurement is in demand.
In this paper, we adopt the ubiquitous Mean Squared Error (\textit{MSE}) through all experiments:
\begin{align*}
\textit{MSE}\,(\hat{f})=\frac{1}{n_{\text{test}}}\sum_{i=1}^{n_{\text{test}}} \big(Y_i - \hat{f}(X_i) \big)^2,
\end{align*}
where $\hat{f}$ represents the predictor and $\{ (X_i, Y_i) \}_{i=1}^{n_{\text{test}}}$ are the test samples.

\subsection{A Subtle Match for Large-scale Regression} \label{sec::ASubtleMatch}
This subsection sheds light on the truth that compared to some published vertical methods, the two-stage best-scored random forest is authentically a subtle match for the large-scale regression problems. The essence of our algorithm lies in the fact that by
separating the splitting process of the feature space into two stages, the TBRF significantly speeds up the calculation via parallel computing without changing the algorithm structure of the trees in the random forest method. Moreover, on account that the randomness resided in the splitting criterion makes the ensemble learning available, the boundary discontinuities that long plague other vertical strategies can then be naturally solved. We mention here that partitions in all stages of TBRF in experiments follow the adaptive random splitting criterion to increase the effective number of splits.

\begin{algorithm}
	
	\caption{Two-stage best-scored random forest Algorithm}
	
	\label{alg:1}
	
	\KwIn{
		training data $D:=\{(X_1, Y_1),\ldots,(X_n,Y_n)\}$;
		\\
		\quad\quad\quad\quad number of trees in the forest $T$;
		\\
		\quad\quad\quad\quad number of cells in the feature space partition $m$;
		\\
		\quad\quad\quad\quad number of candidates for each child tree on $\{V_j^t\}_{j=1}^m$ is $\{k_j^t\}_{j=1}^m$, $t=1,\ldots, T$.
	}   
	\For{$t =1 \to T$}{
		Conduct the adaptive random partition on $\mathcal{X}$ with $m-1$ splits to establish an $m$-cell feature space partition;
		\\
		\For{$j=1 \to m$}{
			Build $k_j^t$ candidate trees with purely random partition on cell $V_j^t$;
			\\
			Using cross-validation to choose one $g_{Z_{jt}}$ with the smallest validation error out of $k_j^t$ trees (Best-scored method).
		}
		A parent best-scored random tree is developed by
		\begin{align*}
		g_{Z_t} = \sum_{j=1}^m g_{Z_{jt}}
		\end{align*}
	}
	\KwOut{The two-stage best-scored random forest which is
		\begin{align*}
		f_{Z} = \frac{1}{T}\sum_{t=1}^T g_{Z_{t}}.
		\end{align*}}
\end{algorithm} 

\begin{itemize}[leftmargin=*]
	\item \textbf{PK}: Patchwork kriging (PK) proposed by \cite{park2018patchwork} is an approach for Gaussian process (GP) regression for large datasets with the well-known discontinuity problem mitigated via adding pseudo-observations to the boundaries. Previous Gaussian process vertical methods put forward in \cite{park11a} and \cite{park16a} have tried to join up the boundaries of the adjacent local GP models, and PK is the newest upgraded version of this kind. Though spatial tree is employed in PK to generate data partitioning of uniform sizes when data is unevenly distributed, the boundaries are still artificially selected just as the mesh partitioning proposed in previous papers. Moreover, although efforts have been paid to conquer the boundary discontinuities, this method in turn has to face other challenges. For example, when encountering data with high dimension and large volume, in order to achieve better prediction accuracy, more pseudo-observations need to be added to the boundaries, which leads to a significant growth in computational complexity. In addition, the discontinuities sometimes still exist, see Figure \ref{fig:syn_PK}. Last but not least, PK has not been proceeded by parallel computing and therefore, takes time to derive accurate results.

	
	\item \textbf{VP-SVM}: Support vector machines for regression being a global algorithm is impeded by super-linear computational requirements in terms of the number of training samples in large-scale applications. To address this, \cite{meister16a} employs a spatially oriented method to generate the chunks in feature space, and fit LS-SVMs for each local region using training data belonging to the region. This is called the Voronoi partition support vector machine (VP-SVM). However, the boundaries are artificially selected and the boundary discontinuities do exist.

	\item \textbf{TBRF}: When facing large-scale regression problems, our two-stage best-scored random forest (TBRF) subtly divides the partition process of each tree in random forest into two steps so that parallel computing can come in handy. As is mentioned in Section \ref{sec::MainAlgorithm}, in stage one, the feature space is partitioned into non-overlapping cells following an adaptive random splitting criterion which is completely data-driven, and then random partitions are continuously conducted on each cell separately in stage two. 
	Careful observation of the overall partition will find that the two-stage splitting result is equivalent to that of carrying out continuous uninterrupted splits on the feature space directly. Therefore, the computation speed can be significantly accelerated by assigning each cell to different cores without changing the algorithm structure of trees in the random forest. Recall that the boundary discontinuities inevitable for the existing vertical methods come from the fact that they all attempt to execute a globally smooth algorithm independently to cells of the artificial partition of the feature space. In this manner, compared to predictors obtained by applying the algorithm integrally to the input space, predictors of the existing vertical methods can only maintain smoothness within cells while the discontinuities appear on those artificially chosen boundaries. Different from those vertical methods with fixed partitions, the randomness resided in our partition paves the way for ensemble learning. 
	Even though each tree in the random forest is inherently a piecewise smooth algorithm, by taking full advantages of the ensemble learning, the integrating forest achieves an asymptotic smoothness, which naturally blurs or even almost eliminates the boundary discontinuities of each tree. 
	Therefore, there is no need to underscore the concept of boundaries anymore. 
	As a result, instead of adding continuity constraints to the boundaries, we settle the well-known discontinuities via ensemble learning within the random forest architecture.
	\end{itemize}

To provide a more intuitive explanation on how our forest model smoothes the boundaries, we present the following simulation works. We generate a dataset of $50,000$ noisy observations from $y_i = f(x_i) + \epsilon_i$ for $i=1,\ldots, 50000$, where $f(x) = \sin x$, and $x_i \sim \mathcal{U}(0, 10)$ and $\epsilon_i \sim \mathcal{N}(0, 0.2)$ are independently sampled. Comparisons are conducted between PK, VP-SVM and TBRF with results shown in Figures \ref{fig:syn_PK}, \ref{fig:syn_VP-SVM} and \ref{fig:syn_TBRF}, respectively.

\begin{figure*}[htbp]
	\begin{minipage}[t]{0.6\textwidth}  
		\centering  
		\includegraphics[width=\textwidth]{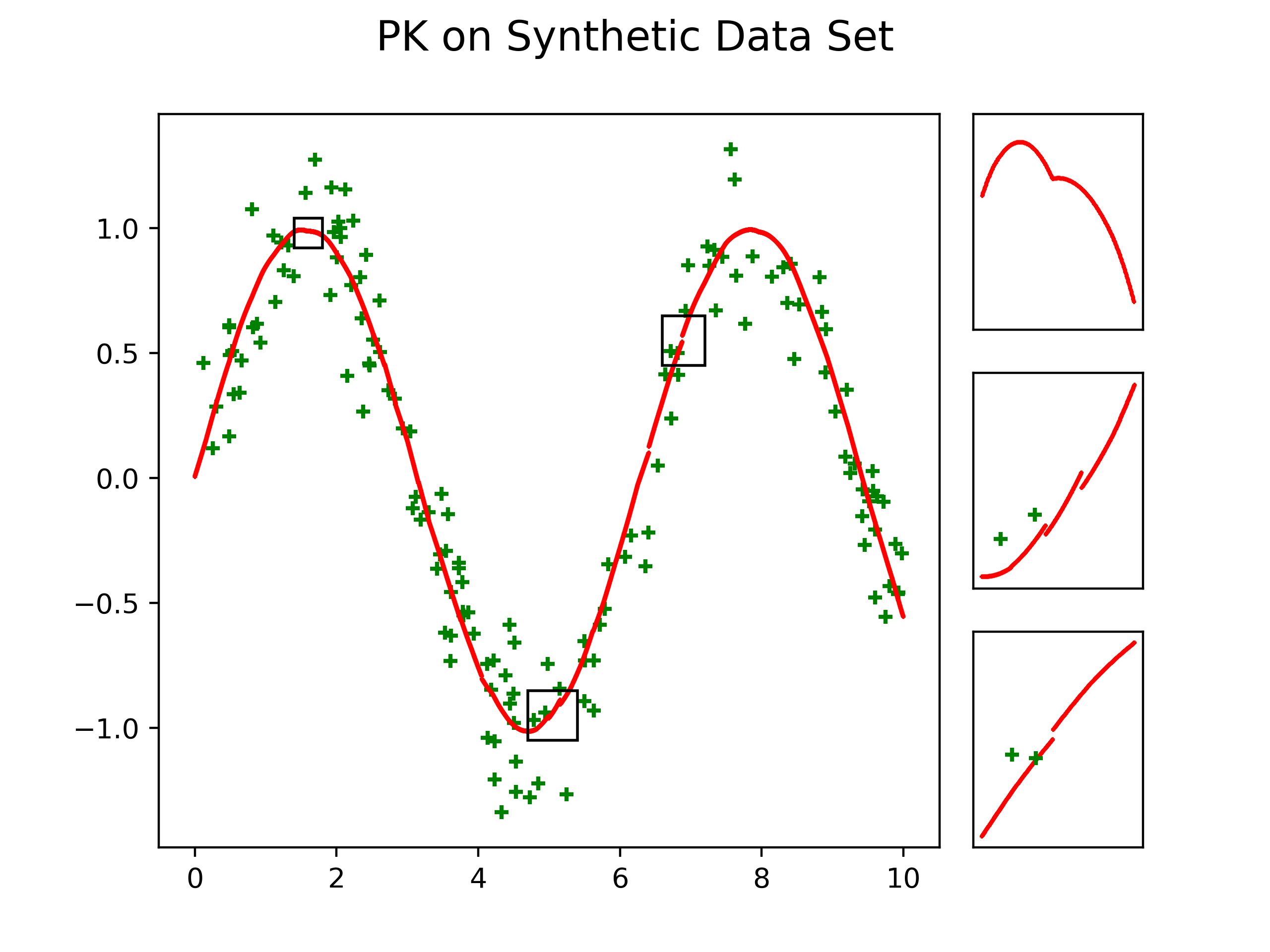}  
	\end{minipage}  
	\centering  
	\caption{Example illustrating the PK predictor on the synthetic data set. Only $150$ data out of the total $50,000$ are plotted in green for clarity. The red curve represents the predictor of the PK with three of the significant boundaries joining areas specifically framed and amplified on the right.}
	\label{fig:syn_PK}
\end{figure*}

\begin{figure*}[htbp]
	\begin{minipage}[t]{0.6\textwidth}  
		\centering  
		\includegraphics[width=\textwidth]{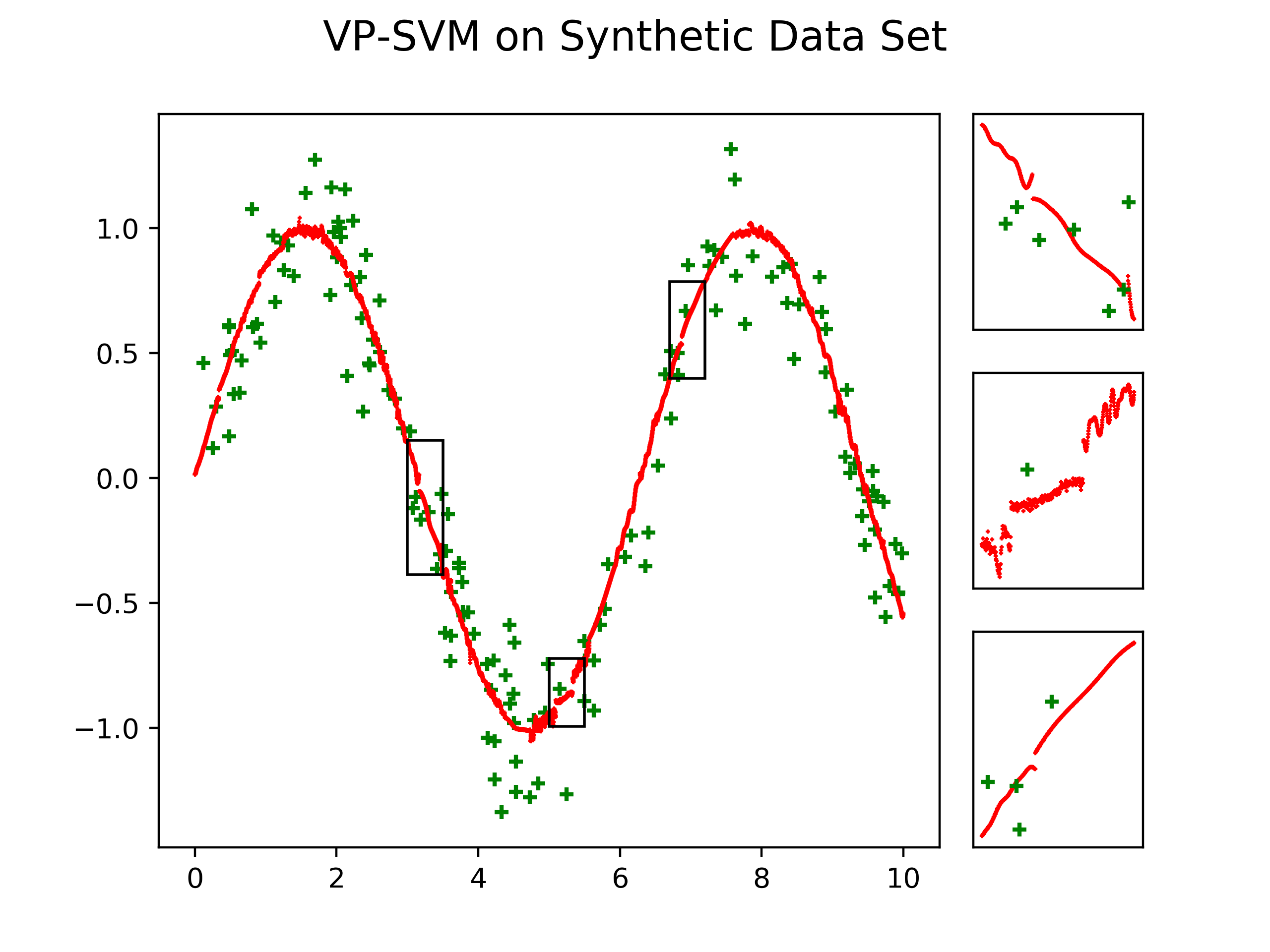}  
	\end{minipage}  
	\centering  
	\caption{Example illustrating the VP-SVM predictor on the synthetic data set. Only $150$ data out of the total $50,000$ are plotted in green for clarity sake. The red discontinuous curve represents the predictor of the VP-SVM with three of the significant discontinuous areas specially framed and amplified on the right.}
	\label{fig:syn_VP-SVM}
\end{figure*}

\begin{figure*}[htbp]
	\begin{minipage}[t]{1.0\textwidth}  
		\centering  
		\includegraphics[width=\textwidth]{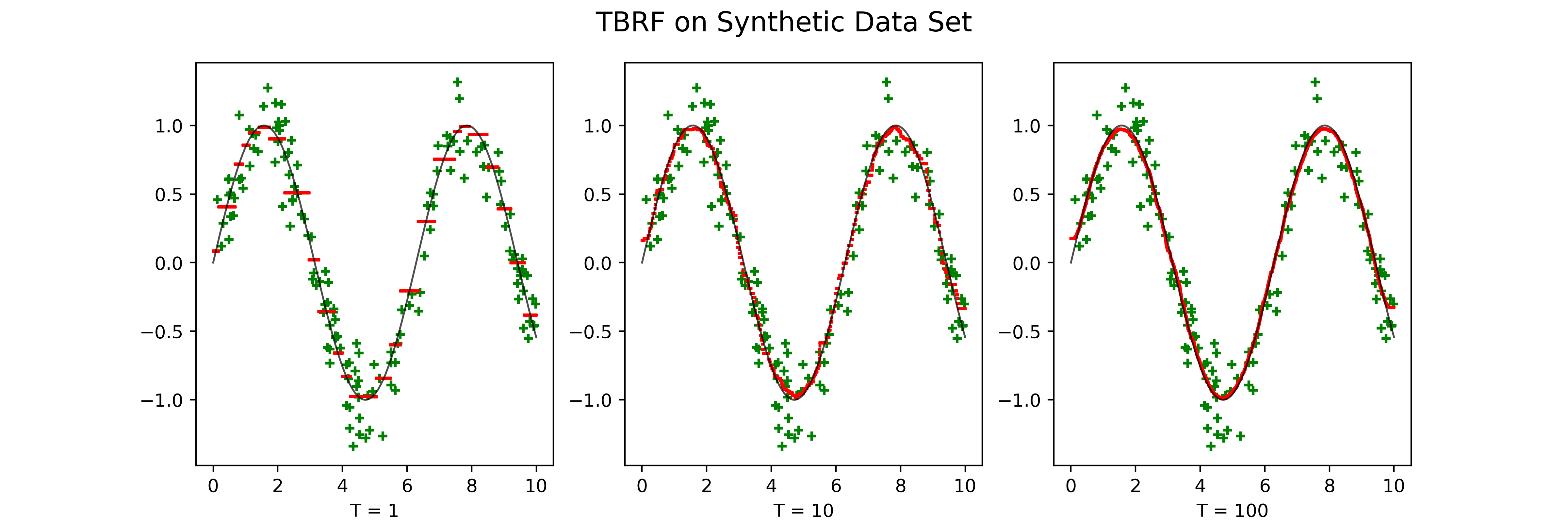}  
	\end{minipage}  
	\centering  
	\caption{Example illustrating the TBRF predictor on the synthetic data set. Only $150$ data out of the total $50,000$ are plotted in green for clarity sake. The TBRF predictors are plotted in red with the Bayes decision function $f(x) = \sin x$ in black. $T$ is the number of trees in the forest.}
	\label{fig:syn_TBRF}
\end{figure*}  

It can be seen from Figure \ref{fig:syn_PK} that even though PK tries to join up the boundaries, the resulting predictor at the connected point is kind of artificial (upper-right image), or is sometimes still discontinuous (middle-right and lower-right images).
The boundary discontinuities of VP-SVM can be easily observed from Figure \ref{fig:syn_VP-SVM}, while our TBRF achieves asymptotic smoothness with the number of trees in the forest $T$ increasing in Figure \ref{fig:syn_TBRF}. Note that the above satisfactory performances of TBRF also have strong theoretical supports: 

\textit{(i)} Although our algorithm is obtained by conducting local computations (e.g.~stage two), we establish the global convergence and particularly, the good global learning rates in Theorem \ref{RateVPForest}. One fact is that if we conduct purely random splits on the feature space recursively, the resulting tree will converge. Now, our algorithm only divides the original recursive splits into two stages and it is intrinsically the same with the original splitting procedures from the perspective of the results. There being no change in the random tree structure is the reason why TBRF maintains a global convergence even under local computations.

\textit{(ii)} The TBRF achieves asymptotic smoothness. Since we employ the purely random splitting criterion, for any point in the feature space, the probability of its being on the boundaries of one tree is zero in theory. Moreover, the probability remains zero of its being on the boundaries of several trees simultaneously. Therefore, for any point in the feature space, the estimation function is smooth with high probability at that point. Even if, unfortunately, this point appears on the boundaries of certain trees, it is with low probability for it to be also on the boundaries of others. Consequently, with the number of trees in the forest increasing, the discontinuities occur at this point will be asymptotically smoothed by other trees.

\subsection{An Inclusive Framework} \label{sec::InclusiveFramework}
The TBRF investigated so far hinges on random trees whose leaf value assignments are according to the random tree decision rule shown in \eqref{TreeDecisionRule}. More precisely, trees in the forest are piecewise constant predictors. However, experiments on high-dimensional data reveal that the piecewise constant random forest might not be adequate enough to provide accurate prediction. This imperfection actually serves as an opportunity for us to figure out how inclusive and versatile the framework of the TBRF can be. Specifically, by incorporating some standard regression algorithms as alternative value assignment methods into our two-stage random forest framework, the prediction accuracy is essentially improved. Accordingly, the TBRF model is then branched into three different modeling paths with respect to what kinds of leaf value assignment approaches are chosen:
\begin{itemize}[leftmargin=*]
	\item \textbf{TBRF-C}: If the value of each leaf node of the trees is assigned the average of the output values of samples falling into that leaf, then the trees in the forest are piecewise constant predictors. We name this branch of our algorithm the \textit{piecewise constant TBRF}, abbreviated as TBRF-C. This simple model captures the constant models and shows great efficiency when dealing with low-dimensional data.

	\item \textbf{TBRF-L}: If the value of each leaf node of the trees is calculated via the linear regression based on the output values of samples falling into that leaf, then the trees in the forest are piecewise linear predictors. This model is named the \textit{piecewise linear TBRF}. For the convenience of computation, we adopt the LS-SVMs for regression with linear kernel 
	$K(X, X') = X^T X'$
	where $X$ and $X'$ are vectors in the input space. The assumption of this underlying linear model may improve the prediction accuracy in the low-dimensional examples.
%
	
	\item \textbf{TBRF-G}: The prediction accuracy of the above two branches of the TBRF decrease when encountering high-dimensional data sets. To address this, we propose to use the LS-SVMs with Gaussian RBF kernel 
	$K(X, X') = \exp(-\gamma \| X - X' \|^2)$, $\gamma > 0$
	for regression as the value assignment strategy. This \textit{piecewise Gaussian TBRF} (TBRF-G) shows great superiority when facing data with high dimension.
	\end{itemize}
By incorporating different mainstream regression strategies into the framework of our two-stage best-scored random forest, we are not only able to accelerate the algorithm via parallel computing, but also obtain prediction with high accuracy. Consequently, our TBRF is an inclusive and versatile framework that is exactly befitting for large-scale regression problems.

\subsection{Illustrative Examples} \label{sec::IllustrativeExamples}
In this section, we design illustrative real data analysis to figure out what kind of assignment strategy should be combined into the two-stage best-scored random forest framework when handling different types of data sets. Specifically, the analysis is based on the following three real data sets. 

The first real data set is the \textbf{\tt{TCO}}, which contains data collected by NIMBUS-7/TOMS satellite to measure the total column of ozone over the globe on Oct 1, 1988. The data consist of $48,331$ two-dimensional samples representing the locations recorded by longitudes and latitudes. The observations are uniformly spread over the range of the longitude and latitude, and the learning goal is to predict the total column of ozone at any unobserved location. We randomly split each dataset into a training set containing $70\%$ of the total observations and a test set containing the remaining $30\%$ of the observations. 

The second real data set is the \emph{Physicochemical Properties of Protein Tertiary Structure Data Set} ($\tt{PTS}$) on UCI data sets, which contains $45,730$ samples of dimension $9$. Similar as other high dimensional data, the measurements are embedded on a low dimensional subspace of the entire domain. The ratio of number of samples in the training set and the testing set is $7 : 3$.

The third real data set is the {\tt SARCOS}, containing $44,484$ training and $4,449$ testing observations from a seven degrees-of-freedom {\tt SARCOS} anthropomorphic robot arm.
The $21$-dimensional input data represent attributes such as positions, moving velocities, accelerations of the joints of the robot arm, etc. As for outputs, we use the first one out of the seven response variables for numerical study. The main learning task is to predict one of the joint torques in a robot arm when the input observation is available.

We now conduct empirical comparisons on \textit{MSE} and training time among TBRF-C, TBRF-L and TBRF-G on these three data sets where $n$ and $d$ denote sample size and dimension, respectively. Since all three branches of the TBRF are based on Algorithm \ref{alg:1}, we now discuss how to conduct the leaf values assignment:

\begin{itemize}[leftmargin=*]
	\item \textbf{TBRF-C}: We take the average output of the samples falling into certain leaf as the corresponding leaf value.
	
	\item \textbf{TBRF-L}: We utilize LS-SVMs with linear kernel based on the samples falling into certain leaf to derive a regression function on that leaf. To pick up the appropriate hyperparameter $C$ which trades off the accuracy deduced by the training data and the simplicity of the decision function, we randomly divide the samples falling into certain leaf into two parts where 70\% of the samples are used for training and the rest for validation. By employing the appropriate hyperparameter, we derive the regression function on that leaf based on the overall samples.
	
	\item \textbf{TBRF-G}: We utilize LS-SVMs with Gaussian kernel based on the samples falling into certain leaf to derive a regression function on that leaf. In order to select the appropriate hyperparameters pair $(C, \gamma)$, we randomly divide the samples falling into the leaf into $7:3$ where grid search is conducted on $70\%$ of the samples and the rest $30\%$ are utilized to make the choice of the hyperparameters pair. After that, regression with chosen hyperparameters is conducted based on all samples falling in that leaf.
	
	
	\end{itemize}

From now on, to train models for sufficiently large data sets, we use a professional compute server equipped with eight Intel(R) Xeon(R) CPU E7-8860 v3 (2.20GHz) 16-core processors,  64 GB RAM.
For the sake of fairness, other methods for comparison in the following work are also trained on this server.

As is illustrated in the Algorithm \ref{alg:1}, after partitioning the feature space into $m$ non-overlapping cells, we need to keep splitting on each cell to build child tree. However, considering that the numbers of samples falling into different cells are different, it is improper to apply the same number of splits to all cells. To address this, we let the number of splits on the cell proportional to the number of samples falling into that cell, and the proportional coefficient $\tt{pro}$ is tunable. For all three data sets {\tt TCO}, {\tt PTS} and {\tt SARCOS}, we adopt the same parameters configuration for grid search:
\begin{itemize}[leftmargin=*]
	\item TBRF-C: $T \in \{20, 50\}$, $m \in \{20, 50, 200 \}$, $k \in \{10, 100\}$ and ${\tt pro} \in \{0.2, 0.5, 0.8 \}$; 
	
	\item TBRF-L: $T \in \{20, 50\}$, $m \in \{30, 50, 70\}$, $k \in \{1, 5, 20\}$ and ${\tt pro} \in \{0.2, 0.5, 0.7 \}$;
	
	\item TBRF-G: $T \in \{20, 50 \}$, $m \in \{ 5, 20, 40 \}$, $k \in \{1, 5\}$ and ${\tt pro} \in \{0.005, 0.05, 0.2 \}$. 
\end{itemize}

\begin{figure}[htbp]
	\begin{minipage}[t]{0.49\textwidth}  
		\centering  
		\includegraphics[width=\textwidth]{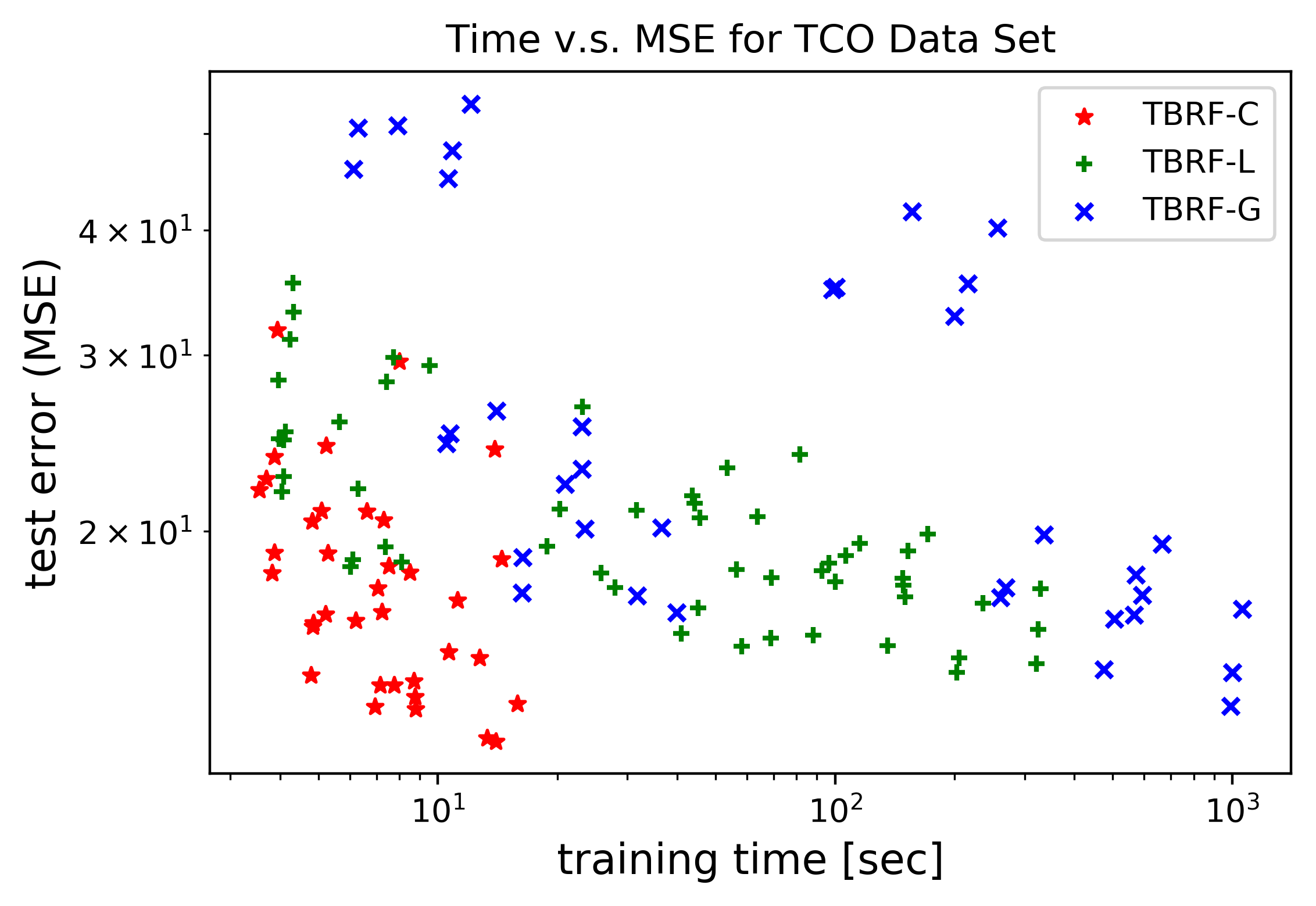}  
	\end{minipage}  
	\begin{minipage}[t]{0.47\textwidth}  
		\centering  
		\includegraphics[width=\textwidth]{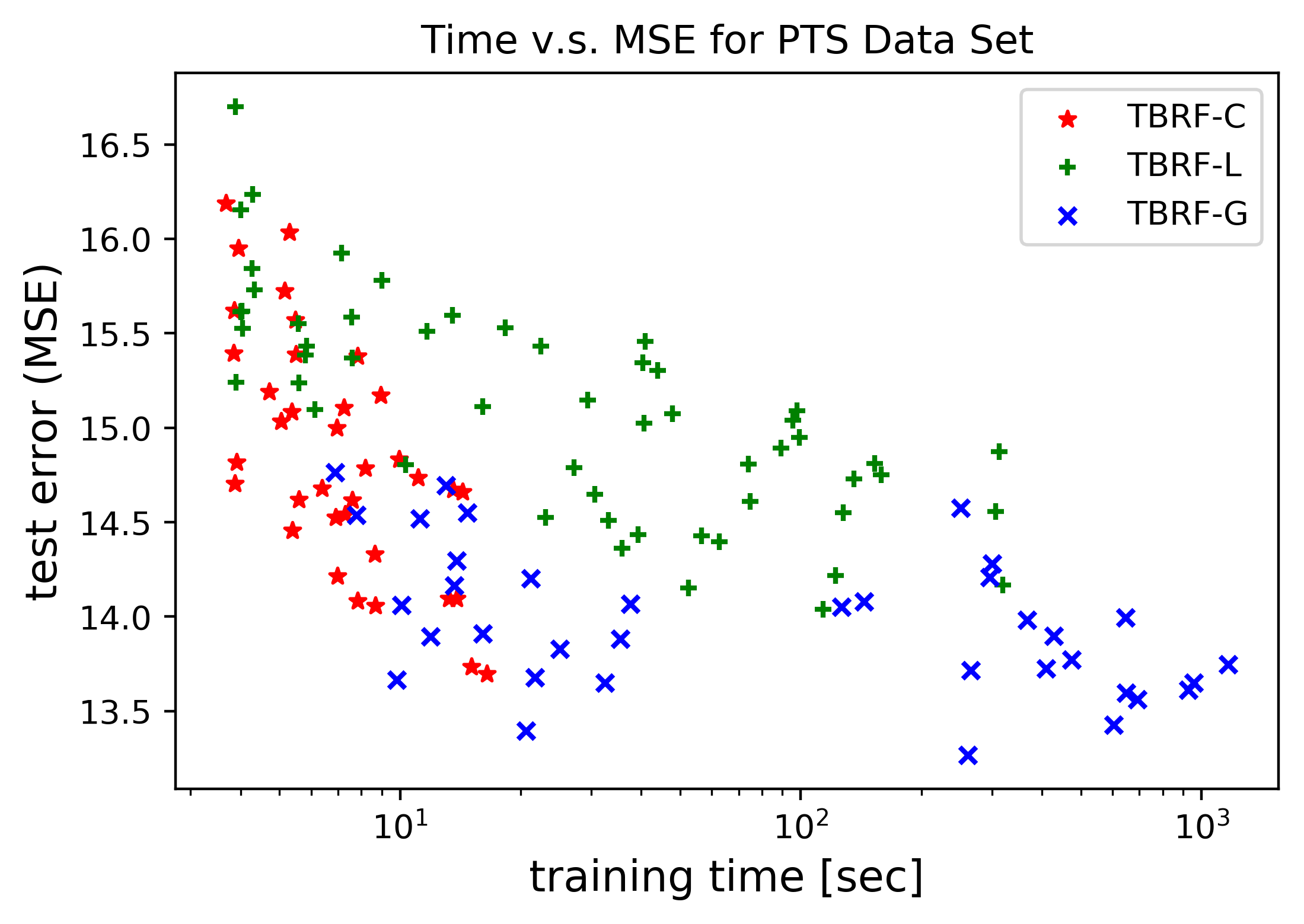}  
	\end{minipage}  
	\begin{minipage}[t]{0.47\textwidth}  
		\centering
		\includegraphics[width=\textwidth]{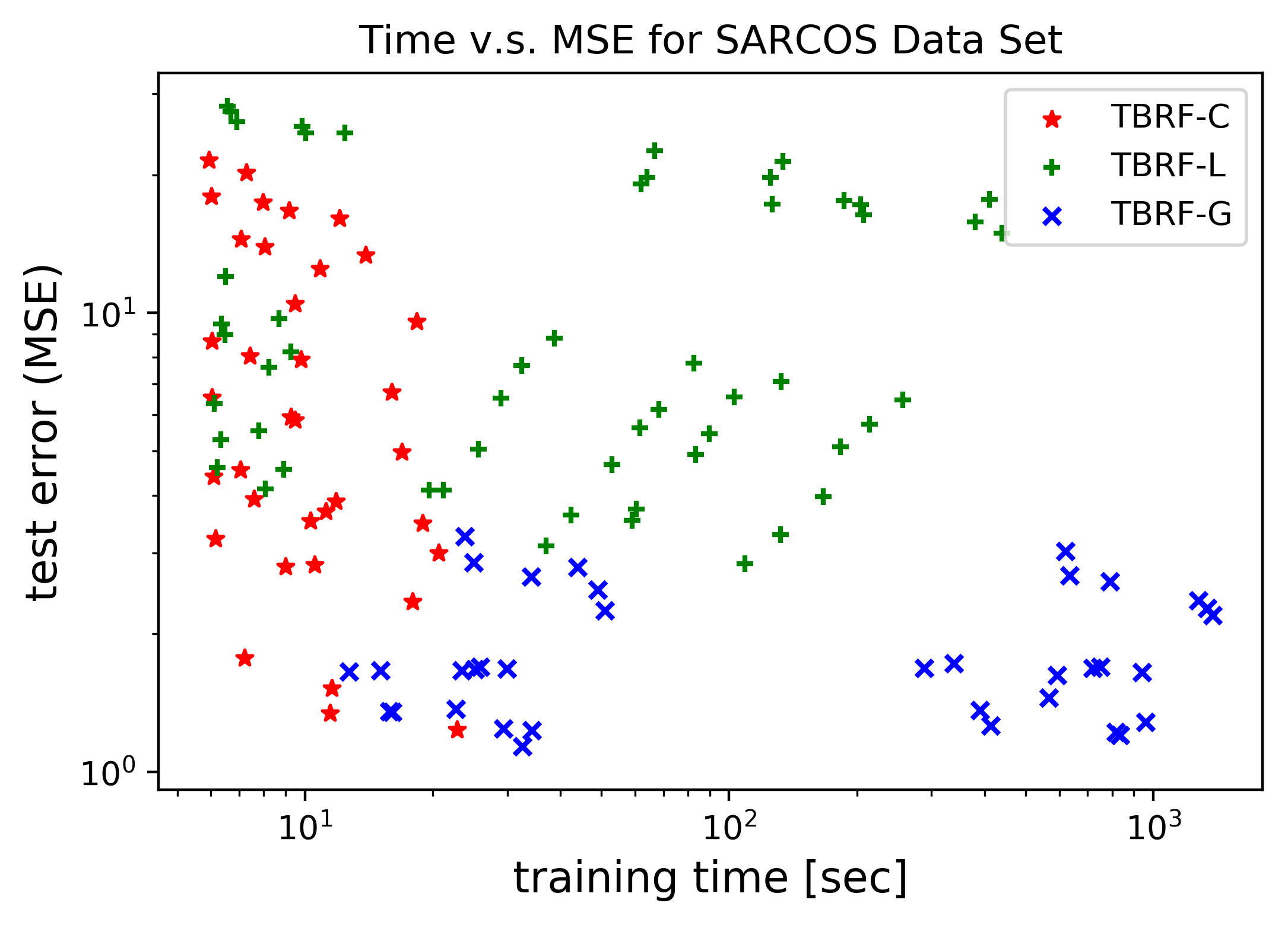}  
	\end{minipage}  
	\centering  
	\caption{Prediction accuracy versus total training time for the data sets {\tt TCO}, {\tt PTS} and {\tt SARCOS}. Each panel compares all the three variants TBRF-C, TBRF-L and TBRF-G. Different points in the panels refer to the experimental results under the corresponding model with different hyperparameter tetrads $(T, m, k, {\tt pro})$.}
	\label{fig::illustrative}
\end{figure} 

Figure \ref{fig::illustrative} shows the main results. It can be observed that for the low-dimensional data, TBRF-C shows both fast and accurate performance over TBRF-L and TBRF-G where TBRF-G takes longer time to derive satisfying results. For {\tt PTS} consisting of $9$-dimensional data, TBRF-G contends against TBRF-C.
For {\tt SARCOS}, TBRF-G gradually shows its advantages over the other two methods in terms of test error and training time on higher-dimensional data set. Through these experiments, we find out that TBRF-C is enough to handle low-dimensional data while TBRF-G is qualified to settle the high-dimensional ones. Therefore, we treat TBRF-C and TBRF-G as two main variants of the TBRF.

\section{Numerical Evaluation} \label{sec::NumericalEvaluation}
This section is concerned with the implementation issues and empirical assessments of the two-stage best-scored random forest. In order to further enhance the prediction accuracy of the forest predictor, we propose some improvement techniques in Section \ref{sec::ExperimentalImprovemets}. Parameter analysis for two main variants TBRF-C and TBRF-G are conducted separately in Section \ref{sec::ParameterAnalysis}. Last but not least, comparison experiments on more real-world data are designed to test the sharpness of our theoretical predictions.

\subsection{Experimental Improvements} \label{sec::ExperimentalImprovemets}
This subsection introduces two experimental improvements available for ameliorating the prediction accuracy of TBRF predictors. According to the type of data at hand, we can selectively use these methods for better experimental results.

\subsubsection{Adaptive Oblique Random Partition}
Till now, the partition processes considered have only performed in an axis-parallel manner. Nevertheless, there comes one caveat that if the underlying concept is defined by a polygonal space partitioning, then the axis-parallel partition may not be that accurate for it can only approximate the correct model with staircase-like structure. On the contrary, oblique partitioning \citep{breiman1984classification, utgo1991linear, brodley1992multivariate, murthy1994system} approaches proposed in the literature serves as a proper alternative.

Similar as the axis-parallel random splitting criterion demonstrated in Section \ref{sec::RandomPartition}, we can also formalize one possible building process following the oblique random splitting criterion. To be concrete, a random vector $O_i := (L_i, G_i, W_i)$ is introduced to reveal the splitting mechanism at the $i$-th step. Let $L_i$ denote the to-be-split cell at the $i$-th step that is chosen from all cells presented at the $(i-1)$-th step uniformly at random, $G_i \in \mathrm{R}^d$ represent the barycenter of cell $L_i$ and $W_i$  be the normal vector of the split hyperplane at that step. To notify, since all partitions conducted in the experiments follows the adaptive criterion, we now give the operation of the adaptive random oblique partition. First of all, $t$ samples are randomly selected from the training data set each of which is certain to fall into one of the cells formed in the $(i-1)$-th step. Hence, the cell where most of these samples fall in is assigned to be the to-be-split cell $L_i$. Then, since the coordinates of samples falling into cell $L_i$ among the $t$ samples are recorded, we can substitute the barycenter $G_i$ for their centroid $X_i^c$. Next, the actual split performed on $L_i$ is a part of the chosen hyperplane $W_i^TX + b_i = 0$, $X \in L_i$. To live out the random splitting rule, we let the normal vectors of the hyperplanes $\{W_i,\ i\in \mathbb{N}_+\}$ be i.i.d.~distributed from $\mathcal{U}[-1, 1]^d$ and $b_i:=-W_i^TX_i^c$. Now that we have finished the operation of the $i$-th step, the random tree with oblique partitions can be constructed by following this procedure recursively. An example of the construction process of the oblique partition is shown in Figure \ref{fig:ob}.
 
 \begin{figure*}[htbp]
 	\begin{minipage}[t]{0.8\textwidth}  
 		\centering  
 		\includegraphics[width=\textwidth]{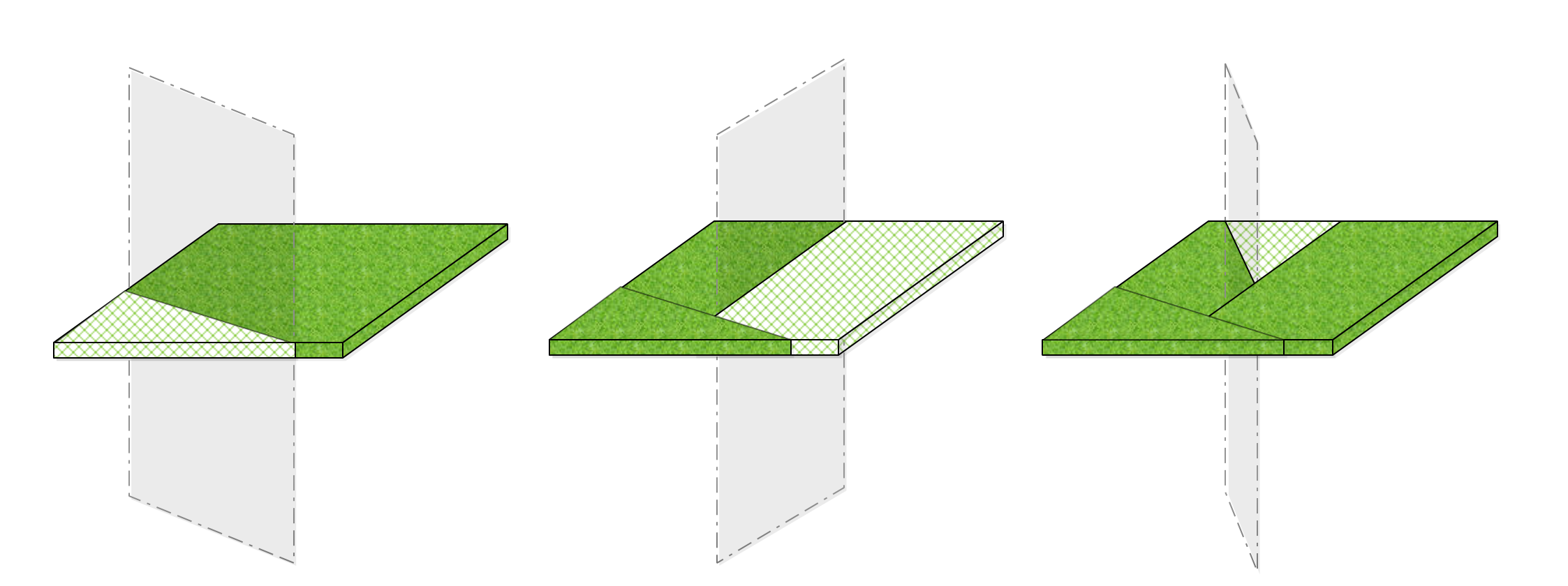}  
 	\end{minipage}  
 	\centering  
 	\caption{Possible construction procedures of $3$-split oblique random partitions in a $2$-dimensional space.}
 	\label{fig:ob}
 \end{figure*}

\subsubsection{Vacancy Filling} \label{sec::VacancyFilling}
Since we employ a random partition, after completing the two-stage partition, we need to deal with the situation where there are child cells\footnote{In the previous sections, we name the results $V_1, \ldots, V_m$ of the feature partition at stage one as cells. Since the splits will continue to be applied to partition these cells into even small cells at stage two, the resulting small cells are called the child cells here for clarity.} with no samples fallen in. Therefore, we come up with two different solutions to label the empty child cells, namely, to fill the vacancy: 
 \begin{itemize}[leftmargin=*]
 	\item \textbf{Mean-based solution}: For an empty child cell, the mean value of the samples contained in the cell where this empty child cell is located is assigned to it. This is a generally applicable vacancy filling methods, available for both axis-parallel and oblique partitions. Moreover, this method has already been used in the illustrative examples.
 	
 	\item \textbf{$1$-NN-based solution}: For an empty child cell, we assign the value of the closest non-empty child cell to it where we take the Euclidean distance between the geometric centers of child cells as child cell distance. Note that this vacancy filling can only be applied to partitions with regular shapes, e.g.~axis-parallel partition in our case. 
 \end{itemize}
Experiments in the following work illustrate that our vertical method combined with mean-based solution runs faster than that with $1$-NN-based solution, however, the $1$-NN-based one provides more accurate results.
Therefore, considering that there exists a trade-off between training time and accuracy, we use method with the appropriate solution for different data sets.
To be specific, if two solutions lead to almost the same accuracy, we choose the one with shorter training time; if the accuracy caused by the two methods is very different, we pick up the high-accuracy one.

\subsection{Parameter Analysis} \label{sec::ParameterAnalysis}
The satisfying results of the illustrative examples in Section \ref{sec::IllustrativeExamples} strike us that the multiple tunable hyperparameters in our model may be the reason why better prediction accuracy is available. 
Aiming at verifying this idea, we design the following experiments concentrating on the \textit{MSE} and training time performance of the two main variants of the TBRF, which are TBRF-C and TBRF-G, separately for different hyperparameters values. 
On account that the main focus is on the analysis of the two-stage forest structure of TBRF, we omit the hyperparameters in the value assignment strategies here. 
More specifically, the target hyperparameters include the number of trees in the forest $T$, the number of cells in feature space partition at the stage one $m$, the proportional coefficient related to the number of splits at the stage two $\tt{pro}$, and the number of candidates for each child tree $k$. Considering that TBRF-C has better performance when handling low-dimensional data, the analyses are carried out on the {\tt TCO} data set and we set the ratio of the training set to the testing set being $7 :3$. As for TBRF-G, since it shows its advantages in settling high-dimensional data, we introduce the fourth data set utilized to conduct the analysis.

\begin{figure}[htbp]
	\begin{minipage}[t]{0.43\textwidth}  
		\centering  
		\includegraphics[width=\textwidth]{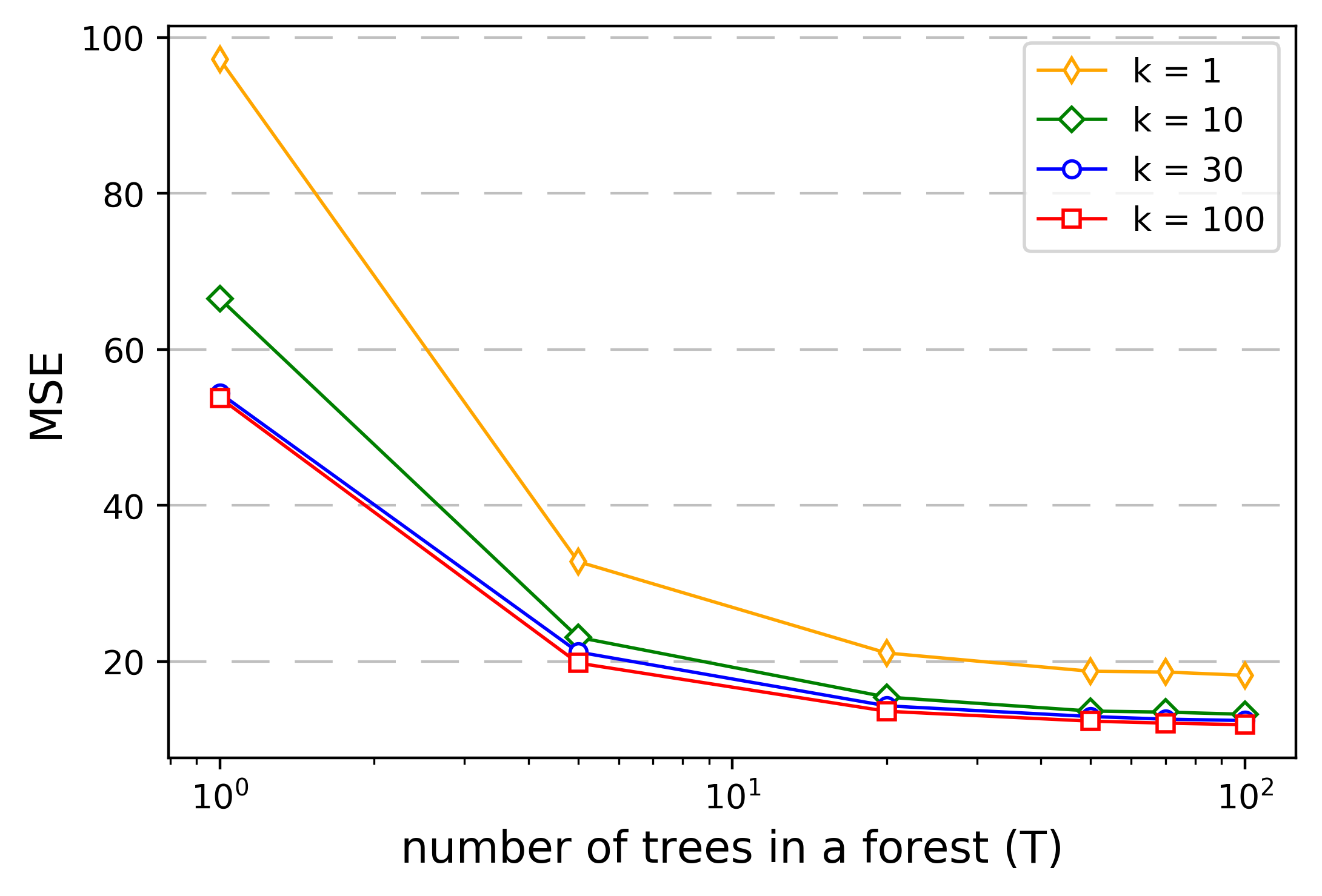}  
	\end{minipage}  
	\begin{minipage}[t]{0.43\textwidth}  
		\centering  
		\includegraphics[width=\textwidth]{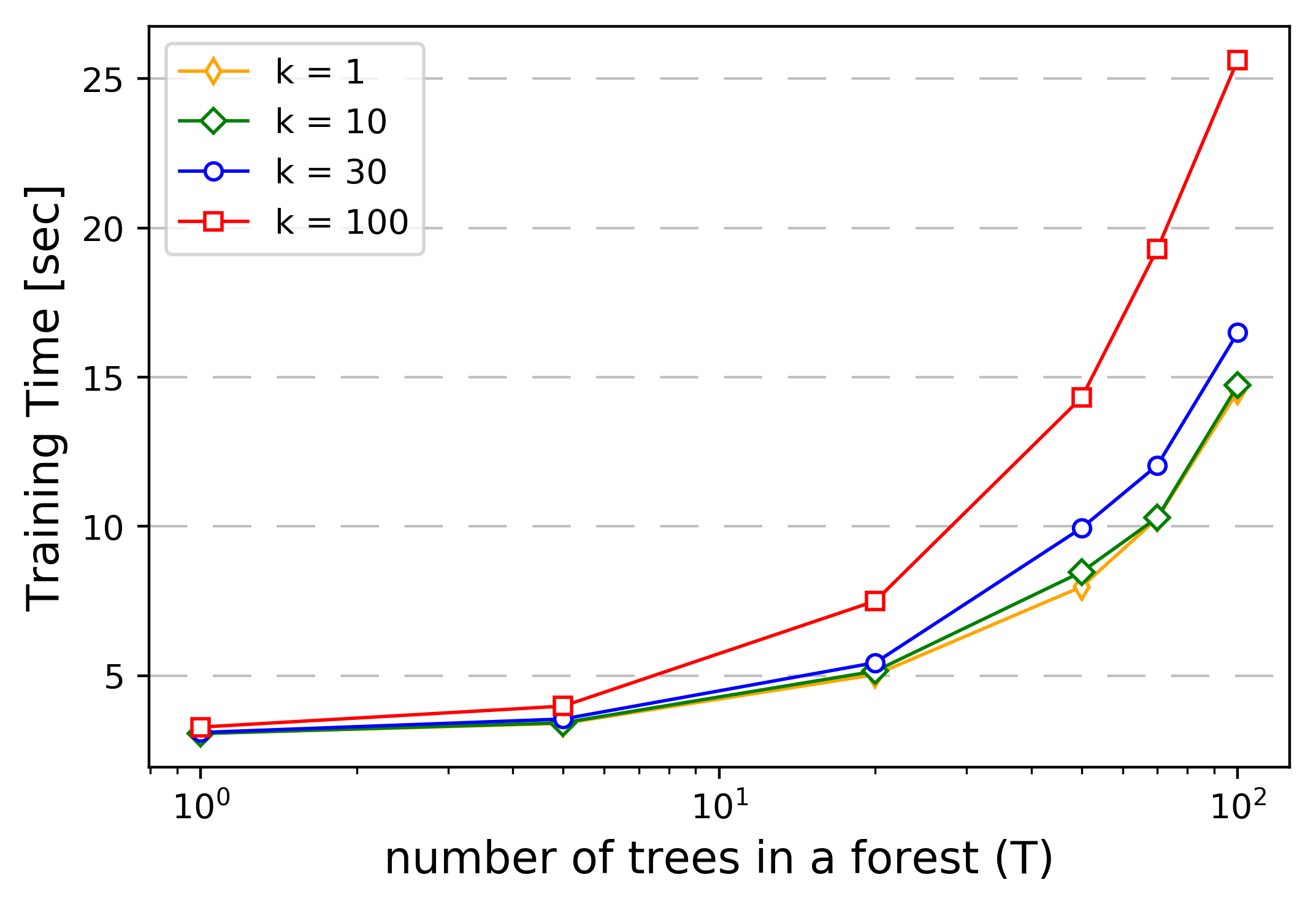}   
	\end{minipage} 
	\begin{minipage}[t]{0.43\textwidth}  
		\centering  
		\includegraphics[width=\textwidth]{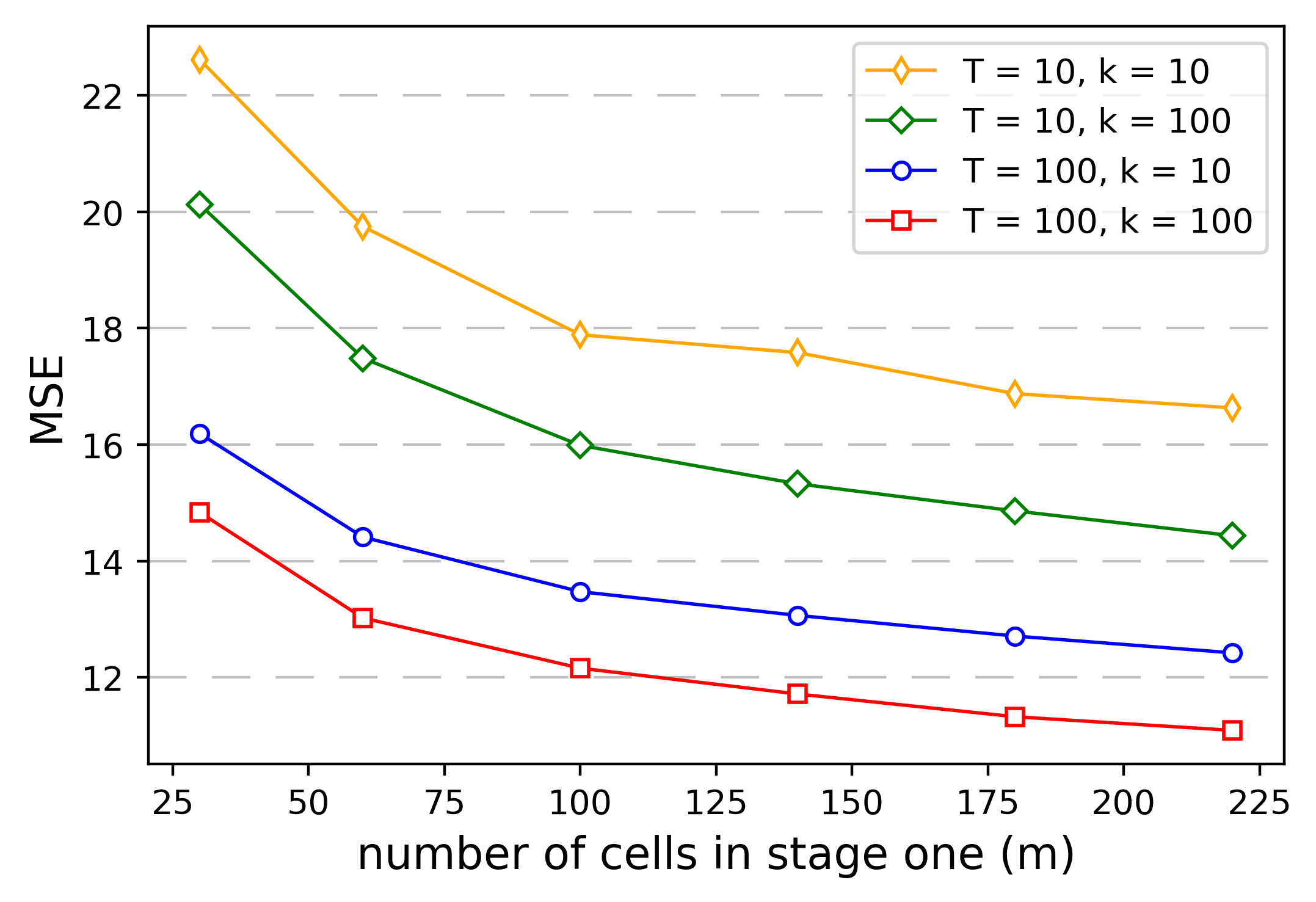}   
	\end{minipage} 
	\begin{minipage}[t]{0.43\textwidth}  
		\centering  
		\includegraphics[width=\textwidth]{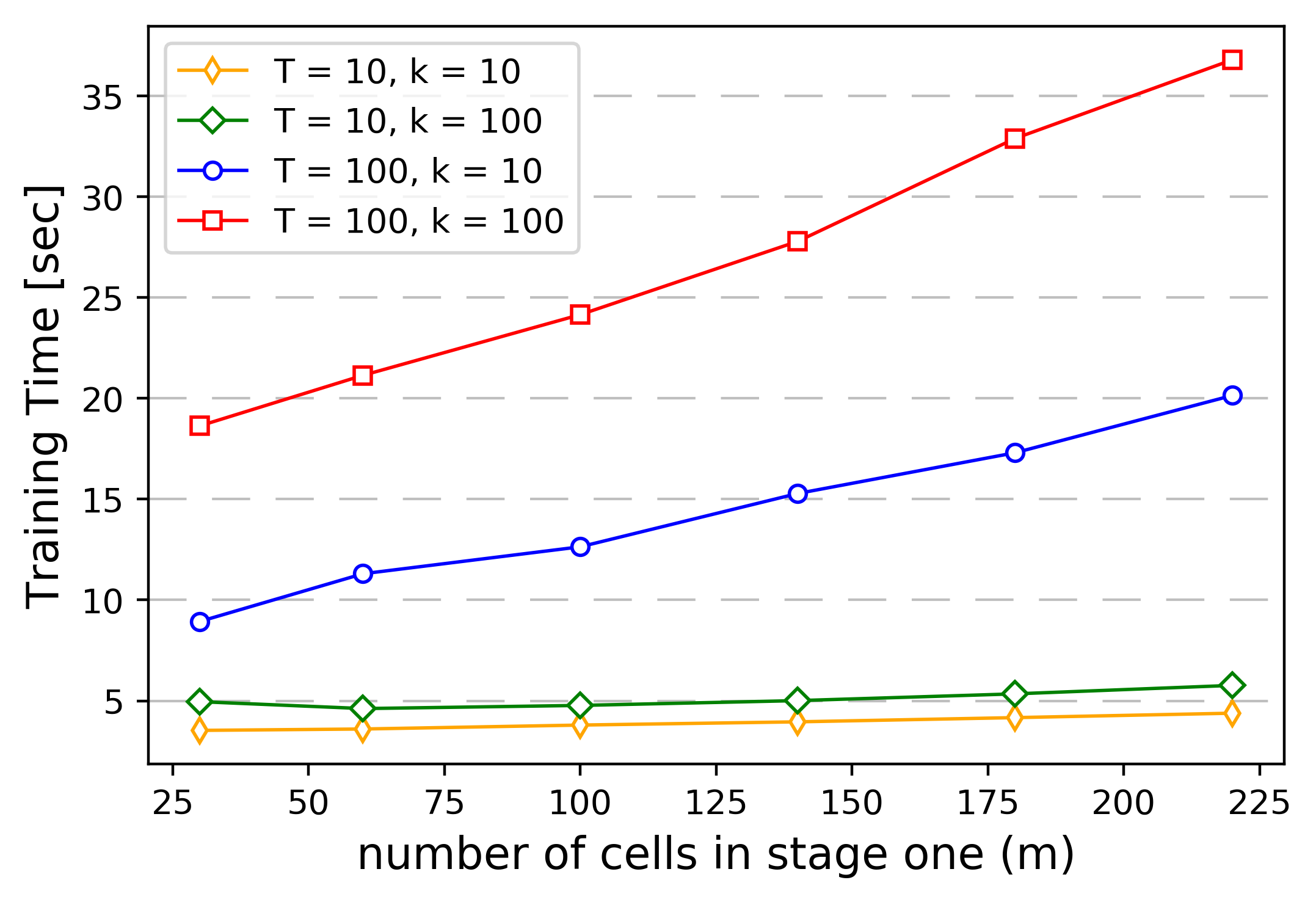}   
	\end{minipage} 
	\begin{minipage}[t]{0.43\textwidth}  
		\centering  
		\includegraphics[width=\textwidth]{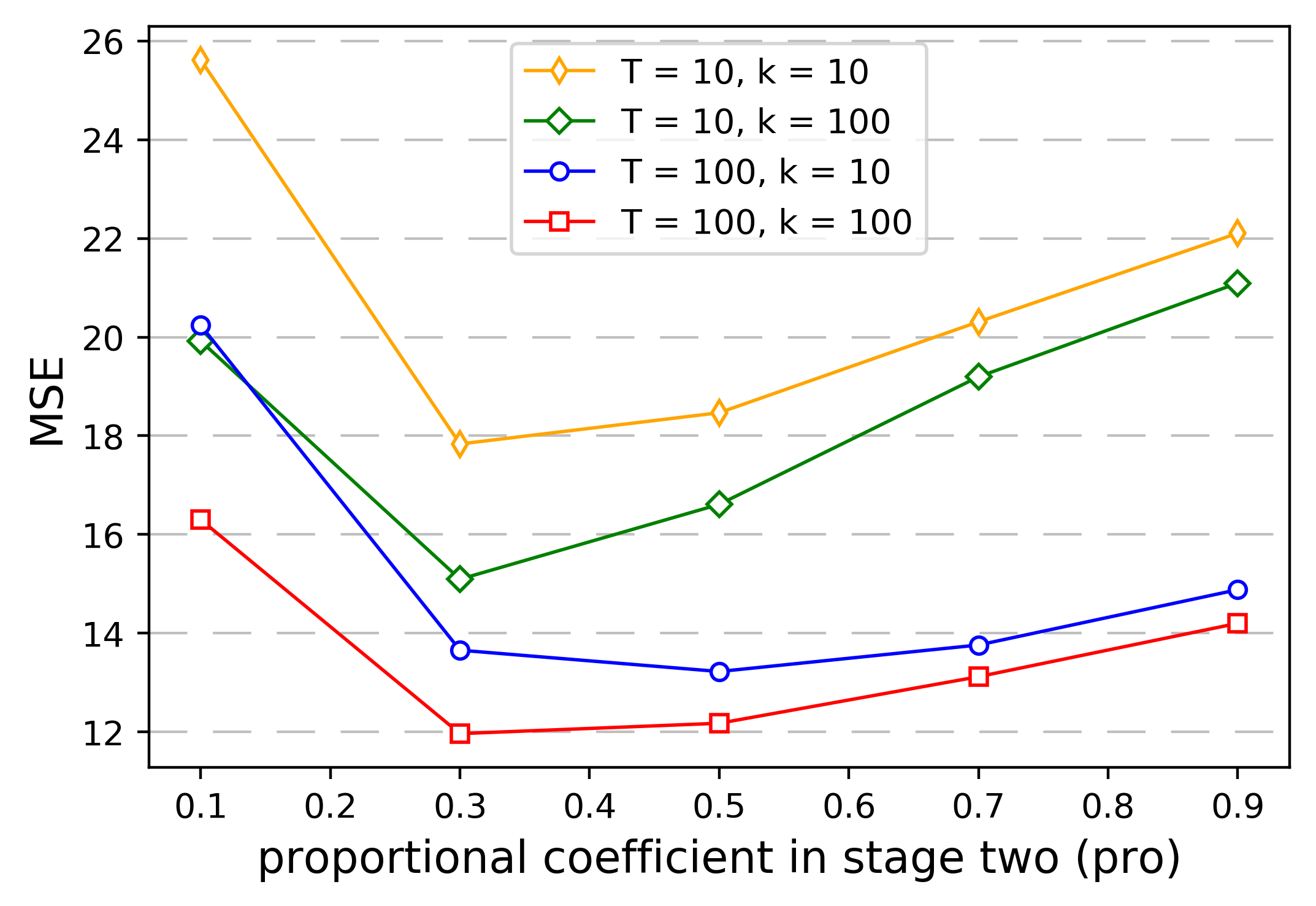}   
	\end{minipage} 
	\begin{minipage}[t]{0.43\textwidth}  
		\centering  
		\includegraphics[width=\textwidth]{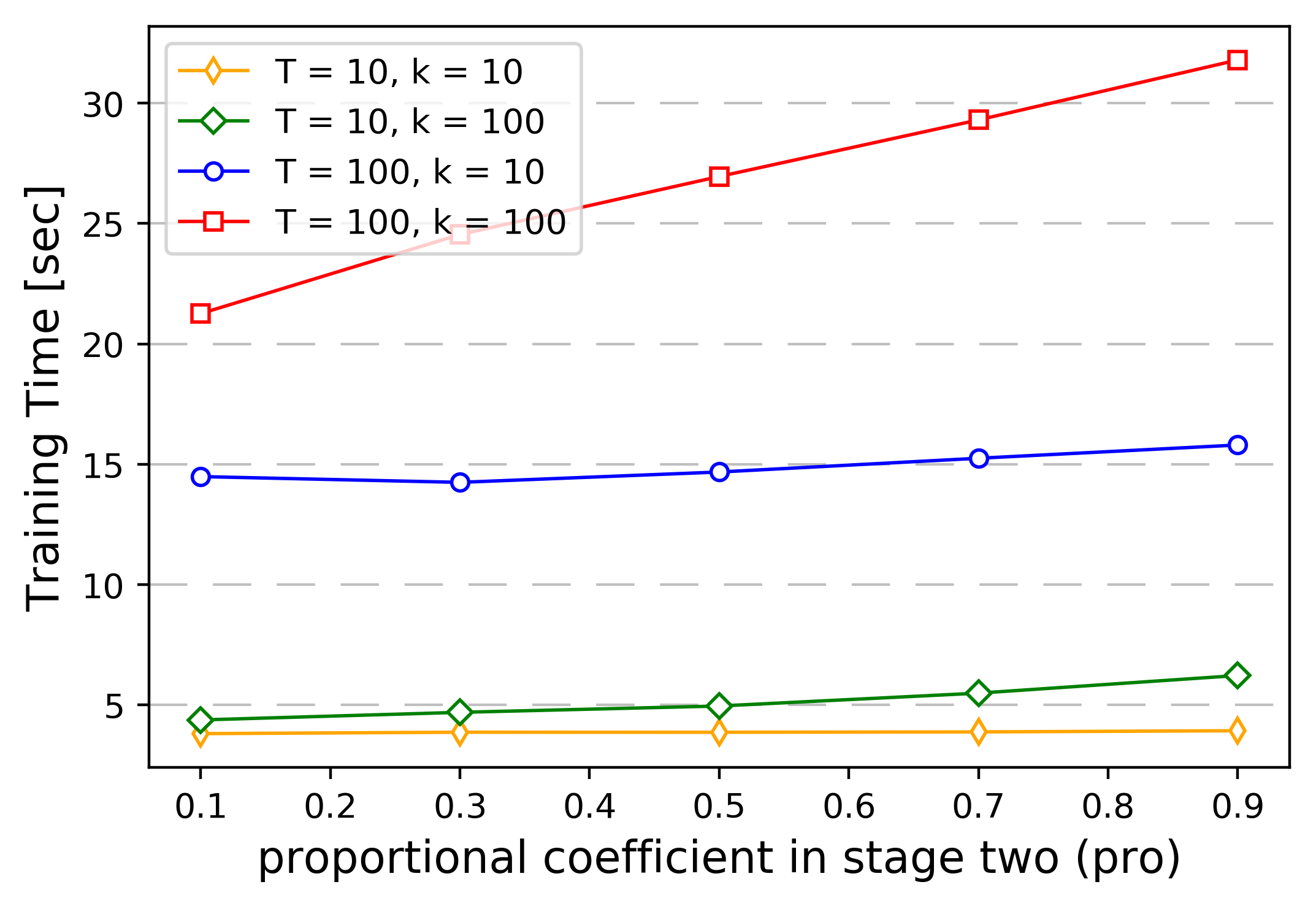}   
	\end{minipage} 
	\begin{minipage}[t]{0.43\textwidth}  
		\centering  
		\includegraphics[width=\textwidth]{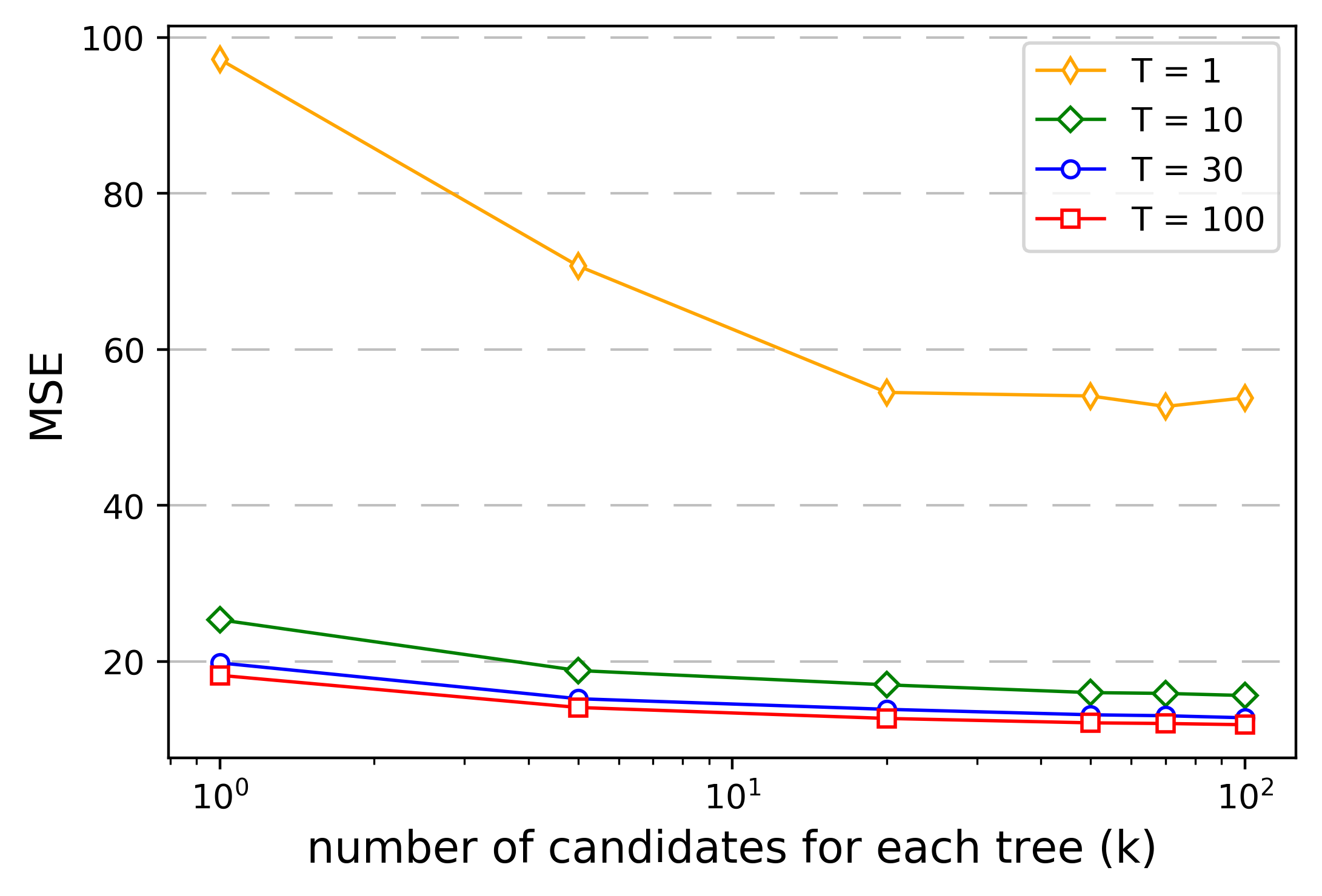}   
	\end{minipage} 
	\begin{minipage}[t]{0.43\textwidth}  
		\centering  
		\includegraphics[width=\textwidth]{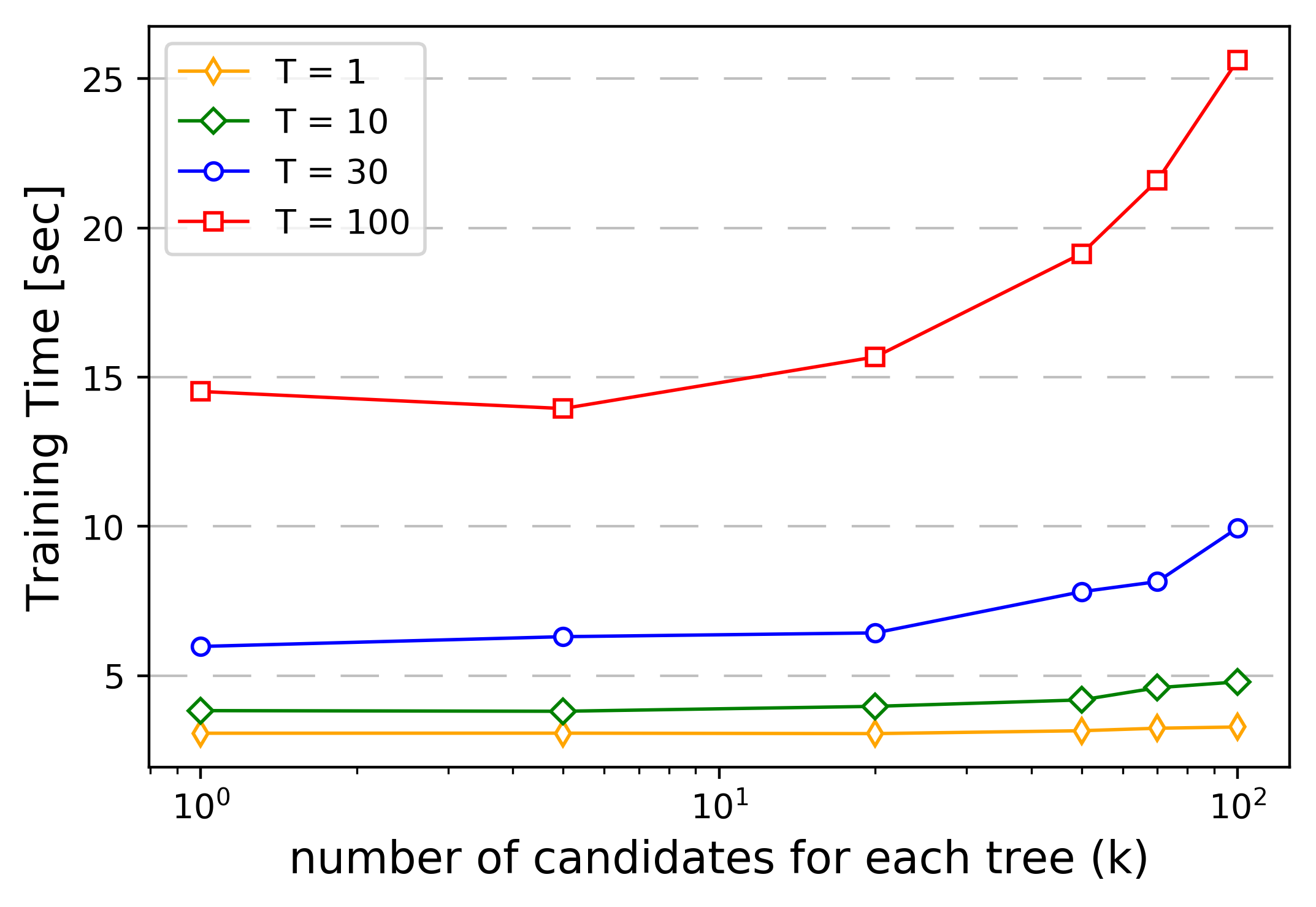}   
	\end{minipage} 
	\centering  
	\caption{\textit{MSE} and average training time of TBRF-C for the real-world data {\tt TCO}
		depending on different hyperparameters. The first row illustrates the results using different number of trees in a forest (T); figures in the second row contain change trend depending on the number of cells in the feature partition (m) in stage one; 
		figures in the third row describe the trend depending on different proportional coefficient ({\tt pro}) in stage two; 
		the last row shows the results using different number of candidates for each tree (k).}  
	\label{fig::DataAnalysisTBRF-C}
\end{figure}  

The fourth data set is the \emph{Year Prediction MSD Data Set} ({\tt MSD}) available on UCI. This data contains $463,715$ training samples and $51,630$ testing samples with $90$ attributes describing timbre average and timbre covariance. Each example is a song (track) released between 1922 and 2011. The main task is to predict the year in which a song was released based on audio features associated with the song.

\begin{figure}[htbp]
	\begin{minipage}[t]{0.43\textwidth}  
		\centering  
		\includegraphics[width=\textwidth]{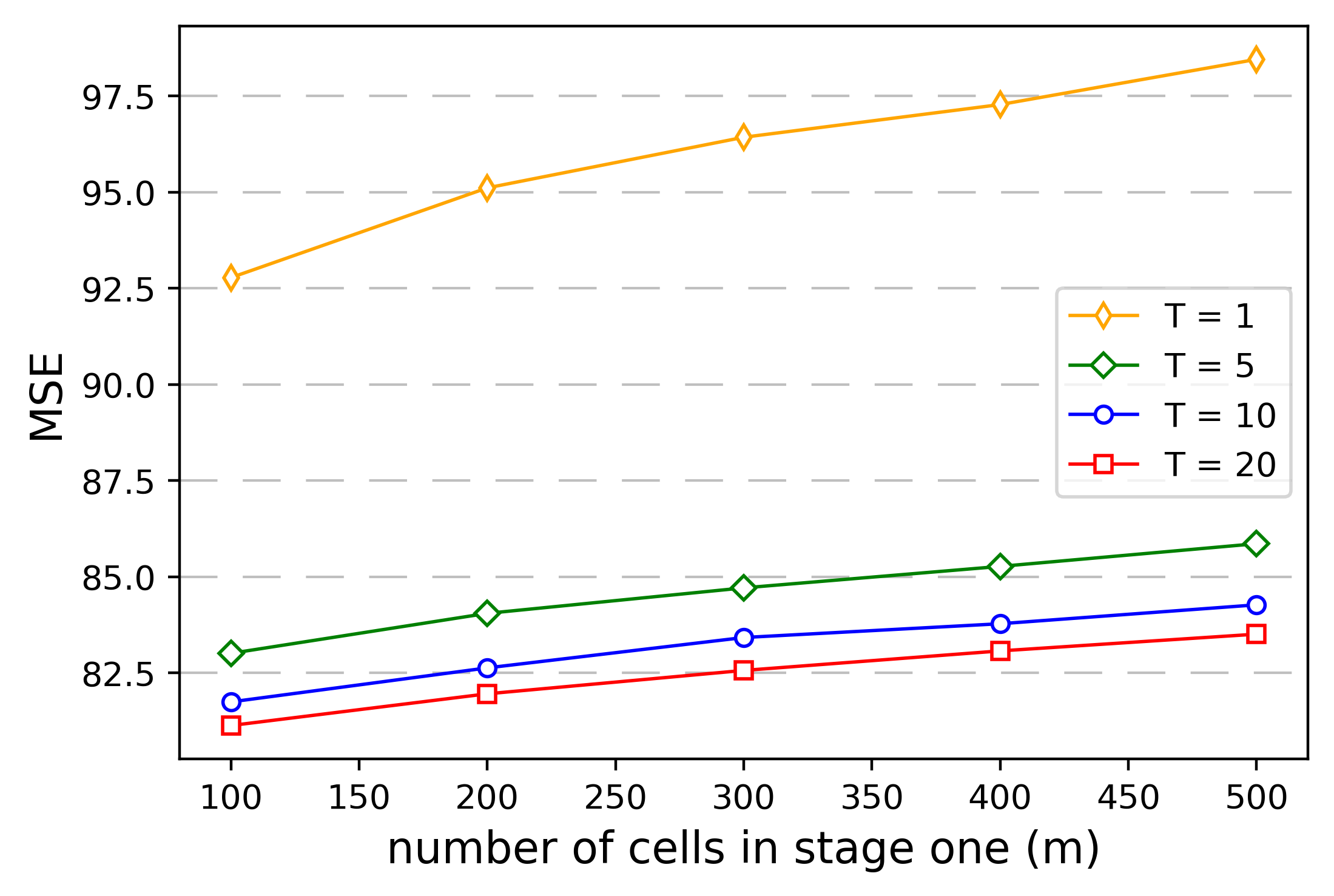}  
	\end{minipage}  
	\begin{minipage}[t]{0.43\textwidth}  
		\centering  
		\includegraphics[width=\textwidth]{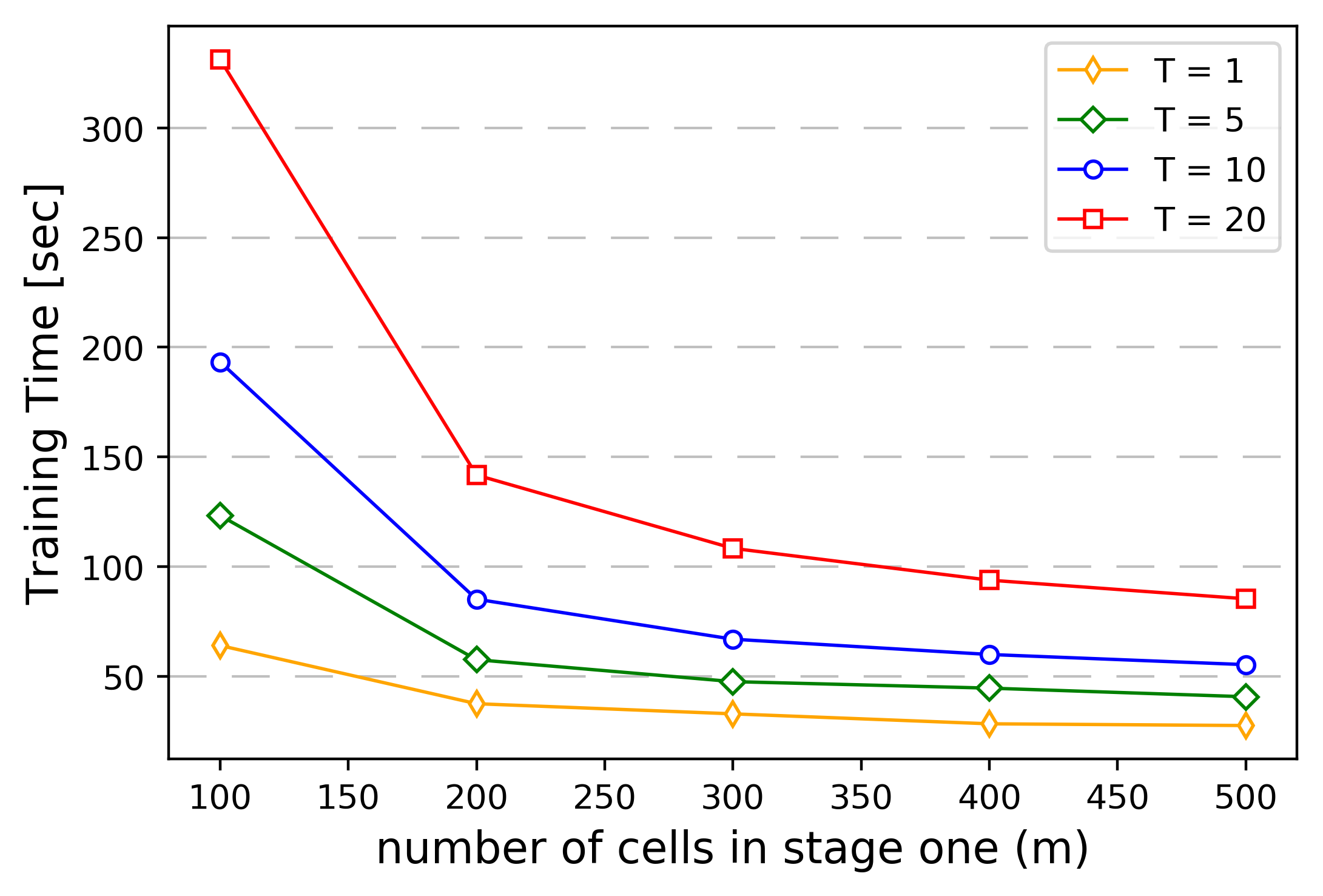}   
	\end{minipage} 
	\begin{minipage}[t]{0.43\textwidth}  
		\centering  
		\includegraphics[width=\textwidth]{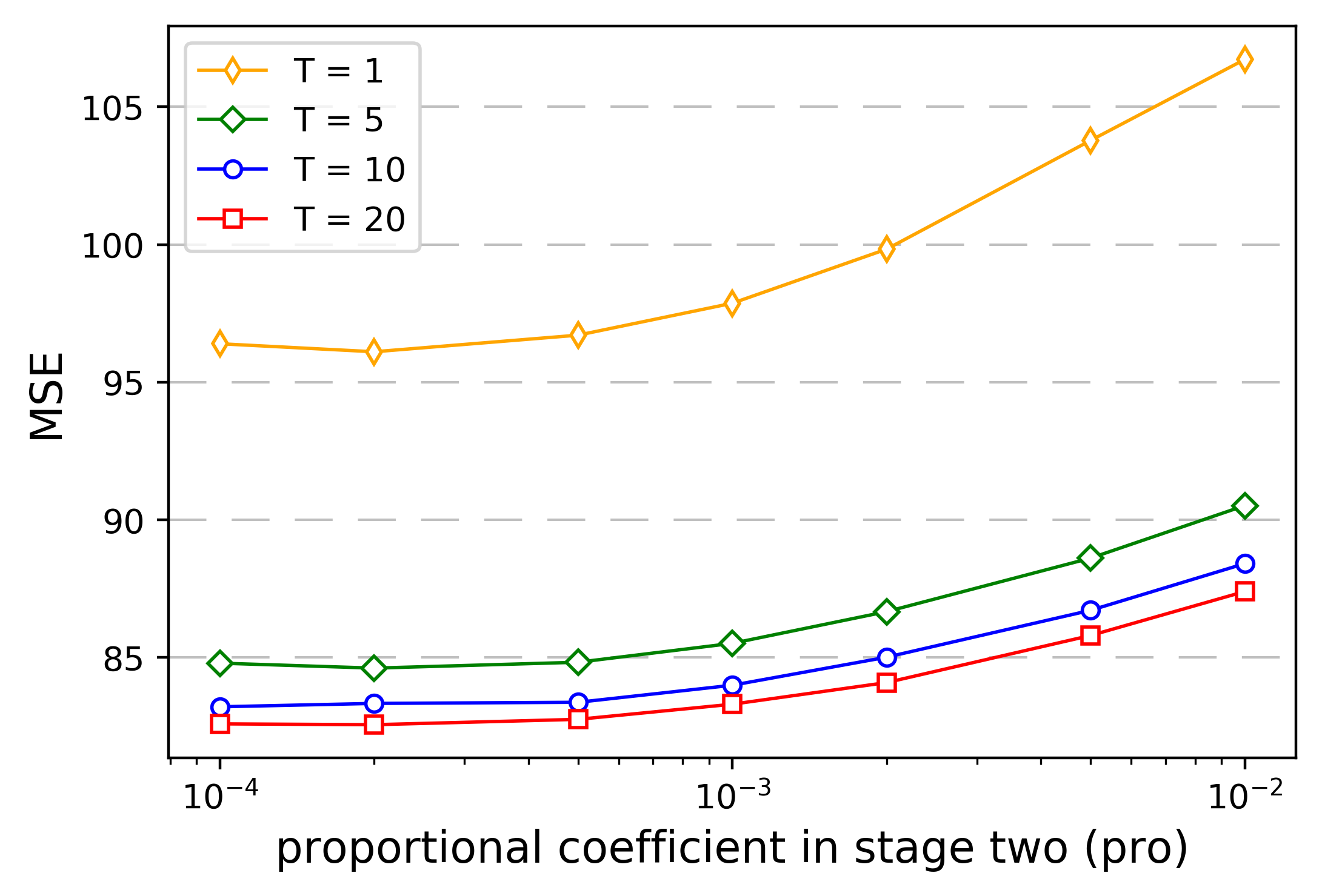}   
	\end{minipage} 
	\begin{minipage}[t]{0.43\textwidth}  
		\centering  
		\includegraphics[width=\textwidth]{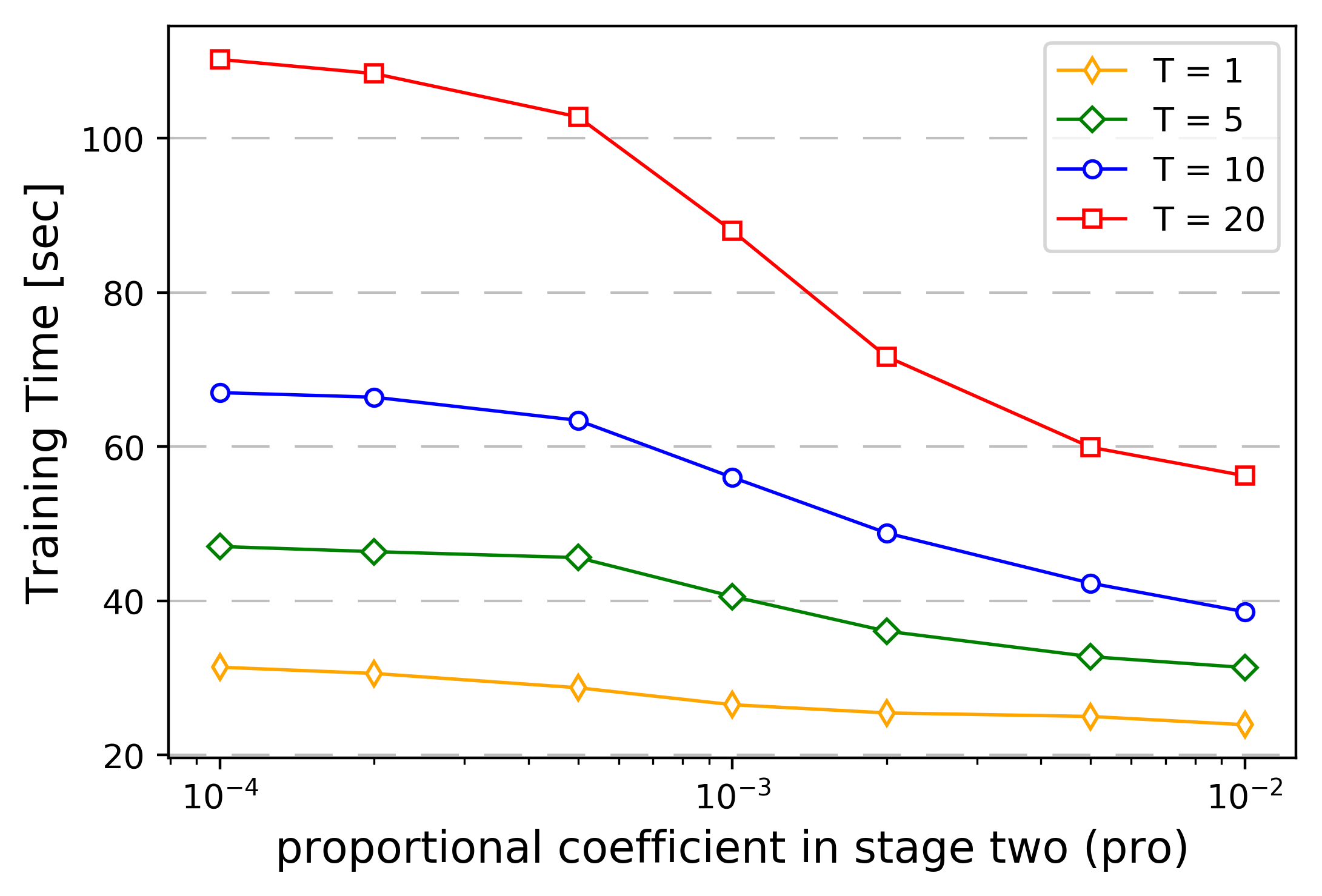}   
	\end{minipage} 
	\centering  
	\caption{\textit{MSE} and average training time of TBRF-G for the real-world data {\tt MSD}
		depending on different hyperparameters. The first row describes the change trend depending on the number of cells in the feature partition ($m$) in stage one; figures in the second row describe the trend depending on different proportional coefficient ({\tt pro}) in stage two. We mention that since the data size of {\tt MSD} is too large, we take the number of candidates for each tree $k = 1$ for simplicity in this case.}
	\label{fig::DataAnalysisTBRF-G}
\end{figure}

The parameter analysis of TBRF-C on {\tt TCO} is shown in Figure \ref{fig::DataAnalysisTBRF-C}. 
Experiments here are repeated for $50$ times. An observation on all subfigures will find that by increasing either $T$ or $k$, the accuracy of our forest algorithm will have a significant improvement with only some training time sacrificed. Moreover, for each of the different $(T, k)$ pairs under a fixed $m$, there exists a optimal {\tt pro} in stage two with regard to the test error.

The parameter analysis of TBRF-G on {\tt MSD} is shown in Figure \ref{fig::DataAnalysisTBRF-G}. Since the sample volume is relatively large, experiments here are repeated for $10$ times.
For experiments in the first row, we fixed the proportional coefficient {\tt pro} in stage two to $0.0001$. It can be observed that for fixed $T$, with $m$ or {\tt pro} increasing, which is approximately equivalent to partitioning the whole feature space into more child cells, the \textit{MSE} increases. It is because that more partitions will lead to more discontinuous boundaries, so that the ensemble predictors will also be less smooth than before. However, the increase in $m$ or {\tt pro} is beneficial for the training time. Specifically, when applying LS-SVMs with Gaussian kernel for value assignment in each child cell, the number of samples needed is reduced, which in turn speeds up the training procedure of LS-SVMs and then the whole algorithm. Moreover, with the number of trees in the forest $T$ increasing, the ensemble predictor will be smoother than ever so that lower \textit{MSE} can then be obtained. The analysis on experiments in the second row is nearly the same, only with fixed $m = 299$. To conclude, for TBRF-G, less splits on the feature space will bring less discontinuous boundaries so that the ensemble predictor will be smoother and lower \textit{MSE} can be achieved, even though the training time will be longer.


\subsection{Real Data Comparisons}
The previous sections have defined TBRF and achieved satisfying learning rates. Based on these, we wonder whether the theoretical advantages of our algorithm in terms of accuracy can still be preserved in practice compared to other state-of-the-art vertical strategies. Moreover, we are encouraged to explore from the experiments whether our vertical method can also save computational costs. Taking these into consideration, comparisons are made in between our method and the other vertical methods previously listed.

In addition to the four data sets {\tt TCO}, {\tt PTS}, {\tt SARCOS} and {\tt MSD} considered earlier in Sections \ref{sec::IllustrativeExamples} and \ref{sec::ParameterAnalysis}, other two data sets are introduced to give more comprehensive comparisons between approaches.


The fifth data set is the \emph{Appliances energy prediction Data Set} ({\tt AEP}) on UCI containing $19,735$ samples of dimension $27$ with attribute ``date'' removed from the original data set. The data is used to predict the appliances energy use in a low energy building.

The last real data set is the House-Price-8H prototask ({\tt HPP}) of the \emph{Census-house Data Set} available from the DELVE repository which contains $22,784$ samples of dimension $8$. To notify, for the sake of clarity, all house prices in the original data set has been modified to use one thousand dollars as the unit.

Note that except for {\tt MSD} and {\tt SARCOS} data sets, we divide each of the other four data sets randomly into training set and testing set with $70\%$ and $30\%$ of the total number of samples, respectively. Furthermore, all experiments conducted here are repeated for $10$ times. Now, we summarize the comparison results of TBRF, PK and VP-SVM in Table \ref{tab::1}.

\begin{table*}[h] 
	\setlength{\tabcolsep}{11pt}
	\centering
	\captionsetup{justification=centering}
	\caption{Mean Squared Error/ Training Time (seconds) on Test Data}
	\label{RMSETable}	
	\resizebox{\textwidth}{23mm}{
		\begin{tabular}{ccccccccc}
			\toprule
			\multirow{2}*{\text{Datasets}}
			& \multirow{2}*{$(n,d)$} 
			& \multicolumn{2}{c}{\text{TBRF}}
			& \multicolumn{2}{c}{\text{PK}} 
			& \multicolumn{2}{c}{\text{VP-SVM}}\\
			\cline{3-8}
			\multicolumn{2}{c}{}&\textit{MSE}&\text{Time} &\textit{MSE}&\text{Time} &\textit{MSE}&\text{Time} \\
			\hline 
			\hline
			{\tt TCO*} & $(48331, 2)$ & $\textbf{8.94}$ & $\textbf{40.99}$ & $10.77$ & $111.19$ & $10.85$ & $48.65$ \\
			{\tt PTS} & $(45730, 9)$ & $\textbf{13.43}$ & $\textbf{27.48}$ & $18.76$ & $255.31$ & $14.21$ & $60.06$\\
			{\tt SARCOS} & $(48933, 21)$ & $\textbf{1.20}$ & $32.26$ & $2.04$ & $158.69$ & 2.94 & $\textbf{27.94}$\\ 
			{\tt AEP} & $(19735, 27)$ & $\textbf{5261.9}$ & $\textbf{9.44}$ & $8194.3$ & $109.49$ & $7037.9$ & $11.79$\\ 
			{\tt HPP} & $(22784, 8)$ & $\textbf{1226.7}$ & $\textbf{14.41}$ & $1299.3$ & $195.62$ & $1273.8$ & $18.29$\\ 
			\hline
			\multirow{2}*{{\tt MSD*}}
			& \multirow{2}*{$(515345, 90)$} 
			& $\textbf{81.11}$ & $\textbf{326.85}$ & \multirow{2}*{$--$} & \multirow{2}*{$\geq 36$h}    & \multirow{2}*{$85.10$} & \multirow{2}*{$419.29$} \\ 
			& & $\textbf{80.33}$ & $3474.6$ &  &  & \\ 
			\bottomrule	
	\end{tabular}}
	\begin{tablenotes}
		\footnotesize
		\item{*} For data set {\tt TCO}, we adopt the $1$-NN-based TBRF since the \textit{MSE} of mean-based TBRF is not smaller than those of PK and VP-SVM. For other data sets, we employ mean-based TBRF which not only has high accuracy but also runs faster. Moreover, for {\tt MSD}, we employ the oblique random partitions to achieve better results. Partition methods on other data sets remain to be axis-parallel.
	\end{tablenotes}
	\label{tab::1}
\end{table*}

As is apparently observed from Table \ref{tab::1}, compared to other vertical methods, our TBRF has the lowest \textit{MSE} on all data sets, and by taking full advantages of the parallel computing, we also almost achieve the shortest training time. By contrast, PK is unable to be parallelized, leading to the longest training time.

It is well worth mentioning that aiming at significantly enhancing the prediction accuracy, appropriate variants of the TBRF are applied to the data sets proposed in Table \ref{tab::1}. For {\tt TCO}, considering that it is only a $2$-dimensional data set, we directly utilize the simplest and fastest TBRF-C to deal with the problem. We mention that we adopt the $1$-NN-based TBRF-C for this data set since the \textit{MSE} of mean-based TBRF-C is not smaller than those of PK and VP-SVM. 
For {\tt PTS}, since the performance of TBRF-L in terms of prediction accuracy is not much higher than that of TBRF-C, while the training speed is much lower, we adopt the mean-based TBRF-C for this case. It is the same for {\tt SARCOS}, {\tt AEP} and {\tt HPP}. However, {\tt MSD} data set is different from others for it not only has large volume, but more importantly, has high dimensionality. To address this, we employ the mean-based TBRF-G which has good command at dealing high-dimensional data set, since the other value assignment approaches are mainly fit for low-dimensional data. Moreover, adaptive oblique random partition is also employed to obtain better results. Having witnessed the better performance of TBRF over VP-SVM, especially on {\tt MSD}, we wonder the reasons for this phenomenon. It may be because that the boundary discontinuities thwart VP-SVM from being global smooth while our TBRF-G are able to achieve an asymptotic global smoothness thanks to the ensemble learning within the forest structure. Luckily, this viewpoint can be supported by the experimental results. For example, the lower part in Table \ref{tab::1} demonstrates that we obtain a much smaller \textit{MSE} of $81.11$ by utilizing TBRF-G and it only takes $326.85$ seconds to run the process. An even accurate result of $80.33$ as \textit{MSE} can be achieved if we sacrifice more time.

Experimental results presented so far are those we have temporarily tuned. 
More accurate results can be obtained if we sacrifice more training time, which is different from other methods for their accuracy are hard to be increased.
Readers who are interested in these experiments can try to use larger $T$, $k$ and appropriate $m$, {\tt pro} to further obtain even lower test errors.
\section{Proof}\label{sec::Proof}
To prove Proposition \ref{ApproximationError}, we need the following result which follows from Lemma 6.2 in \cite{Devroye86a}.

\begin{lemma} \label{Saturation}
	For a binary search tree with $n$ nodes, denote the saturation level $S_n$ as the number of full levels of nodes in the tree. Then for $k \geq 1$, $\log n > k + \log (k + 1)$, there holds
	\begin{align*}
	\mathrm{P} (S_n < k + 1) 
	\leq \Big( \frac{k + 1}{n} \Big) \Big( \frac{2e}{k} \log \Big( \frac{n}{k + 1} \Big) \Big)^k.
	\end{align*}
\end{lemma}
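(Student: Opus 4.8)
The plan is to pass from the statement about the saturation level $S_n$ to a union bound over the $2^k$ potential nodes at depth $k$ of the tree, and then to feed in Devroye's per-node estimate (Lemma~6.2 of \cite{Devroye86a}). The starting point is a purely structural observation about binary search trees: a level can be complete only if every shallower level is complete, since each of the $2^j$ nodes at depth $j$ has its parent among the $2^{j-1}$ potential nodes at depth $j-1$, so fullness of level $j$ forces fullness of level $j-1$. Consequently the complete levels of the tree always form a prefix $\{0,1,\dots,S_n-1\}$, and the event $\{S_n < k+1\} = \{S_n \le k\}$ is \emph{exactly} the event that level $k$ fails to be complete, i.e.\ that at least one of the $2^k$ node positions at depth $k$ --- each indexed by a sequence of left/right turns from the root --- is unoccupied. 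A union bound then yields
\begin{align*}
\mathrm{P}(S_n < k+1) \;\le\; 2^k \, q_{n,k}, \qquad \text{where } \ q_{n,k} := \mathrm{P}(\text{a fixed depth-}k\text{ node is unoccupied}),
\end{align*}
where I have used that all $2^k$ of these per-node probabilities coincide.

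That last equality of probabilities is the symmetry step. Along any fixed root-to-depth-$k$ path, the sequence of subtree sizes $N_0 = n, N_1, \dots, N_k$ is a Markov chain with $N_j$ conditionally uniform on $\{0,1,\dots,N_{j-1}-1\}$ given $N_{j-1}$: at a node carrying $m$ keys the left and right subtree sizes are $U$ and $m-1-U$ with $U$ uniform on $\{0,\dots,m-1\}$, hence exchangeable, so the transition kernel --- and therefore the whole law of $(N_0,\dots,N_k)$ --- does not depend on which turns were taken. The chosen position is occupied precisely when $N_k \ge 1$, so $q_{n,k} = \mathrm{P}(N_k = 0)$. Lemma~6.2 of \cite{Devroye86a} provides the bound $q_{n,k} \le \frac{k+1}{n}\bigl(\frac{e}{k}\log\frac{n}{k+1}\bigr)^k$; the hypothesis $\log n > k + \log(k+1)$ is exactly what makes $\log\frac{n}{k+1} > k > 0$, so this quantity is a genuine and meaningful bound. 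Multiplying by $2^k$ and using $2^k (e/k)^k = (2e/k)^k$ gives $\mathrm{P}(S_n<k+1) \le \frac{k+1}{n}\bigl(\frac{2e}{k}\log\frac{n}{k+1}\bigr)^k$, which is the claim.

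If one prefers to establish the per-node estimate directly rather than cite it, I would condition on $N_1$ to get the recursion $q_{n,k} = \frac1n\bigl(1 + \sum_{m=1}^{n-1} q_{m,k-1}\bigr)$ for $n \ge 1$, with $q_{m,0} = \eins_{\{m=0\}}$ and $q_{0,k} = 1$, and then prove by induction on $k$ a bound of the shape $q_{n,k} \lesssim (\log n)^{k-1}/((k-1)!\,n)$, comparing the inner sum to $\int_1^n (\log x)^{k-2} x^{-1}\,dx$; Stirling in the form $(k-1)! \ge (k/e)^{k-1}$ together with the running hypothesis then repackages this into the stated shape. The main obstacle here is not conceptual but arithmetic: getting the constants to land on precisely $\tfrac{2e}{k}$, $k+1$, and $\log\tfrac{n}{k+1}$ (rather than $\log n$) requires the careful bookkeeping that Devroye's Lemma~6.2 already performs, which is why invoking it is the cleanest route. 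The two points a referee would most want spelled out are the identity $\{S_n < k+1\} = \{\text{level } k \text{ incomplete}\}$ and the exchangeability that collapses the $2^k$ positions to a single $q_{n,k}$; everything after that is a citation or routine estimation.
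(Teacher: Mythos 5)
The paper gives no proof of this lemma beyond the remark that it ``follows from Lemma 6.2 in Devroye (1986)'', and your reconstruction --- reduce $\{S_n<k+1\}$ to the incompleteness of level $k$ via the prefix property of full levels, union-bound over the $2^k$ depth-$k$ positions, use the exchangeability of left/right subtree sizes to collapse these to a single per-node probability $q_{n,k}=\mathrm{P}(N_k=0)$, and absorb the factor $2^k$ into $(e/k)^k$ to obtain $(2e/k)^k$ --- is exactly the standard route by which that citation delivers the stated bound, with the hypothesis $\log n > k+\log(k+1)$ playing its correct role of making $\log\frac{n}{k+1}>k$. This is essentially the same approach as the paper's (a reduction to Devroye's per-node estimate), just written out explicitly; I see no gaps.
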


\begin{proof}[of Proposition \ref{ApproximationError}]
	Obviously we have
	\begin{align*}
	p^2 (g^*_{Z}) + \mathcal{R}_{L, \mathrm{P}}(g^*_{Z}) - \mathcal{R}_{L, \mathrm{P}}^*
	= \sum_{j=1}^m \big(\lambda_j (p^*_{Z_j})^2 + \mathcal{R}_{L_j, \mathrm{P}}(g^*_{ Z_j}) - \mathcal{R}_{L_j, \mathrm{P}}^* \big).
	\end{align*}
	Therefore, in order to analyze the global approximation error of $g^*_{Z}$, it suffices to consider the local approximation error
	of $g^*_{Z_j}$ corresponding to the cell $V_j$, $j \in \{1, \ldots, m\}$. For this purpose, we start by studying the behavior of the candidate best-scored random tree $g^*_{ Z_{j s}}$ for fixed $s \in \{1, \ldots, k_j\}$. 
	
	Denote $g^*_{Z_{j s}, p}$ as the function that minimize $\mathcal{R}_{L_j, \mathrm{P}}(g) - \mathcal{R}_{L_j, \mathrm{P}}^*$ in the function set $\hat{\mathcal{T}}_{Z_{j s}, p}$ defined in \eqref{ExtendedFunctionSet}.
	Elementary calculation shows that
	\begin{align*}
	g^*_{Z_{j s}, p}
	= \sum_{i=0}^p
	\frac{\int_{A_i} \mathbb{E}(Y|X) \ d \mathrm{P}_X}{\mathrm{P}_X(A_i)}
	\eins_{A_i}
	= \sum_{i=0}^p
	\frac{\int_{A_i} f_{L, \mathrm{P}}^* \ d \mathrm{P}_X}{\mathrm{P}_X(A_i)}
	\eins_{A_i},
	\end{align*}
	where $\mathcal{A} : = \{A_i\}_{i=0}^p$ are the rectangular cells generated by $Z_{j s}$ which forms a partition of $V_j$ and $\mathrm{P}_X(A_i)$ is the measure of $A_i$ with respect to the marginal distribution of $X$. Then
	\begin{align} \label{ApproximationOne}
	\mathcal{R}_{L_j, \mathrm{P}}(g^*_{Z_{j s}, p}) 
	- \mathcal{R}_{L_j, \mathrm{P}}^*
	& =  \| g^*_{ Z_{j s}, p} (x) - f_{L, \mathrm{P}}^* (x) \|_{L_2(\mathrm{P}_X|V_j)}^2
	\nonumber\\
	& = \| g^*_{ Z_{j s}, p} (x) \eins_{V_j} (x) - f_{L, \mathrm{P}}^* (x) \eins_{V_j} (x)\|_{L_2(\mathrm{P}_X)}^2 
	\nonumber\\
	& = \bigg\| \sum_{i=0}^p
	\frac{\int_{A_i} f_{L, \mathrm{P}}^* (z) \ d \mathrm{P}_X}{\mathrm{P}_X(A_i)}
	\eins_{A_i} (x) - \sum_{i=0}^p f_{L, \mathrm{P}}^* (x) \eins_{A_i} (x) \bigg\|_{L_2(\mathrm{P}_X)}^2 
	\nonumber\\
	& = \bigg\| \sum_{i=0}^p
	\frac{\eins_{A_i}(x)}{\mathrm{P}_X(A_i)} \int_{A_i} \big (f_{L, \mathrm{P}}^*(z) - f_{L, \mathrm{P}}^*(x) \big) \ d \mathrm{P}_X
	\bigg\|_{L_2(\mathrm{P}_X)}^2 
	\nonumber\\
	& \le \bigg\| \sum_{i=0}^p
	\frac{\eins_{A_i}(x)}{\mathrm{P}_X(A_i)} \int_{A_i} \big|f_{L, \mathrm{P}}^*(z) - f_{L, \mathrm{P}}^*(x) \big| \ d \mathrm{P}_X
	\bigg\|_{L_2(\mathrm{P}_X)}^2 
	\nonumber\\
	& = \bigg\| \sum_{i=0}^p \frac{\eins_{A_i}(x)}{\mathrm{P}_X(A_i)} \int_{A_i} \big|f_{L, \mathrm{P}}^*(z) - f_{L, \mathrm{P}}^*(x) \big| \ d \mathrm{P}_X
	\bigg\|_{L_2(\mathrm{P}_X | \mathcal{A}_1)}^2
	\nonumber\\
	& \phantom{=} + \bigg\| \sum_{i=0}^p \frac{\eins_{A_i}(x)}{\mathrm{P}_X(A_i)} \int_{A_i} \big|f_{L, \mathrm{P}}^*(z) - f_{L, \mathrm{P}}^*(x) \big| \ d \mathrm{P}_X
	\bigg\|_{L_2(\mathrm{P}_X | \mathcal{A}_2)}^2
	\end{align}
	where we decompose the error by the diameter of the cells in $\mathcal{A}$. That is
	\begin{align*}
	\mathcal{A}_1 := \{ A \in \mathcal{A} \ | \ \mathrm{diam}(A) \le h \} \quad \mathrm{and} \quad \mathcal{A}_2 := \{ A \in \mathcal{A} \ | \ \mathrm{diam}(A) > h \},
	\end{align*}
	where $\mathrm{diam}(A)$ is the diameter of $A$.
	In the following proof, we take the $L_1$-norm into consideration which leads to the definition of the diameter of the cell as
	$\mathrm{diam}(A) : = \sum_{i=1}^d V_i (A)$,
	where $V_i (A)$ denotes the length of the $i$-th dimension of the rectangle cell $A$.
	
	Let us now consider the first term in the decomposition \eqref{ApproximationOne}. For $A \in \mathcal{A}_1$, the diameter of the cell is less than $h$. Then for any $x, z \in A \in \mathcal{A}_1$, the distance between the two points satisfies $\| x - z \|_1 \leq h$.
	Using Assumption \eqref{Smoothness}, we get
	\begin{align} \label{DecompositionTermOne}
	\bigg\| \sum_{i=0}^p \frac{\eins_{A_i}(x)}{\mathrm{P}_X(A_i)} \int_{A_i} \big|f_{L, \mathrm{P}}^*(z) - f_{L, \mathrm{P}}^*(x) \big| \ d \mathrm{P}_X(z)
	\bigg\|_{L_2(\mathrm{P}_X | \mathcal{A}_1)}^2
	\leq \mathrm{P}_X (V_j) h^{2\alpha}.
	\end{align}
	For the second term in the decomposition \eqref{ApproximationOne}, 
	elementary considerations imply that
	\begin{align} \label{Approxempty} 
	\mathrm{P}_Z ( \{ A \ | \ A \in \mathcal{A}_2 \} = \emptyset)
	& = \mathrm{P}_Z ( \forall A \in \mathcal{A} : \text{diam}(A) \leq h ) 
	\nonumber \\
	& = \mathrm{P}_Z \Bigl( \max_{A \in \mathcal{A}} \  \mathrm{diam}(A) \leq h \Bigr).
	\end{align}	
	Then by Markov's inequality, we obtain
	\begin{align} \label{DecompositionMarkov}
	\mathrm{P}_Z \Big( \max_{A \in \mathcal{A}_Z} \mathrm{diam}(A) \leq h \Big)
	& \geq 1 - h^{-1} \mathbb{E}_Z \Big( \max_{A \in \mathcal{A}_Z} \ \mathrm{diam}(A) \Big) 
	\nonumber \\
	& = 1 - h^{-1} \mathbb{E}_Z \Big( \max_{A \in \mathcal{A}_Z} \sum_{i=1}^d V_i (A) \Big)  
	\nonumber \\
	& \geq 1 - h^{-1} \sum_{i=1}^d \mathbb{E}_Z \Big( \max_{A \in \mathcal{A}_Z} V_i (A) \Big).
	\end{align}
	As is mentioned previously, $Z$ is defined by $(Q_1, \ldots, Q_p, \ldots)$ where $Q_i = (L_i, R_i, S_i)$, $i=0, 1, \ldots$ in Section \ref{sec::RandomPartition}. From these we find that the randomness of $Z$ is the result of three factors, which are randomness in selecting leaves, randomness in picking dimensions, and randomness in determining cut points. Next, in order to calculate the expectation with respect to $Z$ in \eqref{DecompositionMarkov}, we conduct the following analysis suppose that the tree has already been well established.
	To be specific, for each dimension, we only need to consider one cell that has the longest side length in its respective dimension. Additionally, since there is symmetry between dimensions, it suffices to first concentrate on one dimension. For example, we consider the $i$-th dimension and denote the length of the $i$-th dimension of the corresponding cell as $\max_{A \in \mathcal{A}_Z} V_i (A) = : V_Z$. We do not have to know the exact constructing procedures of the tree entirety to calculate $\mathbb{E}_Z(V_Z)$. Instead, we still consider from three aspects which is intrinsically corresponding to the one stated above, but from a different view: 
	the total number of splits that generates that specific rectangle cell during the construction, $T_Z$; 
	the number of splits which come from the $i$-th dimension in $T_Z$, $K_Z$ and $K_Z$ follows the binomial distribution $\mathcal{B}(T_Z, 1/d)$; 
	and proportional factors $U_1, U_2, \ldots, U_{K_Z}$ which are independent and identically distributed from $\mathcal{U}[0,1]$.
	Accordingly, the expectation with regard to $Z$ can be decomposed as $\mathbb{E}_Z = \mathbb{E}_{T_Z}\mathbb{E}_{K_Z|T_Z}\mathbb{E}_{U_1\ldots U_{K_Z}|K_Z}$.
	Moreover, since $V_j$ is contained in a ball of radius $r_j$ and $\mathcal{A}$ form a partition of $V_j$, without loss of generality, we assume that the partition procedure is performed on a cube with side-length $2r_j$. 	
	According to the above analysis, the expectation in the last step in \eqref{DecompositionMarkov} can be further analyzed as follows:
	\begin{align*}
	\mathbb{E}_Z V_Z
	& \leq \mathbb{E}_{T_Z} \biggl( 
	\mathbb{E}_{K_Z} \bigg( 
	\mathbb{E}_{U_1\ldots U_{K_Z}} \biggl( 2 r_j \prod_{j=1}^{K_Z} \max \{ U_j, 1 - U_j \} \Big| K_Z \biggr) \Big| T_Z \biggr) \biggr)
	\\
	& = 2 r_j \mathbb{E}_{T_Z} \Big( \mathbb{E}_{K_Z} \Big( \Big( \mathbb{E}_U \big( \max \{ U, 1 - U \} \big) \Big)^{K_Z}  \big| T_Z \Big)\Big)
	= 2 r_j \mathbb{E}_{T_Z} \Big( \mathbb{E}_{K_Z} \Big( \big( 3 / 4 \big)^{K_Z} \big| T_Z \Big) \Big) 
	\\
	& = 2 r_j \mathbb{E}_{T_Z} \biggl( 
	\sum_{K_Z=1}^{T_Z} {T_Z \choose K_Z} 
	\biggl( \frac{3}{4} \biggr)^{K_Z} 
	\biggl( \frac{1}{d} \biggr)^{K_Z} 
	\biggl( 1 - \frac{1}{d} \biggr)^{T_Z - K_Z} \biggr)
	\\
	& = 2 r_j \mathbb{E}_{T_Z} \biggl( 1 - \frac{1}{d} + \frac{3}{4d} \biggr)^{T_Z}
	= 2 r_j \mathbb{E}_{T_Z} \biggl( 1 - \frac{1}{4d} \biggr)^{T_Z}.
	\end{align*}
	To notify, when the underlying partition rule $Z$ has number of splits $p$, 
	the partition tree is statistically related to a random binary search tree with $p+1$ external nodes and $p$ internal nodes. 
	Then, Lemma \ref{Saturation} states that for $k \ge 1$ and $\log (2p+1) > k + \log (k + 1)$,
	\begin{align*}
	\mathrm{P} (S_{2p+1} < k + 1) 
	\leq \biggl( \frac{k + 1}{2p+1} \biggr) 
	\biggl( \frac{2 e}{k} \log \biggl( \frac{2p+1}{k + 1} \biggr) \biggr)^k,
	\end{align*}
	where $S_{2p+1}$ is the \emph{saturation level}. In our specific setting, $S_{2p+1}$ can be viewed as the minimal number of splits that generates $A \in \mathcal{A}$. Now taking $k = \lfloor c_T \log (2p+1) \rfloor$ where $c_T < 1$ and $c_T (1 + \log (2 e / c_T)) < 1$, a simple calculation gives that
	\begin{align*}
	\mathrm{P}_Z (T_Z < \lfloor c_T \log (2p+1) \rfloor + 1)
	& \leq \mathrm{P} (S_{2p+1} < \lfloor c_T \log (2p+1) \rfloor + 1)
	\\
	& \leq C'(2p+1)^{c_T (1 + \log (2e / c_T)) - 1}
	\\
	& \leq C'(2p)^{c_T (1 + \log (2e / c_T)) - 1} 
	\\
	& \leq C p^{c_T (1 + \log (2e / c_T)) - 1},
	\end{align*}
	where $C'$ and $C$ are universal constants. As a result, we have
	\begin{align*}
	\mathbb{E}_Z V_Z
	& \leq 2 r_j \mathbb{E}_Z \Bigl( 1 - \frac{1}{4d} \Bigr)^{T_Z}
	\\
	& = 2 r_j \mathbb{E}_Z \Bigl( \Bigl( 1 - \frac{1}{4d} \Bigr)^{T_Z} \eins_{\{ T_Z < \lfloor c_T \log (2p+1) \rfloor + 1\}} \Bigr) \\
	& \phantom{=} + 2 r_j \mathbb{E}_Z \Bigl( \Bigl( 1 - \frac{1}{4d} \Bigr)^{T_Z} \eins_{\{T_Z \geq \lfloor c_T \log (2p+1) \rfloor + 1\}} \Bigr)
	\\
	& \leq 2 C r_j p^{c_T (1 + \log (2e / c_T)) - 1} + 2 r_j \big( 1 - 1/(4d) \big)^{c_T \log p}
	\\
	& \leq 2 C r_j p^{c_T (1 + \log (2e / c_T)) - 1} + 2 r_j p^{- c_T / (4d)},
	\end{align*}
	where the last inequality follows from the fact that $1 - 1 / x < e^{-x}$ for all $x > 1$. Since the function $f(c_T) = 1 - c_T (1 + \log (2e / c_T)) - c_T / (4d)$ is monotone decreasing on $(0, 1)$ for all $d$, numerical computation shows that the largest constant for which $1 - c_T (1 + \log (2e / c_T)) > c_T / (4d)$ holds for all $d \ge 1$ cannot be greater than $0.22563$.
	Therefore, taking $c_T = 0.22$ and $K = 2 C + 2$, there holds $\mathbb{E}_Z V_Z \le K r_j  p^{-c_T / (4d)}$. Therefore, we obtain that
	\begin{align} \label{DecompositionTwo}
	\sum_{i=1}^d \mathbb{E}_Z \Big( \max_{A \in \mathcal{A}_Z} V_i (A) \Big)
	\leq K r_j d p^{- c_T / (4d)}.
	\end{align}
	Combining \eqref{Approxempty}, \eqref{DecompositionMarkov} and \eqref{DecompositionTwo}, we have 
	\begin{align} \label{EstimationTwo}
	\mathrm{P}_Z ( \{ A \ | \ A \in \mathcal{A}_2 \} = \emptyset ) 
	\geq 1 - K r_j d h^{-1} p^{- c_T / (4d)}.
	\end{align}
	In other words, with probability at least $1 - K r_j d h^{-1} p^{- c_T / (4d)}$, the second term in the error decomposition \eqref{ApproximationOne} vanishes. 
	
	Now, the estimation \eqref{EstimationTwo} together with \eqref{DecompositionTermOne} yields that 
	\begin{align*}
	\mathcal{R}_{L_j, \mathrm{P}}(g^*_{Z_{j s}, p}) 
	- \mathcal{R}_{L_j, \mathrm{P}}^*
	\leq \mathrm{P}_X (V_j) h^{2\alpha}
	\end{align*}
	holds with probability at least $1 - K r_j d h^{-1} p^{- c_T / (4d)}$.
	With $e^{-\theta} := K r_j d h^{-1} p^{- c_T / (4d)}$, 
	simple calculation shows that with probability $\mathrm{P}_Z$ at least $1-e^{-\theta}$, there holds
	\begin{align} \label{ApproximationThree}
	\lambda_j p^2 + \mathcal{R}_{L_j, \mathrm{P}}(g^*_{Z_{j s}, p}) 
	- \mathcal{R}_{L_j, \mathrm{P}}^*
	\leq \lambda_j p^2 + \mathrm{P}_X (V_j) \big( K r_j d e^\theta p^{-c_T/(4d)} \big)^{2\alpha}.
	\end{align} 
	By minimizing both hand side of \eqref{ApproximationThree}, we obtain that
	\begin{align*}
	\lambda_j (p^*_{Z_{j s}})^2 + \mathcal{R}_{L_j, \mathrm{P}} (g^*_{Z_{j s}}) - \mathcal{R}_{L_j, \mathrm{P}}^*
	\leq c_{\alpha d}
	\Bigl( \mathrm{P}_X(V_j) r_j^{2 \alpha} e^{2 \alpha \theta} \Bigr)^{\frac{4 d}{c_T \alpha + 4 d}}
	\lambda_j^{\frac{c_T \alpha}{c_T \alpha + 4 d}}
	\end{align*}
	with the constant $c_{\alpha d}$ concerning only $\alpha$ and $d$.
	
	Then, the definition of $g^*_{ Z_j}$ in \eqref{BestScoreMinimizerPopulation} together with the independence of the $k_j$ trials implies that
	with probability at least $1-e^{- \theta}$ there holds
	\begin{align*}
	\lambda_j (p^*_{Z_j})^2 + \mathcal{R}_{L_j, \mathrm{P}}(g^*_{ Z_j}) - \mathcal{R}_{L_j, \mathrm{P}}^* 
	\leq 
	c_{\alpha d}
	\Bigl( \mathrm{P}_X(V_j)  r_j^{2 \alpha} e^{2 \alpha \theta / k_j} \Bigr)^{\frac{4 d}{c_T \alpha + 4 d}}
	\lambda_j^{\frac{c_T \alpha}{c_T \alpha + 4 d}}.
	\end{align*}
	Using the union bound, we obtain that
	\begin{align*}
	p^2 (g^*_{ Z}) & + \mathcal{R}_{L, \mathrm{P}}(g^*_{ Z}) - \mathcal{R}_{L, \mathrm{P}}^* 
	\leq c_{\alpha d} \sum_{j=1}^m
	\Bigl( \mathrm{P}_X(V_j)  r_j^{2 \alpha} e^{2 \alpha \theta / k_j} \Bigr)^{\frac{4 d}{c_T \alpha + 4 d}}
	\lambda_j^{\frac{c_T \alpha}{c_T \alpha + 4 d}}
	\end{align*}
	with probability at least $1 - m e^{- \theta}$. Once again with variable transformation $e^{-\tau} : = m e^{-\theta}$, we get the final result that
	\begin{align*}
	p^2 (g^*_{ Z}) & + \mathcal{R}_{L, \mathrm{P}}(g^*_{Z}) - \mathcal{R}_{L, \mathrm{P}}^* 
	\leq c_{\alpha d} \sum_{j=1}^m
	\Bigl( \mathrm{P}_X(V_j)  r_j^{2 \alpha} m^{2 \alpha / k_j} e^{2 \alpha \tau / k_j} \Bigr)^{\frac{4 d}{c_T \alpha + 4 d}}
	\lambda_j^{\frac{c_T \alpha}{c_T \alpha + 4 d}}
	\end{align*}
	with probability $\mathrm{P}_Z$ at least $1 - e^{- \tau}$.
\end{proof}

\begin{proof}[of Lemma \ref{VCIndex}]
	This proof is conducted from the perspective of geometric constructions. 
	Firstly, we concentrate on partition with the number of splits $p=1$. Because of the dimension of the feature space is $d$,  the smallest number of sample points that cannot be divided by $p=1$ split is $d+2$. Concretely, owing to the fact that $d$ points can be used to form $d-1$ independent vectors and hence a hyperplane in a $d$-dimensional space, we might take the following case into consideration: There is a hyperplane consisting of $d$ points all from one class, say class $A$, and two points $p_1^B$, $p_2^B$ from the opposite class $B$ located on the opposite sides of this hyperplane, respectively. We denote this hyperplane by $H_1^A$. In this case, points from two classes cannot be separated by one split (since the positions are $p_1^B, H_1^A, p_2^B$), so that we have $\mathrm{VC}(\mathcal{B}_1) \leq d + 2$. 
	
	Next, when the partition is with the number of splits $p=2$, we analyze in the similar way only by extending the above case a little bit. Now, we pick either of the two single sample points located on opposite side of the $H_1^A$, and add $d-1$ more points from class $B$ to it. Then, they together can form a hyperplane $H_2^B$ parallel to $H_1^A$. After that, we place one more sample point from class $A$ to the side of this newly constructed hyperplane $H_2^B$. In this case, the location of these two single points and two hyperplanes are $p_1^B, H_1^A, H_2^B, p_2^A$. Apparently, $p=2$ splits cannot separate these $2d+2$ points. As a result, we have $\mathrm{VC}(\mathcal{B}_2) \leq 2d + 2$.
	
	Inductively, the above analysis can be extended to the general case of number of splits $p \in \mathbb{N}$. In this manner, we need to add points continuously to form $p$ mutually parallel hyperplanes where any two adjacent hyperplanes should be constructed from different classes. Without loss of generality, we consider the case for $p=2k+1$, $k \in \mathbb{N}$, where two points (denoted as $p_1^B$, $p_2^B$) from class $B$ and $2k+1$ alternately appearing hyperplanes form the space locations: $p_1^B, H_1^A, H_2^B, H_3^A, H_4^B, \ldots, H_{(2k+1)}^A, p_2^B$. 
	Accordingly, the smallest number of points that cannot be divided by $p$ splits is $dp+2$, leading to $\mathrm{VC}(\mathcal{B}_p) \leq d p + 2$.
	
	Moreover, hyperplanes can be generated both vertically and obliquely according to the proof needs. This completes the proof.
\end{proof}

\begin{proof}[of Lemma \ref{BpTpCoveringNumbers}]
	The inequality \eqref{BpCoveringNumber} follows directly from Lemma \ref{VCIndex} and Theorem 9.2 in \cite{Kosorok08a}.
	
	For the inequality \eqref{TpCoveringNumber}, denote the covering number of $\eins_{\mathcal{B}_p}$ with respect to $\| \cdot \|_{L_2(Q)}$ as $\mathcal{N}(\varepsilon) : = \mathcal{N} \bigl( \eins_{\mathcal{B}_p}, \|\cdot\|_{L_2(Q)}, \varepsilon \bigr)$. Then, there exist $\eins_{B_1}, \ldots, \eins_{B_{\mathcal{N}(\varepsilon)}} \in \eins_{\mathcal{B}_p}$ such that the function set $\{\eins_{B_1}, \ldots, \eins_{B_{\mathcal{N}(\varepsilon)}}\}$ is an $\varepsilon$-net of $\eins_{\mathcal{B}_p}$ with respect to $\|\cdot\|_{L_2(Q)}$. This implies that for any $\eins_{B} \in \eins_{\mathcal{B}_p}$, there exists a $j \in \{ 1, \ldots, \mathcal{N}(\varepsilon) \}$ such that $\| \eins_B - \eins_{B_j} \|_{L_2(Q)} \leq \varepsilon$. 
	Now, for all $g \in \tilde{\mathcal{T}}_p$, the equivalent definition \eqref{Tp2} of $\tilde{\mathcal{T}}_p$ tells us that there exists a $\eins_{B} \in \eins_{\mathcal{B}_p}$ such that $g$ can be written as $g = \eins_B - \eins_{B^c} = 2 \eins_B - 1$. The above discussion yields that there exists a $j \in \{ 1, \ldots, \mathcal{N}(\varepsilon) \}$ such that for $g_j := 2 \eins_{B_j} - 1$, there holds
	\begin{align*}
	\| g - g_j \|_{L_2(Q)} 
	& = \| (2 \eins_B - 1) - (2 \eins_{B_j} - 1) \|_{L_2(Q)} 
	 = \| 2 \eins_B - 2 \eins_{B_j} \|_{L_2(Q)} 
	\\
	& = 2 \| \eins_B - \eins_{B_j} \|_{L_2(Q)} 
	 \leq 2 \varepsilon.
	\end{align*}
	This implies that $\{ g_1, \ldots, g_{\mathcal{N}(\varepsilon)} \}$ is a $2 \varepsilon$-net of $\tilde{\mathcal{T}}_p$ with respect to $\|\cdot\|_{L_2(Q)}$. Consequently, we obtain
	\begin{align*}
	\mathcal{N} \bigl( \tilde{\mathcal{T}}_p, \|\cdot\|_{L_2(Q)}, \varepsilon \bigr) 
	\leq \mathcal{N} \bigl( \eins_{\mathcal{B}_p}, \|\cdot\|_{L_2(Q)}, \varepsilon / 2 \bigr) 
	\leq K (d p + 2) (4 e)^{d p + 2} ( 2 / \varepsilon )^{2 (d p + 1)},
	\end{align*}
	we thus proved the assertion.
\end{proof}

\begin{proof}[of Lemma \ref{HrEntropyNumber}]
	First, we notice that for all $g \in \tilde{\mathcal{T}}_r$, the number of splits $p$ can be upper bounded by $q : = \sum_{j=1}^m \lceil (r / \lambda_j)^{1/2} \rceil$. 
	Then, the nested relation implies that $\tilde{\mathcal{T}}_r \subset \tilde{\mathcal{T}}_q$. 
	Therefore, Lemma \ref{BpTpCoveringNumbers} implies that the covering number of $\tilde{\mathcal{T}}_r$ with respect to $L_2(\mathrm{D})$ satisfies
	\begin{align} \label{GrCoveringNumber}
	\mathcal{N} ( \tilde{\mathcal{T}}_r, \|\cdot\|_{L_2(\mathrm{D})}, \varepsilon)
	\leq \mathcal{N} ( \tilde{\mathcal{T}}_q, \|\cdot\|_{L_2(\mathrm{D})}, \varepsilon)
	\leq  K (d q + 2) (4e)^{d q + 2} ( 2 / \varepsilon)^{2 ( d q + 1)}.
	\end{align}
	
	For the least square loss $L$, we have for any $h, \tilde{h} \in \tilde{\mathcal{H}}_r$,
	\begin{align*}
	\| h - \tilde{h} \|_{L_2(\mathrm{D})} 
	& = \biggl( \frac{1}{n} \sum_{i=1}^n \bigl( h(x_i, y_i) - \tilde{h}(x_i, y_i) \bigr)^2 \biggr)^{1/2} 
	 = 2  \biggl( \frac{1}{n} \sum_{i=1}^n \bigl( g(x) - \tilde{g} (x) \bigr)^2 \biggr)^{1/2}
	\\
	& = 2 \| g - \tilde{g} \|_{L_2(\mathrm{D})}. 
	\end{align*}
	Therefore, similarly as the proof in Lemma \ref{BpTpCoveringNumbers}, we obtain
	\begin{align*}
	\mathcal{N} (\tilde{\mathcal{H}}_r, \|\cdot\|_{L_2(\mathrm{D})}, \varepsilon)
	\leq \mathcal{N} (\tilde{\mathcal{G}}_r, \|\cdot\|_{L_2(\mathrm{D})}, 2 \varepsilon)
	\leq K (d q + 2) (4e)^{d q + 2} ( 4 / \varepsilon)^{2 ( d q + 1)},
	\end{align*}
	where the later inequality follows from the estimate \eqref{GrCoveringNumber}. Elementary calculations show that for any $ 0< \varepsilon < 1/ \max \{ e, K \}$ and $q \geq 1$, there holds
	\begin{align*}
	\log \mathcal{N} (\tilde{\mathcal{H}}_r, \|\cdot\|_{L_2(\mathrm{D})}, \varepsilon)
	\leq 25 d q \log ( 1 / \varepsilon )
	\leq 25 d \sum_{j=1}^m (r / \lambda_j)^{1/2} \log ( 1 / \varepsilon ).
	\end{align*}
	Consequently, for all $\delta \in (0, 1)$, we have
	\begin{align} \label{Differenting}
	\sup_{\varepsilon \in (0, 1 / \max \{ e, K \})} \  \varepsilon^{2 \delta} \log \mathcal{N} (\tilde{\mathcal{H}}_r, \|\cdot\|_{L_2(\mathrm{D})}, \varepsilon)
	\leq 25 d \sum_{j=1}^m (r / \lambda_j)^{1/2} \sup_{\varepsilon \in (0,1)} \varepsilon^{2 \delta} \log ( 1 / \varepsilon ).
	\end{align}
	For any fixed $\delta \in (0, 1)$,
	simple analysis shows that the right hand side of \eqref{Differenting} is maximized
	at $\varepsilon^* = e^{- 1 / (2 \delta)}$ and consequently we obtain
	\begin{align*}
	\log \mathcal{N} (\tilde{\mathcal{H}}_r, \|\cdot\|_{L_2(\mathrm{D})}, \varepsilon)
	\le \frac{25d}{2 e \delta} \sum_{j=1}^m (r / \lambda_j)^{1/2} \varepsilon^{-2 \delta}.
	\end{align*}
	Then Exercise 6.8 in \cite{StCh08} implies that the entropy number bound of $\tilde{\mathcal{H}}_r$ with respect to $L_2(\mathrm{D})$ satisfies
	\begin{align} \label{EntropyNumberHr}
	e_i (\tilde{\mathcal{H}}_r, \|\cdot\|_{L_2(\mathrm{D})})
	\leq \biggl( \frac{75 d}{2 e \delta} \sum_{j=1}^m (r / \lambda_j)^{1/2} \biggr)^{1/(2 \delta)}
	i^{- 1/(2 \delta)}.
	\end{align}
	Obviously this bound holds for $\mathbb{E}_{\mathrm{D} \sim \mathrm{P}} \, e_i (\tilde{\mathcal{H}}_r, \|\cdot\|_{L_2(\mathrm{D})})$ as well. The proof is finished.
\end{proof}

\begin{proof}[of Lemma \ref{Rademacher}]
	First notice that for all $h \in \tilde{\mathcal{H}}_r$, there holds
	\begin{align*}
	\mathbb{E}_{\mathrm{P}} h^2 \leq 16 r = : \sigma^2.
	\end{align*}
	Now $\|h\|_{\infty} \leq 4 = : B$,
	$a := \big( 75d / (2e \delta)\sum_{j=1}^m (r / \lambda_j)^{1/2} \big)^{1/(2 \delta)} \geq B$ in Lemma \ref{HrEntropyNumber} together with Theorem 7.16 in \cite{StCh08} yields that
	\begin{align*}
	\mathbb{E}_{\mathrm{D} \sim \mathrm{P}} \mathrm{Rad}_{\mathrm{D}} & (\tilde{\mathcal{H}}_r, n) \\
	& \le
	\mathrm{max} \bigg\{
	10 c_{\delta,1} 
	\bigg(\frac{6d}{e\delta n} \sum_{j=1}^m \lambda_j^{-1/2}\bigg)^{\frac{1}{2}} r^{\frac{3-2\delta}{4}},
	5 c_{\delta,2}  \bigg(\frac{6d}{e\delta n}\sum_{j=1}^m \lambda_j^{-1/2}\bigg)^{\frac{1}{1+\delta}} r^{\frac{1}{2+2\delta}}
	\bigg\},
	\end{align*}
	where
	\begin{align*}
	c_{\delta,1} : = \frac{2 \sqrt{\log 256} c_\delta^\delta}{(\sqrt{2} - 1) (1 - \delta) 2^{\delta/2}}
	\quad
	\mathrm{and}
	\quad
	c_{\delta,2} : = \Big( \frac{8 \sqrt{\log 16} c_\delta^\delta}{(\sqrt{2} - 1) (1 - \delta) 4^\delta}
	\Big)^{\frac{2}{1+\delta}},
	\end{align*}
	with
	\begin{align*}
	c_\delta : = \frac{\sqrt{2} - 1}{\sqrt{2} - 2^{\frac{2\delta - 1}{2\delta}}} \cdot \frac{1 - \delta}{\delta}.
	\end{align*}
	We thus derive that
	\begin{align*}
	\mathbb{E}_{\mathrm{D} \sim \mathrm{P}} \mathrm{Rad}_{\mathrm{D}} & (\mathcal{H}_r, n) 
	\leq M \mathbb{E}_{\mathrm{D} \sim \mathrm{P}} \mathrm{Rad}_{\mathrm{D}} (\tilde{\mathcal{H}}_r, n) 
	\\
	& \leq
	\mathrm{max} \bigg\{
	10 M c_{\delta,1}
	\bigg(\frac{6d}{e\delta n} \sum_{j=1}^m \lambda_j^{-1/2}\bigg)^{\frac{1}{2}} r^{\frac{3-2\delta}{4}},
	5 M c_{\delta,2} \bigg(\frac{6d}{e\delta n}\sum_{j=1}^m \lambda_j^{-1/2}\bigg)^{\frac{1}{1+\delta}} r^{\frac{1}{2+2\delta}}
	\bigg\},
	\end{align*}
	where $c_{\delta,1}$ and $c_{\delta,2}$ are the same as defined above.
\end{proof}

\begin{proof}[of Theorem \ref{OracleVPtree}]
	For the least square loss $L$, the supremum bound
	\begin{align*}
	L(x, y, t) \leq B, \quad 
	\forall \ (x, y) \in \mathcal{X} \times \mathcal{Y}, \ t \in [-M, M]
	\end{align*}
	holds for $B = 4 M^2$ and the variance bound
	\begin{align*}
	\mathbb{E}_{X \times Y} \big(L \circ g - L \circ f_{L, \mathrm{P}}^*\big)^2 
	\leq V \big(\mathbb{E}_{X \times Y} \big(L \circ g - L \circ f_{L, \mathrm{P}}^*\big)\big)^\vartheta, \quad
	\forall \ g : \mathcal{X} \to \mathcal{Y}
	\end{align*}
	holds for $V = 16 M^2$ and $\vartheta = 1$. 
	Moreover, Lemma \ref{Rademacher} implies that the expected empirical Rademacher average of $\mathcal{H}_r$ defined in \eqref{Hr} can be upper bounded by the function $\varphi_n(r)$ as
	\begin{align*}
	\varphi_n (r) : = \mathrm{max} \bigg\{
	10 M c_{\delta,1}
	\bigg(\frac{6d}{e\delta n} \sum_{j=1}^m \lambda_j^{-1/2}\bigg)^{\frac{1}{2}} r^{\frac{3-2\delta}{4}},
	5 M c_{\delta,2} \bigg(\frac{6d}{e\delta n}\sum_{j=1}^m \lambda_j^{-1/2}\bigg)^{\frac{1}{1+\delta}} r^{\frac{1}{2+2\delta}}
	\bigg\},
	\end{align*}
	where the constants $c_{\delta,1}$ and $c_{\delta,2}$ are defined as in the proof of Lemma \ref{Rademacher}. It can be easily concluded that for this $\varphi_n$, the condition that $\varphi_n(4r) \le 2 \sqrt{2} \varphi_n(r)$ is satisfied. This implies that the statements of the Peeling Theorem 7.7 in \cite{StCh08} still hold for $\varphi_n(4r) \le 2 \sqrt{2} \varphi_n(r)$. Accordingly, the assumption concerning $\varphi_n$ and $r$ in Theorem 7.20 in \cite{StCh08} should be modified to $\varphi_n(4r) \le 2 \sqrt{2} \varphi_n(r)$ and
	\begin{align*}
	r \ge \max \big\{ 75 \varphi_n(r), 1152 M^2 \tau / n, r^* \big\}
	\end{align*}
	respectively. Some elementary calculations show that the condition $r \ge 75 \varphi_n(r)$ is satisfied if
	\begin{align*}
	r \ge c_{d \delta M} \bigg(\frac{1}{n}\sum_{j=1}^m \lambda_j^{-1/2}\bigg)^{\frac{2}{1+2\delta}},
	\end{align*}
	where the constant $c_{d \delta}$ depends only on $d$, $\delta$ and $M$.
	In the end, from definition \eqref{RStar} we have $r^* \le p^2 (g_{L, D, Z}) + \mathcal{R}_{L, \mathrm{P}} (g_{L, \mathrm{P}, Z}) - \mathcal{R}_{L, \mathrm{P}}^*$ and the assertion follows from Theorem 7.20 in \cite{StCh08}.
\end{proof}

\begin{proof}[of Theorem \ref{RateVPTree}]
	Theorem \ref{OracleVPtree} together with Proposition \ref{ApproximationError} implies that with probability $\mathrm{P}_{(X \times Y) \otimes Z}$ at least $1 - 4 e^{-\tau}$, there holds that
	\begin{align} \label{RateOne}
	p^2 (g_{Z}) + \mathcal{R}_{L, \mathrm{P}} (g_{Z}) - \mathcal{R}_{L, \mathrm{P}}^*
	& \leq 
	9 c_{\alpha d}
	\sum_{j=1}^m
	\Bigl( \mathrm{P}_X(V_j) r_j^{2 \alpha} m^{2 \alpha / k_j} e^{2 \alpha \tau / k_j} \Bigr)^{\frac{4 d}{c_T \alpha + 4 d}}
	\lambda_j^{\frac{c_T \alpha}{c_T \alpha + 4 d}}
	\nonumber\\
	& \phantom{=} 
	+ 3 c_{d \delta} \bigg(\frac{1}{n}\sum_{j=1}^m \lambda_j^{-1/2}\bigg)^{\frac{2}{1+2\delta}}
	+ 3456 M^2 \tau / n,
	\end{align}
	where $c_{d \delta}$ and $c_{\alpha d}$ are the constants defined in Theorem \ref{OracleVPtree} and Proposition \ref{ApproximationError} respectively. To minimize the right hand side of \eqref{RateOne}, we choose the regularization parameter as
	\begin{align*}
	\lambda_j : = c_j n^{-\frac{c_T \alpha + 4d}{c_T \alpha (1 + \delta) + 2d}}, \quad \forall j \in \{1, \ldots, m\},
	\end{align*}
	where $c_j$ is a constant depending on $\alpha, \tau, m, d, \delta, M, r_j, k_j$ and $\mathrm{P}_X (V_j)$. Therefore, we obtain that
	\begin{align*} 
	p^2 (g_{Z}) 
	+ \mathcal{R}_{L, \mathrm{P}} (g_{Z}) 
	- \mathcal{R}_{L, \mathrm{P}}^* 
	\leq
	C n^{-\frac{c_T \alpha}{c_T \alpha (1 + \delta) + 2d}},
	\end{align*}
	with the constant $C$ depending on $\alpha, \tau, \delta, d, m, M$ and $\{r_j, k_j, \mathrm{P}_X (V_j) \}_{j=1}^m$. 
\end{proof}

\begin{proof} [of Theorem \ref{RateVPForest}]
	For the least square loss $L$, there holds
	\begin{align*}
	\mathcal{R}_{L, \mathrm{P}} (f) - \mathcal{R}_{L, \mathrm{P}}^* = \int_\mathcal{X} \big(f- f_{L, \mathrm{P}}^* \big)^2 \ d \mathrm{P}_X 
	= \| f - f_{L, \mathrm{P}}^* \|_{L_2(\mathrm{P}_X)}^2.
	\end{align*}
	Consequently, combining with Cauchy-Schwarz inequality we have for the two-stage best-scored random forest \eqref{TBRF}
	\begin{align*}
	\mathcal{R}_{L, \mathrm{P}} (f_{Z}) - \mathcal{R}_{L, \mathrm{P}}^*
	& = \int_\mathcal{X} \Big(\frac{1}{T} \sum_{t=1}^T g_{ Z_t} - f_{L, \mathrm{P}}^* \Big)^2 \ d \mathrm{P}_X 
	\\
	&\leq \frac{1}{T} \sum_{t=1}^T \int_{\mathcal{X}} \Big(g_{Z_t} - f_{L, \mathrm{P}}^* \Big)^2 \ d \mathrm{P}_X 
	= \frac{1}{T} \sum_{t=1}^T \Big( \mathcal{R}_{L, \mathrm{P}} (g_{Z_t}) - \mathcal{R}_{L,  \mathrm{P}}^* \Big).
	\end{align*}
	The union bound together with Theorem \ref {RateVPTree} states that
	\begin{align*}
	\mathrm{P}_{(X \times Y) \otimes Z} 
	& \Big( \mathcal{R}_{L, \mathrm{P}} (f_{ Z}) - \mathcal{R}_{L, \mathrm{P}}^*
	> C n^{-\frac{c_T \alpha}{c_T \alpha (1 + \delta) + 2d}} \Big) \\
	& \leq
	\sum_{t=1}^T
	\mathrm{P}_{(X, Y) \otimes Z} 
	\Big(\mathcal{R}_{L, \mathrm{P}} (g_{Z_t}) - \mathcal{R}_{L, \mathrm{P}}^* > C n^{-\frac{c_T \alpha}{c_T \alpha (1 + \delta) + 2d}} \Big)
	\leq 4T e^{-\tau},
	\end{align*}
	where $C$ is as in Theorem \ref{RateVPTree}. As a result, with probability $\mathrm{P}_{(X \times Y) \otimes Z} $ at least $1 - 4e^{- \tau}$, there holds
	\begin{align*}
	\mathcal{R}_{L, \mathrm{P}} (f_{Z}) - \mathcal{R}_{L, \mathrm{P}}^*
	\leq C n^{-\frac{c_T \alpha}{c_T \alpha (1 + \delta) + 2d}},
	\end{align*}
	where $C$ depending on $\alpha, \tau, \delta, d, m, M, T$ and $\{\{r_j^t, k_j^t, \mathrm{P}_X (V_j^t)\}_{j=1}^m\}_{t=1}^T$. 
\end{proof}

\section{Conclusion}\label{sec::Conclusion}

In this paper, we proposed and explored a new vertical method for large-scale regression called two-stage best-scored random forest (TBRF) by conducting a statistical learning treatment. This strategy is a just fit for the big data era for its computational efficiency by taking utmost advantage of the parallel computing. More valuable as it is, it is born to settle the boundary discontinuity which has long been a problem to the existing vertical strategies. The \textit{two-stage} stands for dividing the original random tree splitting procedure into two.
To elucidate, we first adopt an adaptive random partition method to split the feature space into different non-overlapping cells in stage one. This stage one serves as a preprocessing for the parallel computing. Then in stage two, we develop the child best-scored random trees for regression on cells, gather them together to form a parent tree. Here, \emph{best-scored} means to select the best performing one. By utilizing the randomness consisting in the partition, ensemble learning can naturally come into being which smoothes the discontinuous boundaries, and the resulting forest therefore reaches excellent asymptotic smoothness. Moreover, the TBRF can also be recognized as an inclusive and versatile framework where different mainstream regression strategies such as support vector regression can be incorporated as value assignment approaches to leaves of trees. Consequently, there can be a lot of high effective and efficient variants of TBRF available to be chosen according to the specific data sets at hand.
Various numerical experiments on synthetic data and real data are given to provide insight into our TBRF. Moreover, comparisons are conducted with other state-of-the-art vertical methods which once again verifies the effectiveness and high efficiency of our novel random forest model.

\acks{The authors are grateful to Professor Ingo Steinwart for his valuable comments and suggestions.
	Hanyuan Hang and Yingyi Chen are supported by fund for building world-class universities (disciplines) of Renmin University of China.
	Johan Suykens acknowledges support of ERC Advanced Grant E-DUALITY 
	(787960), Research Council KU Leuven C14/18/068, FWO GOA4917N.
	The corresponding author is Yingyi Chen.}


\begin{thebibliography}{34}
	\providecommand{\natexlab}[1]{#1}
	\providecommand{\url}[1]{\texttt{#1}}
	\expandafter\ifx\csname urlstyle\endcsname\relax
	\providecommand{\doi}[1]{doi: #1}\else
	\providecommand{\doi}{doi: \begingroup \urlstyle{rm}\Url}\fi
	
	\bibitem[Bennett and Blue(1998)]{bennett98a}
	Kristin~P. Bennett and Jennifer~A. Blue.
	\newblock A support vector machine approach to decision trees.
	\newblock In \emph{Neural Networks Proceedings, 1998. IEEE World Congress on
		Computational Intelligence. The 1998 IEEE International Joint Conference on},
	volume~3, pages 2396--2401, 1998.
	
	\bibitem[Biau(2012)]{biau12a}
	G\'erard Biau.
	\newblock Analysis of a random forests model.
	\newblock \emph{The Journal of Machine Learning Research}, 13:\penalty0
	1063--1095, 2012.
	
	\bibitem[Breiman(2000)]{breiman00a}
	Leo Breiman.
	\newblock Some infinite theory for predictor ensembles.
	\newblock \emph{University of California at Berkeley Papers}, 2000.
	
	\bibitem[Breiman et~al.(1984)Breiman, Friedman, Olshen, and
	Stone]{breiman1984classification}
	Leo Breiman, Jerome~H. Friedman, Richard~A. Olshen, and Charles~J. Stone.
	\newblock \emph{Classification and regression trees}.
	\newblock Wadsworth Statistics/Probability Series. Wadsworth Advanced Books and
	Software, Belmont, CA, 1984.
	
	\bibitem[Brodley and Utgoff(1992)]{brodley1992multivariate}
	Carla~E. Brodley and Paul~E. Utgoff.
	\newblock \emph{Multivariate versus univariate decision trees}.
	\newblock University of Massachusetts, Department of Computer and Information
	Science Amherst, MA, 1992.
	
	\bibitem[Chang et~al.(2010)Chang, Guo, Lin, and Lu]{chang10a}
	Fu~Chang, Chien-Yang Guo, Xiao-Rong Lin, and Chi-Jen Lu.
	\newblock Tree decomposition for large-scale {SVM} problems.
	\newblock \emph{The Journal of Machine Learning Research}, 11:\penalty0
	2935--2972, 2010.
	
	\bibitem[Collobert and Bengio(2001)]{collobert2001svmtorch}
	Ronan Collobert and Samy Bengio.
	\newblock {SVM}torch: Support vector machines for large-scale regression
	problems.
	\newblock \emph{The Journal of machine learning research}, 1\penalty0
	(Feb):\penalty0 143--160, 2001.
	
	\bibitem[Devroye(1986)]{Devroye86a}
	Luc Devroye.
	\newblock A note on the height of binary search trees.
	\newblock \emph{Journal of the Association for Computing Machinery},
	33\penalty0 (3):\penalty0 489--498, 1986.
	
	\bibitem[Espinoza et~al.(2006)Espinoza, Suykens, and De~Moor]{EsSuMo06a}
	Marcelo Espinoza, Johan~A.K. Suykens, and Bart De~Moor.
	\newblock Fixed-size least squares support vector machines: a large scale
	application in electrical load forecasting.
	\newblock \emph{Computational Management Science}, 3\penalty0 (2):\penalty0
	113--129, 2006.
	
	\bibitem[Guo et~al.(2018)Guo, Chen, Rho, Ji, Jiang, Ji, and
	Kung]{guo2018efficient}
	Qi~Guo, Bo-Wei Chen, Seungmin Rho, Wen Ji, Feng Jiang, Xianyang Ji, and
	Sun-Yuan Kung.
	\newblock Efficient divide-and-conquer classification based on parallel
	feature-space decomposition for distributed systems.
	\newblock \emph{IEEE Systems Journal}, 12\penalty0 (2):\penalty0 1492--1498,
	2018.
	
	\bibitem[Guo et~al.(2017)Guo, Shi, and Wu]{guo17a}
	Zheng-Chu Guo, Lei Shi, and Qiang Wu.
	\newblock Learning theory of distributed regression with bias corrected
	regularization kernel network.
	\newblock \emph{The Journal of Machine Learning Research}, 18\penalty0
	(118):\penalty0 1--25, 2017.
	
	\bibitem[Hable(2013)]{hable13a}
	Robert Hable.
	\newblock Universal consistency of localized versions of regularized kernel
	methods.
	\newblock \emph{The Journal of Machine Learning Research}, 14:\penalty0
	153--186, 2013.
	
	\bibitem[Hsieh et~al.(2014)Hsieh, Si, and Dhillon]{hsieh2014divide}
	Cho-Jui Hsieh, Si~Si, and Inderjit Dhillon.
	\newblock A divide-and-conquer solver for kernel support vector machines.
	\newblock In \emph{International Conference on Machine Learning}, pages
	566--574, 2014.
	
	\bibitem[Kosorok(2008)]{Kosorok08a}
	Michael~R. Kosorok.
	\newblock \emph{{I}ntroduction to {E}mpirical {P}rocesses and {S}emiparametric
		{I}nference}.
	\newblock Springer Series in Statistics. Springer, New York, 2008.
	
	\bibitem[Lin et~al.(2017)Lin, Guo, and Zhou]{lin17a}
	Shao-Bo Lin, Xin Guo, and Ding-Xuan Zhou.
	\newblock Distributed learning with regularized least squares.
	\newblock \emph{The Journal of Machine Learning Research}, 18\penalty0
	(92):\penalty0 1--31, 2017.
	
	\bibitem[Meister and Steinwart(2016)]{meister16a}
	Mona Meister and Ingo Steinwart.
	\newblock Optimal learning rates for localized {SVM}s.
	\newblock \emph{The Journal of Machine Learning Research}, 17\penalty0
	(194):\penalty0 1--44, 2016.
	
	\bibitem[Murthy et~al.(1994)Murthy, Kasif, and Salzberg]{murthy1994system}
	Sreerama~K. Murthy, Simon Kasif, and Steven Salzberg.
	\newblock A system for induction of oblique decision trees.
	\newblock \emph{The Journal of Artificial Intelligence Research}, 2:\penalty0
	1--32, 1994.
	
	\bibitem[Park and Apley(2018)]{park2018patchwork}
	Chiwoo Park and Daniel Apley.
	\newblock Patchwork kriging for large-scale {G}aussian process regression.
	\newblock \emph{The Journal of Machine Learning Research}, 19\penalty0 (7),
	2018.
	
	\bibitem[Park and Huang(2016)]{park16a}
	Chiwoo Park and Jianhua~Z. Huang.
	\newblock Efficient computation of {G}aussian process regression for large
	spatial data sets by patching local {G}aussian processes.
	\newblock \emph{The Journal of Machine Learning Research}, 17\penalty0
	(174):\penalty0 1--29, 2016.
	
	\bibitem[Park et~al.(2011)Park, Huang, and Ding]{park11a}
	Chiwoo Park, Jianhua~Z. Huang, and Yu~Ding.
	\newblock Domain decomposition approach for fast {G}aussian process regression
	of large spatial data sets.
	\newblock \emph{The Journal of Machine Learning Research}, 12\penalty0
	(May):\penalty0 1697--1728, 2011.
	
	\bibitem[Quinlan(1986)]{quinlan1986induction}
	John~Ross Quinlan.
	\newblock Induction of decision trees.
	\newblock \emph{Machine Learning}, 1\penalty0 (1):\penalty0 81--106, 1986.
	
	\bibitem[Quinlan(1993)]{quinlan1993c4.5}
	John~Ross Quinlan.
	\newblock \emph{C4.5: Programs for Machine Learning}.
	\newblock 1993.
	
	\bibitem[Raskutti and Mahoney(2016)]{raskutti2016statistical}
	Garvesh Raskutti and Michael~W. Mahoney.
	\newblock A statistical perspective on randomized sketching for ordinary
	least-squares.
	\newblock \emph{The Journal of Machine Learning Research}, 17\penalty0
	(1):\penalty0 7508--7538, 2016.
	
	\bibitem[Scornet et~al.(2015)Scornet, Biau, and Vert]{scornet15a}
	Erwan Scornet, G\'erard Biau, and Jean-Philippe Vert.
	\newblock Consistency of random forests.
	\newblock \emph{The Annals of Statistics}, 43\penalty0 (4):\penalty0
	1716--1741, 2015.
	
	\bibitem[Steinwart and Christmann(2008)]{StCh08}
	Ingo Steinwart and Andreas Christmann.
	\newblock \emph{Support Vector Machines}.
	\newblock Information Science and Statistics. Springer, New York, 2008.
	
	\bibitem[Suykens et~al.(2002)Suykens, Van~Gestel, De~Brabanter, De~Moor, and
	Vandewalle]{suykens2002least}
	Johan~A.K. Suykens, Tony Van~Gestel, Jos De~Brabanter, Bart De~Moor, and Joos
	Vandewalle.
	\newblock \emph{Least Squares Support Vector Machines}.
	\newblock World Scientific, Singapore, 2002.
	
	\bibitem[Thomann et~al.(2017)Thomann, Blaschzyk, Meister, and
	Steinwart]{thomann2017spatial}
	Philipp Thomann, Ingrid Blaschzyk, Mona Meister, and Ingo Steinwart.
	\newblock Spatial decompositions for large scale {SVM}s.
	\newblock In \emph{Artificial Intelligence and Statistics}, pages 1329--1337,
	2017.
	
	\bibitem[Utgo and Brodley(1991)]{utgo1991linear}
	Paul~E. Utgo and Carla~E. Brodley.
	\newblock Linear machine decision trees.
	\newblock Technical report, Tech. Rep. COINS 91-10, University of
	Massachusetts, Amherst, MA, USA, 1991.
	
	\bibitem[Valiant(1984)]{valiant84a}
	Leslie~G. Valiant.
	\newblock A theory of the learnable.
	\newblock \emph{Communications of the ACM}, 27\penalty0 (11):\penalty0
	1134--1142, 1984.
	
	\bibitem[Vapnik and Chervonenkis(1971)]{vapnik71a}
	Vladimir~N. Vapnik and Alexey~Ya Chervonenkis.
	\newblock On the uniform convergence of relative frequencies of events to their
	probabilities.
	\newblock \emph{Theory of Probability and its Applications}, 16\penalty0
	(2):\penalty0 264, 1971.
	
	\bibitem[Wen et~al.(2018)Wen, Shi, Li, He, and Chen]{wen2018thundersvm}
	Zeyi Wen, Jiashuai Shi, Qinbin Li, Bingsheng He, and Jian Chen.
	\newblock Thunder{SVM}: {A} fast {SVM} library on {GPU}s and {CPU}s.
	\newblock \emph{The Journal of Machine Learning Research}, 19\penalty0 (21),
	2018.
	
	\bibitem[Wu et~al.(1999)Wu, Bennett, Cristianini, and Shawe-Taylor]{wu99a}
	Donghui Wu, Kristin~P. Bennett, Nello Cristianini, and John Shawe-Taylor.
	\newblock Large margin trees for induction and transduction.
	\newblock In \emph{International Conference on Machine Learning}, pages
	474--483, 1999.
	
	\bibitem[Zhang et~al.(2013)Zhang, Duchi, and Wainwright]{zhang2013divide}
	Yuchen Zhang, John Duchi, and Martin Wainwright.
	\newblock Divide and conquer kernel ridge regression.
	\newblock In \emph{Conference on Learning Theory}, pages 592--617, 2013.
	
	\bibitem[Zhang et~al.(2015)Zhang, Duchi, and Wainwright]{zhang15a}
	Yuchen Zhang, John Duchi, and Martin Wainwright.
	\newblock Divide and conquer kernel ridge regression: A distributed algorithm
	with minimax optimal rates.
	\newblock \emph{The Journal of Machine Learning Research}, 16\penalty0
	(1):\penalty0 3299--3340, 2015.
	
\end{thebibliography}
\end{document}